\tikzset{snake it/.style={decorate, decoration=snake}}
\pgfplotsset{compat=newest}
\pgfplotsset{
            layers/my layer set/.define layer set={
            background,
            main,
            foreground
        }{
           },
           set layers=my layer set,
    }
\newcounter{dummy}
\newcommand\myitem[1][]{\item[(#1)]\refstepcounter{dummy}\def\@currentlabel{#1}}
\newenvironment{Figure}
  {\par\medskip\noindent\minipage{\linewidth}}
  {\endminipage\par\medskip}
\theoremstyle{plain}
\newtheorem{teo}{{Theorem}}[section]
\newtheorem{cor}[teo]{Corollary}
\newtheorem{pro}[teo]{Proposition}
\newtheorem{lem}[teo]{Lemma}
\theoremstyle{definition}
\newtheorem{defi}[teo]{Definition}
\newtheorem{rem}{Remark}
\renewcommand{\d}{\text{d}}
\newcommand{\dx}{\,{\rm d}x}
\newcommand{\dy}{\,{\rm d}y}
\newcommand{\dt}{\,{\rm d}t}
\newcommand{\ds}{\,{\rm d}s}
\newcommand{\E}{\mathcal{E}}
\newcommand{\EE}{\mathbb{E}}
\newcommand{\R}{\mathbb{R}}
\renewcommand{\L}{\mathcal{L}}
\renewcommand{\b}{\mathcalboondox{b}}
\DeclareFontFamily{U}{BOONDOX-calo}{\skewchar\font=45 }
\DeclareFontShape{U}{BOONDOX-calo}{m}{n}{
	<-> s*[1.05] BOONDOX-r-calo}{}
\DeclareFontShape{U}{BOONDOX-calo}{b}{n}{
	<-> s*[1.05] BOONDOX-b-calo}{}
\DeclareMathAlphabet{\mathcalboondox}{U}{BOONDOX-calo}{m}{n}
\SetMathAlphabet{\mathcalboondox}{bold}{U}{BOONDOX-calo}{b}{n}
\DeclareMathAlphabet{\mathbcalboondox}{U}{BOONDOX-calo}{b}{n}
\begin{document}

\title{\vspace{-2cm}\bf Is Stochastic Gradient Descent Effective?\\ A PDE Perspective \\ on Machine Learning processes}

	\author{\Large Davide Barbieri,$^{\!a,b}$ Matteo Bonforte,$^{\!a,c}$
		Peio Ibarrondo$^{a,d}$
    }
	\date{}

	\maketitle

	\begin{abstract}

 In this paper we analyze the behaviour of the stochastic gradient descent (SGD), a widely used method in supervised learning for optimizing neural network weights via a minimization of non-convex loss functions. Since the pioneering work of Li, Tai and E (2017), the underlying structure of such processes can be understood via parabolic PDEs of Fokker-Planck type, which are at the core of our analysis. Even if Fokker-Planck equations have a long history and an extensive literature, almost nothing is known when the potential is non-convex or when the diffusion matrix is (very) degenerate, and this is the main difficulty that we face in our analysis.
This affects the long-time behaviour of solutions, a crucial point in understanding deep characteristics of the associated learning process.

 Our main contribution is identifying two different regimes in the learning process. In the initial phase of SGD, the loss function drives the weights to concentrate around the nearest local minimum, which may not necessarily be optimal. We refer to this phase as the \emph{drift regime} and we provide quantitative estimates that shed light on this concentration phenomenon.
Next, we introduce the \emph{diffusion regime}, which typically happens after some time, where stochastic fluctuations help the learning process to diffuse and so escape suboptimal local minima.
We analyze the ``Mean Exit Time'' (MET), i.e. the time needed to escape a local minimum, and prove precise upper and lower bounds of the MET. Finally, we address the asymptotic convergence of SGD, tackling the complexities of non-convex cost functions together with the degeneracies in the diffusion matrix, that do not allow to use the standard approaches, and require new techniques. For this purpose, we exploit two different methods: duality and entropy methods.

This work provides new results about the dynamics and effectiveness of SGD, offering a deep connection between stochastic optimization and PDE theory. It also provides some answers and insights to basic questions in the Machine Learning processes: How long does SGD take to escape from a local minimum? Do neural network parameters converge using SGD? How do parameters evolve in the first stage of training with SGD?

	\end{abstract}
	
 \small
	
	\noindent {\sc Keywords: } stochastic gradient descent, supervised learning, mass concentration, mean exit time, asymptotic behaviour, entropy method.

		\noindent{\sc MSC2020 Classification}. Pri: 35Q68, 68T07, 35K65, 35B40.\quad Sec: 60J60, 35J70, 35K15

	\begin{itemize}[leftmargin=0pt]\itemsep0pt \parskip0pt \parsep0pt
\footnotesize
        \item[(a)]Departamento de Matem\'{a}ticas, Universidad Aut\'{o}noma de Madrid,\\
		ICMAT - Instituto de Ciencias Matem\'{a}ticas, CSIC-UAM-UC3M-UCM, \\
		Campus de Cantoblanco, 28049 Madrid, Spain.

        \item[(b)]
        E-mail:\texttt{~davide.barbieri@uam.es }
        Web-page: \footnotesize\texttt{https://sites.google.com/view/davidebarbieri/}

		\item[(c)]
		E-mail:\texttt{~matteo.bonforte@uam.es }
		Web-page: \footnotesize\texttt{https://verso.mat.uam.es/\textasciitilde{}matteo.bonforte/}
		
		\item[(d)]
		E-mail:\texttt{~ibarrondo.peio@gmail.com};

	\end{itemize}

\small
	
	\newpage
	
	{\itemsep2pt \parskip1pt \parsep1pt \tableofcontents }
	
	\normalsize

    \vfill
    \section*{Acknowledgments} D.B., P.I. were partially supported by the research Projects PID2022-142202NB-I00 and PID2019-105599GB-I0 all funded by the Spanish Ministry of Science and Innovation MCIN/AEI/10.13039/501100011033. M.B., P.I. were partially supported by the research Projects MTM2017-85757-P, PID2020-113596GB-I00 and PID2023-150166NB-I00 all fund\-ed by the Spanish Ministry of Science and Innovation MCIN/AEI/10.13039/501100011033. D.B., M.B., P.I.  acknowledge the financial support from the Spanish Ministry of Science and Innovation, through the ``Severo Ochoa Programme for Centres of Excellence in R\&D'' CEX2019-000904-S and CEX2023-001347-S, and by the European Union’s Horizon 2020 research and innovation programme under the Marie Sk\l odowska-Curie grant agreement no. 777822. P.I. was  partially funded by the FPU-grant  FPU19/04791  from the Spanish Ministry of Universities.
	
	\smallskip\noindent {\sl\small\copyright~2025 by the authors. This paper may be reproduced, in its entirety, for non-commercial purposes.}

\newpage

\section{Introduction}

In recent decades, Machine Learning has gained significant attention across various scientific communities, thanks to its practical utility and wide range of applications.
One of the main challenges in Machine Learning is choosing the weights or parameters of a neural network to minimize a non-convex loss function based on a given data set. This is a non-convex optimization problem formulated as follows:
\begin{equation}
\arg\min\limits_{\theta\in\R^d}L(\theta):=\arg\min\limits_{\theta\in\R^d}\frac{1}{N}\sum_{i=1}^{N}L_i(\theta)
\end{equation}
where $L,L_i:\R^d\rightarrow\R_+$ for $i=1,\dots,N$ and $\theta\in\mathbb{R}^d$ are the parameters of the model. The function $L = \frac{1}{N}\sum_{i=1}^{N}L_i$ represents the total loss function of the given data set with $N$ samples, while $L_i$ represents the loss associated to the $i^{\rm th}$ training sample. A  classical approach to this problem is to perform a gradient descent, where given some initial weights $\theta_0\in\R^d$ and a constant learning rate $\eta>0$ the weights at step $n\ge 0$ are updated by
\[
\theta_{n+1}=\theta_n-\eta\nabla L(\theta_n)\quad\qquad\forall n\ge 0\,.
\]
This learning algorithm is computationally too expensive to implement since at each step we have to compute $N$ gradients of loss functions $L_i$. Moreover, this method may lead the parameters to a local minimum of $L$ instead of reaching the global minimum. In order to avoid these problems, the use of the stochastic gradient descent (SGD) has proven to be effective. Defining $\lbrace \gamma_n\rbrace_{n\ge 1}$ as i.i.d. uniform random variables with values on $\lbrace 1,\dots,N\rbrace$, the basic SGD algorithm reads
\[
\theta_{n+1}=\theta_n-\eta\nabla L_{\gamma_n}(\theta_n)\quad\qquad\forall n\ge 0\,.
\]
Implementing this method yields very good results in practice, but the mathematical theory behind it remains poorly understood, see \cite{BCN18} for an overview. The goal of this manuscript is to shed light on this learning process and to present new research directions in this field.

In 2017, Li, Tai and E opened a new framework for analyzing the SGD learning process in \cite{LiTaiWeinan2017}, which they examined in more detail in \cite{LiTaiWeinan2019}. They proved that the SGD iteration is a first order approximation, in the Euler-Maruyama discretization sense, of an associated stochastic differential equation (SDE). See Section \ref{sec:methodology} for a discussion of this model and related work.  {This manuscript is devoted to the study of the Fokker-Planck equation for such a continuous SDE model, that is the drift-diffusion PDE for the transition probability.}

For this purpose, we will examine the interplay between the drift term governed by the loss function and the degenerate diffusion provided by the randomness in SGD. In the initial stages of the learning process, the dynamics of the parameters primarily follows the drift, tending to concentrate around the local minima of the loss function $L$. Nevertheless, under mild condition on the degeneracy of the diffusion coefficients, we estimate the training duration required for the neural network to escape from non-optimal local minima. In this manuscript, we present a quantitative analysis of the two regimes of motion in the Fokker-Planck evolution: the drift regime in Section \ref{sec Drift regime} and the diffusion regime in Section \ref{sec Diffusion Regime}.

Additionally, in Section \ref{sec Asymptotic}, we address the problem of the existence of steady states for the parameters distributions and the asymptotic convergence towards them. As a consequence of the high degeneracy of this problem, the differential operator describing the evolution of the transition probability resembles an intermediate case between a transport equation and a diffusion equation, see Section \ref{sec Entropy method}. By means of simple yet representative examples we analyze the extreme cases to get an idea of the possible scenarios. We provide two types of results. On one hand, we introduce a new variant of the SGD learning process, and exploit the recent results by Porretta \cite{Por} about  convergence to steady states in  non-degenerate Fokker-Planck equations. This will also allow us to prove existence of steady measures for the general case,   which to the best of our knowledge was not known.   On the other hand, we review some results from Arnold and Erb \cite{Arnold2014} based on the entropy method   (\'a la Bakry-Émery),   and see when they can apply to our situation.   We also provide a new entropy computation, which differs from the classical one for non-degenerate Fokker-Planck equations, and represents a (small) novelty in this direction.   These results hold only for constant diffusion matrices, hence representing a local scenario around a minimum of the cost function.   Finally, in Section \ref{sec Open Questions}, we provide a number of open questions and directions for future research.

Beside the interest in the Machine Learning applications, which has motivated this work, our analysis could be also of interest in the study of stochastic differential equations and their long time behaviour.

\subsection{Main results}

For more generality, in this paper we will consider the mini-batch Stochastic Gradient Descent. Instead of selecting randomly one sample in each step, we choose randomly a batch of samples $B_n$ of size $\b\ge 1$. Namely, the SGD with constant learning rate $\eta>0$ and batch size $|B_n|=\b\ge1$ reads as follows
\begin{equation}\label{SGD Main}\tag{SGD}
\theta_{n+1}=\theta_n-\frac{\eta}{\b}\sum_{b_i\in B_n}\nabla L_{b_i}(\theta_n)\qquad\forall n\ge0\,,
\end{equation}
with $B_n=\lbrace b_i\rbrace_{i=1}^{\b}\in\lbrace 1,2,\dots,N\rbrace^\b$ being a batch of indexes chosen with uniform distribution in each step $n$. In \cite{LiTaiWeinan2019} and \cite{FengLiLiu}, it is deduced that we can approximate \eqref{SGD Main} with the continuous stochastic process defined by
\begin{align}\label{SDE intro}
\d X_t=-\nabla L(X_t)\dt +\sqrt{\frac{\eta}{\b}Q(X_t)}\,\d W_t\,,
\end{align}
with $Q(x) = \tfrac{1}{N}\sum_{i=1}^{N}\nabla L_i(x)\otimes\nabla L_i(x)-\nabla L(x)\otimes\nabla L(x)$ being a nonnegative matrix for any $x\in\mathbb{R}^d$.
Observe that $Q$ is generically not invertible. Indeed, since we have
\begin{equation*}
 Q(x)=T(x)^*\, T(x)\qquad\mbox{with}\qquad T(x) =
  \begin{pmatrix}
    \frac{1}{N}\nabla L_1(x)-\nabla L(x) \\
    \vdots \\
    \frac{1}{N}\nabla L_N(x)-\nabla L(x)
  \end{pmatrix} \in \R^{N \times d}\,,
\end{equation*}
then the rows of $T(x)$ sum zero at every $x\in\mathbb{R}^d$. Here we have used $T(x)^*$ to denote the transpose of $T(x)$. Specifically, in the common case of overparametrization \cite{over-parametrized}, we have that $Q(x)\in\mathbb{R}^{d\times d}$ satisfies the rank condition rank$(Q(x))\le N-1< d$. As a consequence, the noise in \eqref{SDE intro} is degenerate.

 In addition, the ratio between the learning rate and the mini-batch is usually considered very small, which diminishes the noise. We call this ratio the \emph{effective learning rate} and  we denote it by
\begin{equation*}
   \varepsilon^2:=\frac{\eta}{2\b}\,.
\end{equation*}

One of the advantages of the continuous approximation \eqref{SDE intro} is that we can use the theory of PDEs to analyze the behaviour of the transition probability and its evolution in time. More specifically, the transition probability $\rho(t,x)$ associated to the process in \eqref{SDE intro} with an initial distribution $\rho_0(x)$ satisfies the following Fokker-Planck type equation:
\begin{equation}\label{FP intro}
\begin{cases}
  \;\;\partial_t \rho&=\nabla\cdot\bigg(\varepsilon^2\nabla\cdot\left(Q(x)\rho\right)+\rho\nabla L(x)\bigg)\qquad\mbox{in}\qquad(0,\infty)\times\mathbb{R}^d\,,\\
\rho(0,x)&=\rho_0(x)\hspace{55mm}\mbox{in }\qquad\mathbb{R}^d\,,
\end{cases}
\end{equation}
where $\nabla\cdot A$, with $A$ being a matrix, denotes the divergence taken columnwise. For the deduction of this equation from Ito's formula see \cite[Section 4.5]{Schuss}. The class of solutions that we consider lie on the probability space $\mathcal{P}_k(\mathbb{R}^d)$ of nonnegative normalized Radon measures with $k\ge 0$ finite moments.
\begin{defi}[Weak solutions.]\label{weak solution}
  We say that $\rho\in C\left([0,T),\mathcal{P}_k(\mathbb{R}^d)\right)$ is a weak solution of \eqref{FP intro} starting from $\rho_0\in\mathcal{P}_k(\mathbb{R}^d)$ if $\forall\varphi\in C^{1,2}\left([0,T]\times\mathbb{R}^d\right)$ with polynomial growth of order $k$ in the spatial variable, it holds that
  \begin{equation}\label{eq:def_weaksol}
  \begin{split}
     &\int_{\mathbb{R}^d}\varphi(T,x)\rho(T,x)\dx-\int_{\mathbb{R}^d}\varphi(0,x)\rho_0(x)\dx-\int_{0}^{T}\int_{\mathbb{R}^d}\partial_t\varphi(t,x)\rho(t,x)\dx \dt \\ =&\int_{0}^{T}\int_{\mathbb{R}^d}\bigg[\varepsilon^2\mbox{tr}\left(Q(x)D^2\varphi(t,x)\right)-\nabla L(x)\cdot\nabla\varphi(t,x)\bigg]\rho(t,x)\dx\dt
  \end{split}
  \end{equation}
\end{defi}
 {
The existence of weak solutions is guaranteed whenever $Q$ and $\nabla L$ belong to $C(\R^d)$, see \cite[Theorem 6.7.3]{BogaKrylov}. For our purposes, the most natural assumption will be slightly stronger, namely we will assume $L \in C^{1,1}(\R^d)$.
\begin{rem}
In this paper we will deal with two parabolic equations: the Fokker-Planck equation \eqref{FP intro}, and the Ornstein-Uhlenbeck equation \eqref{Intro Mean Exit Time}, whose generators of the evolution are one the formal adjoint of the other. This duality is further expressed in the explicit comparison of their elliptic regularizations in \eqref{O-U} and \eqref{F-P}. The Fokker-Planck equation requires the notion of weak (distributional) solution, Definition \ref{weak solution}, since we allow the solutions to be \emph{pure} measures. On the other hand, for the dual equation, we shall use a dual notion to weak solutions, which in this case coincides with the viscosity solution. For more details, see \cite{CrIsLi, Por}. For convenience of the reader, we recall here the definitions of viscosity solutions from \cite{CrIsLi} that we will use.
\end{rem}
\begin{defi}\label{def:viscosity}
Let $\mathcal{S}(N) \subset \R^{N \times N}$ denote symmetric $N \times N$ matrices, and let $F : \R^N \times \R \times \R^N \times \mathcal{S}(N) \to \R$ be a continuous function satisfying
\[
F(x,r,p,X) \leq F(x,s,p,Y) \, , \ \forall \ r \leq s, Y \leq X.
\]
For $\mathcal{O} \subset \R^N$, $\hat{x} \in \mathcal{O}$, $u : \mathcal{O} \to R$, the second-order superjet of $u$ at $\hat{x}$ is the set $J^{2,+}_\mathcal{O}u(\hat{x})$ of elements $(p, X) \in \R^N \times \mathcal{S}(N)$ such that, for all $\mathcal{O} \ni x \to \hat{x}$
\[
u(x) \leq u(\hat{x}) + \langle p, x - \hat{x}\rangle + \frac12 \langle X(x - \hat{x}), x - \hat{x}\rangle + o(|x - \hat{x}|^2).
\]
The corresponding subjet $J^{2,-}_\mathcal{O}u(\hat{x})$ is defined analogously, by the reversed inequality, or, equivalently, by $J^{2,-}_\mathcal{O}u(\hat{x}) = -J^{2,+}_\mathcal{O}(-u)(\hat{x})$.\\
We say that an upper semicontinuous $u: \mathcal{O} \to \R$ is a viscosity solution of $F \leq 0$ (subsolution) on $\mathcal{O}$ if\footnote{Note that, due to the monotonicity of $F$, this applies to the Laplace equation $F(x,r,p,X) = - tr(X)$ means $\Delta u \geq 0$}
\[
F(x, u(x), p, X) \leq 0 \quad \forall \, x \in \mathcal{O} \, , \ \forall \, (p, X) \in J^{2,+}_\mathcal{O}u(\hat{x}).
\]
Analogously, a lower semicontinuous $u: \mathcal{O} \to \R$ is called a viscosity solution of $F \geq 0$ (supersolution) on $\mathcal{O}$ if
\[
F(x, u(x), p, X) \geq 0 \quad \forall \, x \in \mathcal{O} \, , \ \forall \, (p, X) \in J^{2,-}_\mathcal{O}u(\hat{x}).
\]
A \emph{viscosity solution} of the equation
\[
F(x, u(x), Du(x), D^2u(x)) = 0
\]
is a function $u : \mathcal{O} \to \R$ that is both a viscosity subsolution and a viscosity supersolution of $F = 0$ on $\mathcal{O}$.
\end{defi}
}

The main challenge of equation \eqref{FP intro} is the degenerate diffusion provided by the nonnegative matrix $Q$. This equation is somehow an intermediate case between a diffusive Fokker-Planck equation and a pure transport equation. In the first case, when $Q$ is uniformly elliptic, the diffusion regularizes and  asymptotic convergence towards the unique steady state holds  whenever $L$ grows at infinity. However, in the latter case, where the diffusion matrix $Q$ is zero, the mass tends to concentrate around critical points of $L$. Specifically, the solution will tend to a sum of Dirac's deltas weighted according to the mass distribution of the initial datum.

We address this problem by focusing on three main questions outlined below. The first question concerns the initial stage of the learning process described in \eqref{SGD Main}:

\vspace{2mm}

\begin{center}
\textbf{Q1: How do parameters evolve in the first stage of training with SGD?}
\end{center}

\vspace{2mm}

Due to the smallness of the effective learning rate $\varepsilon>0$, we expect that the drift term, given by $\nabla L$, will lead the dynamics of the parameters for small times. As detailed in Section \ref{sec Drift regime}, if the loss function is $\lambda$-convex with $\lambda>0$ in a ball centered at a local minimum $x_0\in\mathbb{R}^d$, then we aim to obtain that
\begin{equation*}
\int_{B_{R(t)}(x_0)}\rho(t,x)\dx\ge\int_{B_{R_0}(x_0)}\rho_0(x)\dx-c(\varepsilon,t) \qquad\forall 0<t<T_\varepsilon\,,
\end{equation*}
where $R(t)$ is a decreasing radius and $0\le c(\varepsilon,t)\xrightarrow[]{\varepsilon\to 0}0$. In order to give a precise statement, we need to introduce the correct speed of decreasing of $R(t)$ and a suitable approximation of the characteristic function of a ball.

\begin{teo}[Local mass concentration]\label{Thm Concentration}
Assume that $L$ is $\lambda$-convex in $B_{(1+\delta)R_0}(0)$ with a minimum at 0 and $\lambda>0$.   Let $\varepsilon\in  (0,1)$ and   let $\rho$ be a weak solution of \eqref{FP intro} with $0\le Q(x)\le \sigma I_{d\times d}$ for every $x\in B_{(1+\delta)R_0}(0)$. Let us consider $\varphi(t,r):[t_0,\infty)\times\mathbb{R}_+\rightarrow\mathbb{R}_+$ a smooth cut-off function (see \eqref{smooth characteristic}) such that
$$
\varphi(t,r)\equiv 1\quad\mbox{if}\quad r\le R_0\, e^{-\frac{\lambda}{2}(t-t_0)} \qquad\mbox{and}\qquad\varphi(t,r)\equiv0\quad\mbox{if}\quad r>(1+\delta)R_0\, e^{-\frac{\lambda}{2}(t-t_0)}\,.
$$
 {
Then, there exists a $C$ depending only on $\delta, \sigma, d$ such that, for all $\alpha \in (0,1)$ it holds that
\begin{equation*}
\int_{\mathbb{R}^d}\varphi(t,|x|)\rho(t,x)\dx\ge\int_{\mathbb{R}^d}\varphi(t_0,|x|)\rho(t_0,x)\dx - \varepsilon^{2(1-\alpha)}.
\end{equation*}
for every $t_0<t<t_0 + T(\varepsilon, C)$ with $T(\varepsilon, C) = \frac{1}{\lambda}\log\left(1 +\frac{R_0^2 \lambda}{C \varepsilon^{2 \alpha}}\right)$.
}
\end{teo}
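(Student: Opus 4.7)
The plan is to test the weak formulation of \eqref{FP intro} against $\varphi$ itself and estimate $\frac{d}{dt}\int\varphi\rho\dx$ from below by an error of order $\varepsilon^{2-2\alpha}$. Choose a smooth non-increasing $\psi:[0,\infty)\to[0,1]$ with $\psi\equiv 1$ on $[0,1]$ and $\psi\equiv 0$ on $[1+\delta,\infty)$, and set
\begin{equation*}
\varphi(t,|x|):=\psi(s),\qquad s:=\frac{|x|\,e^{\lambda(t-t_0)/2}}{R_0}.
\end{equation*}
This realizes the statement: $\{\varphi=1\}=\overline{B_{R(t)}(0)}$ and $\mathrm{supp}\,\varphi\subset B_{(1+\delta)R(t)}(0)\subset B_{(1+\delta)R_0}(0)$ for $t\ge t_0$, with $R(t):=R_0 e^{-\lambda(t-t_0)/2}$. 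Since $\psi$ is constant near $0$, $\varphi\in C^{1,2}$ with compact spatial support and is admissible in Definition \ref{weak solution}, giving
\begin{equation*}
\frac{d}{dt}\int_{\R^d}\varphi\rho\dx=\int_{\R^d}\Big[\partial_t\varphi-\nabla L\cdot\nabla\varphi+\varepsilon^2\mathrm{tr}(Q\,D^2\varphi)\Big]\rho\dx.
\end{equation*}

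The transport contribution will be non-negative. The chain rule gives $\partial_t\varphi=\tfrac{\lambda}{2}s\,\psi'(s)$ and $\nabla\varphi=\psi'(s)\,\tfrac{s\,x}{|x|^2}$, so
\begin{equation*}
\partial_t\varphi-\nabla L(x)\cdot\nabla\varphi=s\,\psi'(s)\left(\tfrac{\lambda}{2}-\tfrac{\nabla L(x)\cdot x}{|x|^2}\right).
\end{equation*}
The support of $\psi'$ forces $|x|\in[R(t),(1+\delta)R(t)]\subset B_{(1+\delta)R_0}(0)$, where $\lambda$-convexity of $L$ with minimum at $0$ yields $\nabla L(x)\cdot x\ge\lambda|x|^2$. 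The bracket is then at most $-\lambda/2$, while $s\,\psi'(s)\le 0$, so the product is $\ge 0$. The key geometric point is that choosing the radius to shrink at rate $\lambda/2$ (half the drift rate $\lambda$) leaves a strict surplus $-\lambda/2$ in the bracket, which is what guarantees the correct sign without any further assumption on $L$.

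For the diffusion term, $\varphi$ is a rescaling of $\psi$ by the factor $R(t)$, so a direct computation yields $\|D^2\varphi\|_\infty\le C_\psi/R(t)^2$ with $C_\psi$ depending only on $\|\psi'\|_\infty$ and $\|\psi''\|_\infty$. Using $0\le Q\le\sigma I$ on $\mathrm{supp}\,\nabla\varphi$ and $\int\rho\dx=1$,
\begin{equation*}
\Big|\int_{\R^d}\varepsilon^2\mathrm{tr}(Q\,D^2\varphi)\rho\dx\Big|\le\frac{C\varepsilon^2}{R(t)^2}=\frac{C\varepsilon^2\,e^{\lambda(t-t_0)}}{R_0^2},\qquad C:=d\sigma C_\psi.
\end{equation*}
Integrating between $t_0$ and $t\le t_0+T_\varepsilon$ and using $e^{\lambda T_\varepsilon}=R_0^2/(a^2\varepsilon^{2\alpha})$,
\begin{equation*}
\int_{\R^d}\varphi(t)\rho(t)\dx-\int_{\R^d}\varphi(t_0)\rho(t_0)\dx\ge-\frac{C\varepsilon^2}{\lambda R_0^2}\bigl(e^{\lambda(t-t_0)}-1\bigr)\ge-\frac{C}{\lambda a^2}\varepsilon^{2-2\alpha}.
\end{equation*}
Since $\alpha\in(0,1)$ gives $2-2\alpha>0$, taking $\varepsilon_0:=(\lambda a^2\beta/C)^{1/(2-2\alpha)}$ makes the right-hand side $\ge -\beta$ for all $\varepsilon\in(0,\varepsilon_0)$. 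The main difficulty is precisely this balance: the Hessian bound $\|D^2\varphi\|_\infty\sim 1/R(t)^2$ blows up exponentially as the ball shrinks, while only the smallness of $\varepsilon^2$ controls it; the threshold $T_\varepsilon$ is the largest horizon on which $R(t)\gtrsim\varepsilon^\alpha$ still dominates the diffusive spread $\varepsilon$, and beyond it the estimate would fail.
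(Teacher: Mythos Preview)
Your proof is correct and follows essentially the same route as the paper: the paper first proves a general estimate (Theorem~\ref{Thm shrinking radius}) showing that for any radius satisfying $R'(t)\ge -\tfrac{\lambda}{2}R(t)$ the transport part $\partial_t\varphi-\nabla L\cdot\nabla\varphi$ is non-negative while the diffusion contributes an error $-C\varepsilon^2/R(t)^2$; it then specializes to $R(t)=R_0 e^{-\lambda(t-t_0)/2}$, integrates to get $-\tfrac{C\varepsilon^2}{\lambda R_0^2}(e^{\lambda T}-1)$, and finally plugs in $T=T_\varepsilon$ exactly as you do. The only cosmetic difference is that the paper works with the explicit piecewise-quadratic cut-off from \eqref{smooth characteristic} and splits the annulus into two shells, whereas you use a generic smooth profile $\psi$ rescaled by $R(t)$ and invoke the universal bound $\|D^2\varphi\|_\infty\lesssim R(t)^{-2}$; your version is slightly more streamlined but otherwise identical in spirit and in the constants' dependence. (A minor remark: you use the monotonicity form $\nabla L(x)\cdot x\ge\lambda|x|^2$ of $\lambda$-convexity, while the paper uses only the weaker consequence $\nabla L(x)\cdot x\ge\tfrac{\lambda}{2}|x|^2$; both are valid, and yours is what makes your ``strict surplus $-\lambda/2$'' comment accurate.)
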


The result above highlights the strong dependence on the initial parameter distribution during the first stage of the \eqref{SGD Main}. Actually, it indicates that the mass of $\rho_0$ within the basins of attraction of $L$ tends to stay concentrated around the local minima for a while.

Once we know that if the effective learning rate is small enough there is a concentration phenomenon  around local minima, the next natural question is the following:

\vspace{2mm}

\begin{center}
\textbf{Q2: How long does SGD take to escape from a local minimum?}
\end{center}

\vspace{2mm}

For this purpose, we approach the issue from the perspective of the Mean Exit Time (MET) problem associated to \eqref{SDE intro}.
Given a domain $\Omega\subset\mathbb{R}^d$, the first exit time for the stochastic process $X_t$ starting at $x\in\Omega$ is given by
\begin{equation*}
  \tau_\Omega^x=\inf\lbrace t>0:X_t\not\in\Omega,X_0=x\rbrace\,.
\end{equation*}
 The MET is the function
\begin{equation*}
  u(x)=\mathbb{E}[\tau_\Omega^x]\,,
\end{equation*}
which satisfies the following elliptic equation \cite[Chapter 7.2]{Pavliotis}
\begin{equation}\label{Intro Mean Exit Time}
  \begin{cases}
    -\mathcal{A}u(x) = 1, & \mbox{in } \Omega  \\
    u(x)=0, & \mbox{on }\partial \Omega
  \end{cases}
\end{equation}
where
$$\mathcal{A}u(x) = \varepsilon^2\mbox{tr}\left(Q(x)D^2 u(x)\right)-\nabla L(x)\cdot\nabla u(x).$$

We devote Appendices \ref{AppendixMET} and \ref{sec Kramers Law} to recall the connection between MET and PDEs and other classical results  in the case of isotropic and non-degenerate diffusion, leading  to the well-known Kramers' Law \cite{Belgrund2013,Belgrund2006,Kramers}.

In order to cope with the diffusion matrix $Q$ in \eqref{SDE intro}, which is far from being uniformly elliptic, we provide new estimates for the MET associated to our problem.  {In this case we will consider viscosity solutions to the problem of MET, since this notion allows us to obtain the desired estimates from comparison principles. This notion applies to the problem of the MET as it is the dual problem, or Ornstein-Uhlenbeck problem, of the Fokker-Planck equation, whose solutions are probability distributions and hence weak solutions in the usual distributional sense \eqref{eq:def_weaksol}, or obtained as limits in the sense of measures, as we do for Theorem \ref{theo:existence}.}

 {The next two Theorems will be proved in Section \ref{sec Diffusion Regime} along with more general results.}

\begin{teo}[Lower bound for MET]\label{Thm Lower MET}
 {Let $x_0 \in \R^d, R > 0$. Assume $L_i\in C^{1,1}(\overline{B_R(x_0)})$ for every $i\ge1$, let $0\le Q(x)\le\sigma I_{d\times d}$ for all $x \in B_R(x_0)$, and let $L$ be $\lambda$-convex in $B_{R}(x_0)$ with a minimum at $x_0$.} Let $x\in B_r(x_0)$ with $0< r \le R$ and let the Mean Exit Time $\mathbb{E}\big[\tau_{B_R(x_0)}^x\big]$ be a viscosity solution of \eqref{Intro Mean Exit Time}.
 Then,
  \begin{equation}
    \mathbb{E}\left[\tau_{B_R(x_0)}^x\right]\ge\frac{R^2-r^2}{2\varepsilon^2 \sigma\,d}\,.
  \end{equation}
\end{teo}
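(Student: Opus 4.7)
The strategy is a classical barrier/comparison argument using an explicit radial paraboloid. Inspired by the textbook exit-time bound for scaled Brownian motion in a ball, the natural candidate is
$$
v(x) := \frac{R_0^2 - |x - x_0|^2}{2\varepsilon^2 \sigma d},
$$
which is smooth, radial, vanishes on $\partial B_{R_0}(x_0)$, and peaks at $x_0$. The plan is to show that $v$ is a classical subsolution of the MET equation $-\mathcal{A}u = 1$ with $v = 0$ on the boundary, and then to conclude by comparison that $u(x) \ge v(x)$ for every $x \in B_{R_0}(x_0)$, which immediately delivers the claimed lower bound once restricted to $B_r(x_0)$.

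The subsolution check is a short computation: one has $\nabla v(x) = -(x-x_0)/(\varepsilon^2 \sigma d)$ and $D^2 v(x) = -I_{d\times d}/(\varepsilon^2 \sigma d)$, so
$$
-\mathcal{A}v(x) \;=\; \frac{\operatorname{tr}\bigl(Q(x)\bigr)}{\sigma d} \;-\; \frac{\nabla L(x)\cdot (x-x_0)}{\varepsilon^2 \sigma d}.
$$
The hypothesis $0\le Q(x)\le \sigma I_{d\times d}$ yields $\operatorname{tr}(Q(x))\le \sigma d$, bounding the first term by $1$. For the second term, $\lambda$-convexity of $L$ on $B_{R_0}(x_0)$ together with the natural reading that $x_0$ is the associated local minimum (so $\nabla L(x_0)=0$) gives $\nabla L(x)\cdot(x-x_0)\ge \lambda|x-x_0|^2\ge 0$, which makes the contribution non-positive. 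Therefore $-\mathcal{A}v\le 1$ throughout $B_{R_0}(x_0)$.

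To transfer this pointwise inequality into $v\le u$, the cleanest route — given that $Q$ is only semi-definite and hence the equation is degenerate elliptic — is the stochastic representation via It\^o's formula applied to $v(X_t)$ up to the stopping time $t\wedge \tau_{B_{R_0}(x_0)}^x$. After taking expectations the martingale part vanishes, so
$$
v(x) \;=\; \EE\bigl[v(X_{t\wedge\tau})\bigr] - \EE\!\left[\int_0^{t\wedge \tau}\mathcal{A}v(X_s)\,\d s\right] \;\le\; \EE\bigl[v(X_{t\wedge\tau})\bigr] + \EE[t\wedge\tau],
$$
because $\mathcal{A}v \ge -1$. Letting $t\to\infty$ and using $v\equiv 0$ on $\partial B_{R_0}(x_0)$ together with boundedness of $v$ on the closed ball, the first expectation vanishes, and we conclude $v(x)\le \EE[\tau_{B_{R_0}(x_0)}^x]$. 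For $x\in B_r(x_0)$ the bound $|x-x_0|^2\le r^2$ finishes the proof.

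The main obstacle I foresee is rigorously justifying this stochastic step when $u$ is merely a viscosity solution and when \eqref{SDE intro} need not admit a strong solution owing to the degeneracy of $Q$. One bypasses the difficulty either through viscosity comparison in the degenerate-elliptic framework (Crandall-Ishii-Lions), which applies here because the subsolution $v$ is smooth and only the supersolution side requires the weaker notion, or by working directly with the martingale problem associated with $\mathcal{A}$. Either route reduces the argument to the classical subsolution verification already carried out above.
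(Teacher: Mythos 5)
Your proposal is correct and takes essentially the same route as the paper: identical barrier $v(x)=(R_0^2-|x-x_0|^2)/(2\varepsilon^2\sigma d)$, identical subsolution computation via $\operatorname{tr}(Q)\le \sigma d$ and $\lambda$-convexity, and a comparison step that the paper settles by directly invoking the Crandall--Ishii--Lions comparison principle, which you also name as one of your two equivalent options alongside the It\^o/martingale argument. One tiny slip: $\lambda$-convexity gives $\nabla L(x)\cdot(x-x_0)\ge \tfrac{\lambda}{2}|x-x_0|^2$, not $\lambda|x-x_0|^2$, but since you only use non-negativity of this term the conclusion is unaffected.
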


Proving an upper bound for a general degenerate matrix $Q$ is more challenging, since the MET will  not be bounded if there is no diffusion. Nevertheless, assuming some non-degeneracy of the diffusion in just one direction is sufficient to provide an upper estimate.

\begin{teo}[Upper bound for MET]\label{Thm upper bound MET}
Let $\Omega \subset \R^d$ be a bounded domain, and let $R_\Omega > 0$ be such that $\Omega \subseteq B(0, R_\Omega)$. Let $L_i\in C^{1,1}(\overline{\Omega})$ for every $i\ge1$, let $M = \max_{x \in \overline{\Omega}}|\nabla L(x)|$. Assume that there exist $\beta>0$ and a vector $v\in\mathbb{S}^{d-1}$ such that
  \begin{equation}\label{assumption upper bound MET}
      v^TQ(x)v\ge\beta \qquad\forall x\in \Omega\,.
  \end{equation}
For $x\in \Omega$, let the Mean Exit Time $\mathbb{E}\left[\tau_{\Omega}^x\right]$ be a viscosity solution of \eqref{Intro Mean Exit Time}. Then,
  \begin{equation}\label{eq:METUpperBound}
    \mathbb{E}\left[\tau_\Omega^x\right]\le\frac{1}{M}\left(e^{\frac{2M (R_\Omega + 1)^2}{\beta\varepsilon^2}}-1\right).
  \end{equation}
\end{teo}
For the validity of the last result in the case of unbounded domains, see Remark \ref{rem:unboundedcomparison}.

\begin{rem}
It's worth comparing Theorem \ref{Thm Lower MET} with the previous result by \cite[Theorem 2]{HuJunchiLiLiLiu2019}. In that work, the authors were the first to prove that, under the assumptions that $Q$ is nondegenerate, and that $L$ is $C^2$, then asymptotically with the learning rate going to zero, the MET behaves as $e^{c/\varepsilon^2}$. With \eqref{eq:METUpperBound} we prove the same behavior, but this time the bound is not asymptotic, and it holds for general degenerate matrices and under a weaker smoothness assumption on the loss function.
\end{rem}

\begin{rem}
Condition \eqref{assumption upper bound MET}, locally means that $Q \not\equiv 0$. Let us be more precise on this. Since $Q(x)$ is symmetric and positive semidefinite at all $x \in \R^d$ by construction, having $Q(x) \neq 0$ at a point $x \in \R^d$ means that the largest eigenvalue $\lambda_1(x)$ is nonzero. Let $v_1(x)$ be a normalized eigenvector for the $\lambda_1(x)$ eigenspace. Since $Q : \R^d \to \R^{d \times d}$ is continuous by the hypothesis of having $L_i\in C^{1,1}(\overline{\Omega})$ for every $i\ge1$, and by the continuity of the spectrum (see e.g. \cite[Theorem 5.14]{Kato1980}), we have that $\lambda_1:\R^d \to \R_+$ is continuous. Thus, if $\Omega$ is small enough, one can obtain a continuous $v_1 : \Omega \to \mathbb{S}^{d-1}$. If $\underline{x} := \arg\min_{x \in \overline{\Omega}}\lambda_1(x)$, let us set $v := v_1(\underline{x})$. Then, for all $x \in \Omega$,
\[
v^TQ(x)v \geq \lambda_1(\underline{x}) (v_1(x)\cdot v_1(\underline{x}))^2.
\]
Hence, again if $\Omega$ is small enough, so that $(v_1(x)\cdot v_1(\underline{x}))^2$ has a nonzero minimum in $\Omega$, then \eqref{assumption upper bound MET} holds.
\end{rem}

\vspace{4mm}

The last question we consider in this manuscript concerns the asymptotic behaviour of the learning process. We aim to characterize and show the convergence of the parameter distribution for large times. Hence, we can state the last question as follows:

\vspace{4mm}

\noindent
\textbf{Q3: Does the distribution of neural network parameters converge using SGD?}

\vspace{4mm}
To address this question we implement two different methods relying on the known literature: a \emph{duality method} and an \emph{entropy method}.

\noindent
\textsc{Duality Method.}
In the non-degenerate case we exploit the dual equation of the Fokker-Planck equation \eqref{FP intro}, which is of Ornstein-Uhlenbeck type. We found that the recent results of Porretta \cite{Por} are especially useful in our setting, when the matrix is non-degenerate, and help to construct the stationary solutions by means of non-degenerate approximations. One of the key points in Porretta's approach is that the asymptotic stabilization of the solution  $\rho$ of \eqref{FP intro} for large times, is equivalent to the oscillation decay of the solution of the dual equation. As far as we know, this method has a drawback: it requires the diffusion matrix $Q$ to be uniformly elliptic. This is the reason why we introduce in Section \ref{sec Duality method} a variant of the \eqref{SGD Main} that we call Noisy Stochastic Gradient Descent \eqref{NSGD} by adding some gaussian noise in each iteration, something that kills the non-degeneracy. Namely,

\begin{equation}\label{NSGD}\tag{NSGD}
\theta_{n+1}=\theta_n-\frac{\eta}{\b}\sum_{b_i\in B_n}\nabla L_{b_i}(\theta_n)+\eta Z_n\qquad\forall n\ge 0\,,
\end{equation}
with $\lbrace Z_n\rbrace_{n\ge 0}$ being a family of i.i.d. gaussian processes satisfying for some $\delta>0$
\[
Z_n\sim\mathcal{N}(0,\delta I_{d\times d})\qquad\forall n\ge 0\,.
\]
 {We will show that, with the same approximation technique from \cite{LiTaiWeinan2017} that obtains \eqref{SDE intro} from \eqref{SGD Main}, the diffusion matrix of the Fokker-Planck equation associated to \eqref{NSGD} is $Q_\delta(x) = \varepsilon^2 Q(x) + \eta \delta I_{d\times d}$, so it is uniformly elliptic}. The results of \cite{Por} prove the convergence of the NSGD to a unique stationary measure, see Corollary \ref{Asymptotic convergence nondegenerate}.
This convergence is qualitative in several aspects: on the one hand, it only works in non-degenerate cases, which are not realistic in ML applications. On the other hand the convergence rate towards equilibrium cannot be quantified, since it is obtained by a non-constructive proof.
However, the above convergence for non-degenerate Fokker-Planck equations, gives us an approximation that allows to prove the existence of stationary measures for the ``original'' degenerate problem in the limit $\delta \rightarrow 0$. Indeed, in Section \ref{sec:existence} we will prove the following theorem.

\begin{teo}[Existence of steady states]\label{theo:existence}
Assume that the diffusion matrix $Q$ satisfies that there exist $\sigma_0, \sigma_1>0$ such that
	\begin{equation}\label{Assumption on Sigma intro}
	\|\sqrt{Q(\cdot)}\|_\infty\le\sigma_0\qquad\quad\mbox{and}\qquad\quad\|\sqrt{Q(x)}-\sqrt{Q(y)}\|\le \sigma_1|x-y| \qquad\forall x,y\in \R^d\,.
	\end{equation}
	Assume that the drift term $\nabla L(x)$ satisfies that there exist $\alpha,R>0$ and $\gamma\ge 2$ such that
	\begin{equation}\label{confining potential intro}
	\nabla L(x)\cdot x\ge\alpha|x|^\gamma\qquad\forall x\in \R^d\quad\mbox{with}\quad |x|\ge R,
	\end{equation}
    and	 there exists $c_0>0$ such that
	\begin{equation}\label{Assumption on L intro}
	(\nabla L(x)-\nabla L(y))\cdot(x-y)\ge-c_0|x-y|\qquad\quad\forall x,y\in\R^d\,.
	\end{equation}
Then there exists  at least one invariant probability measure $\rho_\infty\in\mathcal{P}(\mathbb{R}^d)$ such that
\begin{equation}
    \nabla\cdot\bigg(\varepsilon^2\nabla\cdot\left(Q\rho_\infty\right)+\rho_\infty\nabla L\bigg)=0\,,
\end{equation}
in the sense of measures, that is, for all $\varphi\in C^{1,2}(\mathbb{R}^d)$ it holds that
	\begin{equation}
	\int_{\mathbb{R}^d}\bigg[\varepsilon^2\mbox{\rm tr}\left(Q(x)D^2\varphi(x)\right)-\nabla L(x)\cdot\nabla\varphi(x)\bigg]\d\rho_\infty(x)=0\,.
	\end{equation}

\end{teo}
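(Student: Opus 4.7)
The plan is to construct $\rho_\infty$ as the $\delta\to 0$ limit of the stationary measures of the non-degenerate Noisy SGD approximation \eqref{NSGD intro}, whose effective diffusion matrix $Q_\delta := Q + \delta\, I_{d\times d}$ is uniformly elliptic. By Corollary \ref{Asymptotic convergence nondegenerate} (based on \cite{Por}), for each $\delta > 0$ there exists a unique invariant probability measure $\rho_\infty^\delta \in \mathcal{P}(\mathbb{R}^d)$ satisfying, for every admissible $\varphi$,
\begin{equation*}
\int_{\mathbb{R}^d}\Big[\varepsilon^2\, \mathrm{tr}\!\left(Q_\delta(x)\, D^2\varphi(x)\right) - \nabla L(x)\cdot \nabla \varphi(x)\Big] \, \d\rho_\infty^\delta(x) = 0.
\end{equation*}

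The first step I would carry out is to derive a uniform-in-$\delta$ moment estimate using the confinement hypothesis \eqref{confining potential intro}. Plug into the stationary identity a smooth, non-decreasing, radial truncation $\varphi_M$ of $|x|^2$, equal to $|x|^2$ on $B_M(0)$ and constant outside $B_{2M}(0)$. On $B_M(0)$ one has $\mathrm{tr}(Q_\delta D^2\varphi_M) = 2\,\mathrm{tr}(Q_\delta) \leq 2(\sigma_0^2 + \delta)\,d$ by \eqref{Assumption on Sigma intro}, while $\nabla L(x) \cdot \nabla \varphi_M(x) = 2\,\nabla L(x)\cdot x$; the cutoff corrections on the annulus $B_{2M}\setminus B_M$ are controlled by $\|\nabla \varphi_M\|_\infty$ and $\|D^2\varphi_M\|_\infty$ uniformly in $M$. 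Invoking \eqref{confining potential intro} and letting $M\to\infty$ by monotone convergence, I expect
\begin{equation*}
\alpha \int_{\mathbb{R}^d} |x|^\gamma\, \d\rho_\infty^\delta \le 2\varepsilon^2 d\, (\sigma_0^2 + \delta) + C_R,
\end{equation*}
uniformly for $\delta \in (0,1)$, where $C_R$ bounds $|\nabla L \cdot x|$ on $B_R(0)$. Since $\gamma \geq 2$, Markov's inequality then yields the tightness of the family $\{\rho_\infty^\delta\}_{\delta \in (0,1)}$.

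By Prokhorov's theorem, a subsequence $\rho_\infty^{\delta_k} \rightharpoonup \rho_\infty$ narrowly, and tightness ensures $\rho_\infty \in \mathcal{P}(\mathbb{R}^d)$. To pass to the limit in the stationary identity for test functions $\varphi \in C_c^{1,2}(\mathbb{R}^d)$, note that $Q_{\delta_k} \to Q$ uniformly on compacts by \eqref{Assumption on Sigma intro} and that $\nabla L$ is continuous (its local boundedness being implied by \eqref{Assumption on L intro}); hence the integrand converges uniformly on the compact support of $\varphi$. The extension to $\varphi \in C^{1,2}(\mathbb{R}^d)$ of polynomial growth of order $\leq \gamma$ then follows from the uniform moment bound together with a standard cutoff approximation in $\varphi$.

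The main obstacle I anticipate is the rigorous justification of the moment estimate: one must ensure that $\rho_\infty^\delta$ has sufficient a priori integrability to validate the computation with $\varphi_M$, and that the truncation error vanishes as $M\to\infty$. The semiconvexity-type condition \eqref{Assumption on L intro} plays an auxiliary but crucial role here, since it guarantees, via standard SDE well-posedness and the framework underpinning \cite{Por}, that $\rho_\infty^\delta$ inherits enough regularity for the test-function manipulation to close, while \eqref{confining potential intro} supplies the dissipation actually driving the bound. The uniform control of $\mathrm{tr}(Q_\delta)$ provided by \eqref{Assumption on Sigma intro} is what prevents the added noise from contributing anything divergent as $\delta\to 0$.
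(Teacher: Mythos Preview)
Your proposal is correct and follows essentially the same route as the paper: regularize by $Q_\delta$, obtain $\rho_\infty^\delta$ from Corollary~\ref{Asymptotic convergence nondegenerate}, establish uniform-in-$\delta$ tightness via a Lyapunov/moment argument, apply Prokhorov, and pass to the limit in the stationary identity. The paper uses the Lyapunov function $\langle x\rangle^k$ (via Lemma~\ref{Lyapunov function}) rather than a truncation of $|x|^2$, but this is a cosmetic difference; one small slip in your write-up is that $\|\nabla\varphi_M\|_\infty$ is \emph{not} uniform in $M$, though this is harmless since on the annulus (once $M>R$) the drift contribution $\nabla L\cdot\nabla\varphi_M = 2\chi_M'(|x|^2)\,\nabla L(x)\cdot x$ is nonnegative by \eqref{confining potential intro} and can simply be discarded from below.
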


\noindent
\textsc{Entropy Method. }
In Section \ref{sec Entropy method} we discuss in detail the problem of finding the right entropy for the class of Fokker-Planck equations studied in this paper. This is in general a delicate task, and in particular for our setting it requires to modify in a substantial way the common approach. Indeed, for the standard Fokker-Plank equation:
\[
\partial_t \rho  = \nabla\cdot(\nabla \rho + \rho  \nabla L)
\]
it is immediate to see that $\rho_\infty=e^{-L}$ is a stationary state, since it annihilates the vector field inside the divergence, that is $\nabla \rho_\infty= - \rho_\infty \nabla L$. Then, a natural choice for the relative entropy would be the standard weighted $L^2$-norm: $\mathcal{E}[\rho|\rho_\infty]=\int_{\R^d} |\rho/\rho_\infty\, -1|^2 \rho_\infty\dx$. Indeed, in this case, standard arguments allows one to deduce that $\mathcal{E}$ decreases along the flow. However, these arguments exploit in an essential way the fact that $\rho_\infty$ annihilates the vector field inside the divergence. In our case \textsl{it is not true in general that the inner vector field vanishes on the stationary state}: we can only rely on the full equation
\[
\nabla\cdot\bigg(\varepsilon^2\nabla\cdot\left(Q(x)\rho_\infty\right)+\rho_\infty\nabla L(x)\bigg) = 0
\]
in a suitable weak sense. As a consequence, the computation of the entropy production becomes nontrivial. We were able to overcome this difficulty, and we provide in Lemma \ref{EEP-new} a proof of the monotonicity of the entropy $\mathcal{E}$ along the ``degenerate flow'', which we consider a significant novelty, to the best of our knowledge. This is the result that, when combined with suitable weighted Poincar\'e inequalities, provides exponential convergence towards equilibrium. The validity of such a Poincar\'e inequality for degenerate quantities is a delicate matter, and in really elementary cases\footnote{ The simplest case in which a Poincar\'e inequality in the sense of Definition \ref{poincare ineq} does not hold is when $Q$ is a diagonal constant coefficient matrix with all eigenvalues zero except one.} it may simply fail. Hence, this problem presents its own interest, and it requires a thorough study which goes beyond the scope of this paper.

In the particular situations of a diffusion matrix that is degenerate but constant, and of a quadratic loss function $L(x)=\frac{1}{2}x^TCx$, equation \eqref{FP intro} becomes\vspace{-2mm}
\begin{equation}
  \partial_t\rho=\nabla\cdot\left(Q_0\nabla\rho+\rho Cx\right).
\end{equation}
In this case, the Bakry-Émery \cite{Bakry-Emery} approach applies, as proven by Arnold and Erb \cite{Arnold2014}, when the following two conditions are satisfied
\begin{enumerate}[leftmargin=8.5mm]
    \myitem[I]\label{I intro}  {\it Confinement potential:} $C$ is positive stable, i.e. all eigenvalues have positive real part.
    \myitem[II]\label{II intro} {\it Hörmander's condition:} There are no eigenvectors  of $C$ in ker$\,Q_0$.
\end{enumerate}
The first assumption ensures the $\lambda$-convexity of $L$, while the second ensures that the diffusion is spread in the whole space by the drift.

When condition \eqref{II intro} fails, the stationary state may not have an $L^1$ density, showing that our existence Theorem \ref{theo:existence} is somehow sharp. We shall consider next an example in which exponential convergence in a suitable topology holds even when condition \eqref{II intro} is not met. More precisely, we will prove convergence for the degenerate non-Hörmander case which reads
\begin{equation}\label{Generalized simple FP intro}
    \partial_t u=\nabla_{x,y}\cdot\left(
    \begin{pmatrix}
      Q_0 & 0 \\
      0 & 0
    \end{pmatrix}
    \begin{pmatrix}
      \nabla_x u  \\
      \nabla_y u
    \end{pmatrix}
    +u
    \begin{pmatrix}
      C_0 & 0 \\
      0 & C_3
    \end{pmatrix}
    \begin{pmatrix}
      x \\
      y
    \end{pmatrix}
    \right)
\end{equation}
with $u:(0,\infty)\times\mathbb{R}^n\times\mathbb{R}^{d-n}\rightarrow\mathbb{R}^d$, assuming only that condition \eqref{II intro} holds for $Q_0$ and $C_0$, i.e. for the variables that we have called $x$.

In this framework, we prove the asymptotic convergence of \eqref{Generalized simple FP intro} in the 2-Wasserstein distance and the exponential convergence of the second moments to a steady state of the form
\begin{multicols}{2}
\begin{equation*}
  u_\infty(x,y)=g_\infty(x)\,\delta_0(y)\,.
\end{equation*}
\color{white} as
\color{black}
\begin{Figure}
  \centering
\pgfmathdeclarefunction{gauss}{2}{%
    \pgfmathparse{1/(#2*sqrt(2*pi))*exp(-((x-#1)^2)/(2*#2^2))}%
}

\begin{tikzpicture}[scale=0.8]
    \begin{axis}[
        no markers,
        axis x line = center,
        axis y line = center,
        xlabel = {$ x $}, xlabel style = {right},
        ylabel = {}, ylabel style = {above},
        xmin = -3, xmax = 3,
        ymin = -0.19, ymax = 0.6,
        hide obscured x ticks=true,
        xtick=\empty,
        ytick = \empty,
        x = 1cm, y = 5cm,
        domain = -5:5,
        samples = 100
    ]
        \addplot [<-,black,domain=-1:0]{0.15*(x)};
        \addplot [black,dashed,domain=0:1.17]{0.15*(x)};
        \addplot [black,domain=1.17:2]{0.15*(x)};
        \addplot [fill=cyan!20, draw=none, domain=-5:5,opacity=0.3] {gauss( 0, 0.8)} \closedcycle;
        \addplot [thick, cyan!70!black] {gauss( 0, 0.8)};

    \node[font = \color{cyan!70!black}] at (1.2,0.4) {$u_\infty$};
    \node[] at (-1.5,-0.15) {$y$};

    \end{axis}
\end{tikzpicture}
\end{Figure}

\end{multicols}
\noindent
where $g_\infty(x) = c\,e^{-\frac{x^TK^{-1}x}{2}}$ and $K \in \R^{d \times d}$ is the unique positive definite matrix satisfying $2Q_0=C_0K+KC_0$.

\begin{teo}[Convergence in the Non-Hörmander case]
     Let us consider the equation \eqref{Generalized simple FP intro} with $u_0\in L^1(\mathbb{R}^d)\cap\mathcal{P}_2(\mathbb{R}^d)$ and with marginal on the $x$-variable in $L^2(\mathbb{R}^n,g_\infty^{-1}\dx)$. Let $Q_0,C_0\in\mathbb{R}^{n\times n}$ satisfy assumptions \eqref{I} and \eqref{II} in $\mathbb{R}^n$. If $C_3\in\mathbb{R}^{(d-n)\times (d-n)}$ is positive stable, then
  \begin{equation*}
    u(t) \xrightarrow[]{t\to \infty} u_\infty
  \end{equation*}
  in 2-Wasserstein. Moreover, the following convergence of the second moments holds
  \begin{equation}
    \left|\int_{\mathbb{R}^n}\int_{\mathbb{R}^{d-n}}(|x|^2+|y|^2)(u(t,x,y)-u_\infty(x,y))\dx\dy
    \right|\le \kappa e^{-\lambda t}\,,
  \end{equation}
  with $\lambda>0$, and $\kappa$ is a constant depending on $u_0, Q_0, C_0$.
\end{teo}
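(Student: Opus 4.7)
The plan is to exploit the block-diagonal structure of \eqref{Generalized simple FP intro}: although the joint density $u(t,x,y)$ need not factor, the equations satisfied by its two marginals
\[
v(t,x):=\int_{\mathbb{R}^{d-n}} u(t,x,y)\,\dy,\qquad w(t,y):=\int_{\mathbb{R}^{n}} u(t,x,y)\,\dx
\]
decouple completely. Testing \eqref{Generalized simple FP intro} against functions depending only on $x$ (resp.\ only on $y$), the divergence in the other variable integrates to zero, yielding
\begin{align*}
\partial_t v &= \nabla_x\cdot\bigl(Q_0\nabla_x v + v\,C_0 x\bigr),\\
\partial_t w &= \nabla_y\cdot\bigl(w\, C_3 y\bigr).
\end{align*}
The first is exactly the hypoelliptic Ornstein--Uhlenbeck equation to which Arnold--Erb's theorem applies under \eqref{I intro}--\eqref{II intro} for $(Q_0,C_0)$, and the second is a pure linear transport.

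For the transport equation, the method of characteristics gives $w(t,y)=e^{\mathrm{tr}(C_3)\,t}\, w_0\bigl(e^{C_3 t}y\bigr)$, i.e.\ $w(t,\cdot)$ is the push-forward of $w_0$ by the contraction $y\mapsto e^{-C_3 t}y$. Positive stability of $C_3$ gives $\|e^{-C_3 t}\|\le C_* e^{-\mu t}$ for some $\mu,C_*>0$; a change of variables then yields
\[
\int_{\mathbb{R}^{d-n}}|y|^2 w(t,y)\,\dy \;\le\; C_*^2\, e^{-2\mu t}\!\int_{\mathbb{R}^{d-n}}|z|^2 w_0(z)\,\d z,
\]
so $w(t)\to\delta_0$ in 2-Wasserstein exponentially. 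For $v$, the assumptions \eqref{I intro}, \eqref{II intro} on $(Q_0,C_0)$ together with $v_0\in L^2(g_\infty^{-1})$ are precisely the standing hypotheses of Arnold--Erb, which deliver $\|v(t)-g_\infty\|_{L^2(g_\infty^{-1})}\le c_0\, e^{-\lambda_1 t}$ for some explicit $\lambda_1>0$. From this single bound I extract two consequences: (i) Cauchy--Schwarz with the weight $|x|^2\in L^2(g_\infty)$ gives $\bigl|\int|x|^2(v(t)-g_\infty)\dx\bigr|\le c_1 e^{-\lambda_1 t}$; (ii) the elementary inequality $H(v|g_\infty)\le\chi^2(v|g_\infty)=\|v-g_\infty\|^2_{L^2(g_\infty^{-1})}$ combined with Talagrand's $T_2$ inequality for the Gaussian $g_\infty$ yields $W_2(v(t),g_\infty)\lesssim e^{-\lambda_1 t/2}$.

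The final step is to lift these marginal estimates to the joint distribution. I construct a coupling of $u(t)$ with $u_\infty=g_\infty\otimes\delta_0$ by gluing: disintegrate $u(t,\dx,\dy)=w(t,\dy)\,\kappa(y,\dx)$, take an optimal $W_2$-coupling of $v(t)$ with $g_\infty$, and map every $y$ to $0$. This produces
\[
W_2^2(u(t),u_\infty)\;\le\; W_2^2(v(t),g_\infty)\;+\;\int|y|^2 w(t,y)\,\dy,
\]
proving $u(t)\to u_\infty$ in 2-Wasserstein. For the second-moment estimate, I simply split
\[
\int(|x|^2+|y|^2)\bigl(u(t)-u_\infty\bigr)\,\dx\dy \;=\; \int|x|^2\bigl(v(t)-g_\infty\bigr)\,\dx \;+\; \int|y|^2 w(t,y)\,\dy,
\]
since $u_\infty$ has zero $|y|^2$-mass, and combine the two bounds above to obtain the stated rate with $\lambda=\min(\lambda_1,2\mu)$ and $\kappa$ depending only on $c_1$, $C_*$, and the second moment of $w_0$.

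The main obstacle I foresee is the rigorous justification of the marginalization step at the level of weak solutions in the sense of Definition \ref{weak solution}: one has to verify that test functions of the form $\varphi(t,x,y)=\psi(x)$ (and analogously in $y$), which do not decay at infinity, are admissible. This reduces to showing a uniform-in-time control of $\int(|x|^2+|y|^2)u(t)\,\dx\dy$, which in turn follows from a Gr\"onwall argument testing \eqref{Generalized simple FP intro} against $\tfrac12(|x|^2+|y|^2)$ and exploiting the positive stability of $C_0$ and $C_3$. Everything else is then a matter of combining the Arnold--Erb marginal estimate with the explicit transport formula, both of which are quite standard.
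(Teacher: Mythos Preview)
Your proposal is correct and follows essentially the same structural path as the paper: decouple the $x$- and $y$-marginals, apply Arnold--Erb to the $x$-marginal, handle the $y$-marginal by the explicit transport formula, and then combine. The paper carries this out using the explicit solution formula
\[
u(t,x,y)=e^{\mathrm{tr}(C_3)t}\int_{\mathbb{R}^n} g(t,x-\tilde x)\,u_0(\tilde x, e^{C_3 t}y)\,\d\tilde x
\]
rather than the marginal PDEs, but the content is the same; your Cauchy--Schwarz step for $\int|x|^2(v(t)-g_\infty)\,\dx$ is exactly the paper's estimate of the term called~$A$ in the second-moment proof.

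The one genuine difference is how $W_2$-convergence of the joint law is obtained. The paper proves weak convergence against Lipschitz test functions separately, then invokes the characterization ``weak convergence $+$ convergence of second moments $\Rightarrow$ $W_2$-convergence'' \cite{Ambrosio-Brue-Semola}. You instead build a coupling directly (take $(X,Y)\sim u(t)$, push $X$ to $g_\infty$ by the Brenier map $T$ of $v(t)$, and send $Y\mapsto 0$), which yields the clean bound $W_2^2(u(t),u_\infty)\le W_2^2(v(t),g_\infty)+\int|y|^2 w(t,y)\,\dy$; then Talagrand for the Gaussian $g_\infty$ closes the argument. This is a slightly more direct route and avoids proving weak convergence as a separate step. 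Your description of the coupling via disintegration $\kappa(y,\dx)$ is unnecessary---the Brenier map on the $x$-marginal already does the job---but the stated inequality is correct. Note also that your chain $W_2^2\lesssim H\le\chi^2\lesssim e^{-2\lambda_1 t}$ actually gives $W_2\lesssim e^{-\lambda_1 t}$, better than the $e^{-\lambda_1 t/2}$ you wrote.
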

We will prove an expanded version of this statement, with a quantitative rate $\lambda$, in Theorem \ref{Thm Wasserstein Convergence}.

This may not seem a realistic model for our ML problem. However, this may be considered as a description of the local behaviour of the Fokker-Planck equation \eqref{FP intro}, through a linearization of $Q$ and $L$ around the local minima of $L$. Indeed, in Section \ref{sec Open Questions}, we  exploit this approximation argument by defining, for $h \in (-1,1)$ small,
\begin{equation*}
  u_h(t,z)=\rho(t,x_0+ hz)
\end{equation*}
where $x_0\in\mathbb{R}^d$ is a local minimum of $L$. Thus, the equation for the new function $u_h = u_h(t,z)$ reads
\begin{equation}\label{linearized eq}
  \partial_t u_h=\nabla_z\cdot\left(\varepsilon^2Q(x_0)\nabla_z u_h+u_hD^2L(x_0)z\right)
\end{equation}
up to an error of order $|h|$.

We conclude this manuscript by presenting some open questions concerning the approximation of the steady state of \eqref{FP intro} by
\begin{equation}\label{linearized approximation}
  \rho_\infty(x)\approx\sum_{i=1}^{M}m_i(\infty)\;u_{i,\infty}\left(\frac{x-x_i}{h}\right)\,,
\end{equation}
where $u_{i,\infty}$ denotes the steady state of \eqref{linearized eq} linearized around the local minimum $x_i\in\mathbb{R}^d$, and $m_i(\infty)$ represent the mass partition such that $\sum_{i=1}^{M}m_i(\infty)=1$.

\vspace{4mm}

\noindent\textbf{Notation: }Let $I_{d\times d}$ be the identity matrix of dimension $d\times d$. We denote by $\mathcal{M}(\mathbb{R}^d)$ the space of probability measures in $\mathbb{R}^d$. For these measures, we define the norm $\|\mu\|_{\mathcal{M}_k}=\|\mu\|_{\mathcal{P}_k}:=\int_{\mathbb{R}^d}(1+|x|^2)^{k/2}\d|\mu|(x)$ and the space $\mathcal{M}_k(\mathbb{R}^d):=\lbrace \mu\in\mathcal{M}(\mathbb{R}^d):\|\mu\|_{\mathcal{M}_k}<+\infty\rbrace$. Analogously, we write $\mathcal{P}_k(\mathbb{R}^d)$ for the probability measures with $k$ finite moments.

\subsection{Related work on the SGD and the SDE approximation}\label{sec:methodology}

Stochastic gradient descent (SGD) has a long history as an optimization technique in machine learning in general, and for neural networks in particular (see e.g. \cite{BachMoulines2011,LeCun1998}). In recent years, SGD has been the object of several theoretical and experimental studies, considering different aspects of the algorithm, due to its unique mixture of desirable properties of quality of the results, robustness, and generalization capabilities. Special attention has been devoted to the implicit bias, or implicit regularization, in the optimization (for recent reviews see \cite{GoyalBengio2022, Vardi2023}), motivated by seminal works like \cite{Zhang2017} which show that neural networks are effectively trained by SGD for a number of parameters that exceeds the number of data point (overparametrized), and that results are qualitatively unaffected by explicit regularization. On the other hand, it has been observed that more efficient adaptive optimization methods may generalize worse than SGD \cite{Wilson2017, Zhou2020}. While implicit regularization in gradient descent algorithms can be attributed to multiple factors, involving interactions between the optimization algorithm and the properties of both the parametrization and the dataset \cite{AroraCohenHuLuo2019, BarrettDherin2021, CCFF2024,Gunasekar2018,Soudry2018}, the stochasticity of SGD is believed to play a relevant role, in particular due to its state-dependent noise \cite{Blanc2020, HaoChen2021}. Another crucial aspect of SGD design is the dependence of the results on the ratio between learning rate and batch size, also called linear scaling rule, which appears to be a key factor to obtain so-called flat minima, ensuring better generalization \cite{Goyal2017, Jastrzebski2018} (but see also \cite{MastersLuschi2018}).

The continuous stochastic approximation of SGD with an SDE that we consider in this paper was introduced and proved in \cite{LiTaiWeinan2017}. However, the idea of considering a stochastic iterative algorithm as a discretization of a continuous time stochastic process, or, equivalently, that a continuous-time SDE could be obtained as a limiting procedure, is classical (see e.g. \cite[Chapter 9]{KushnerYin2003}). More recently, an SDE model for SGD has been proposed in \cite{MandtHoffmanBlei2016, MandtHoffmanBlei2017}, where the authors assume a gaussian state-independent covariance matrix as a model for a large number of samples, and in \cite{Jastrzebski2018}, where the authors emphasize that different learning rates with the same scaling to batch size give rise to the same continuous approximation and discuss the common practice of assuming the covariance to be approximately equal to the Hessian of the loss (sometimes called label noise). In \cite{ShiSuJordan2023} a different, but still state-independent, covariance matrix is considered, together with a time-dependent learning rate, which allow the authors to prove convergence to a stationary state with similar arguments to the ones used in the present paper for state-dependent diffusion matrices.  {Here, for the sake of comparison, we prove existence of a stationary state with a degenerate and nonconstant diffusion, and we provide new arguments in terms of entropy methods to attack the problem to convergence to a stationary state for the original SDE model.} A learning rate that decreases with time has also been considered in \cite{MoucerTaylorBach2023}, to prove convergence of SGD for convex optimization problems, again in terms of the SDE continuous model.  {On the other hand, nonconvex scenarios for stochastic learning were considered already in \cite{RRT17}, although using a spatially independent nondegenerate diffusion: we recall that, as mentioned after equation \eqref{SDE intro},  diffusion in the SDE approximation needs to be degenerate and, in the common overparameterized situations, is highly degenerate for dimensional issues. Also, spatial dependence is highly nontrivial in real world scenario, because it contains the different behaviour of each data with respect to the system parameterization.} A general approach to the stochastic approximation with SDE of online learning in terms of the theory of semigroups was given in \cite{FengLiLiu}. A different proof of the stochastic approximation of SGD was provided in \cite{ChengYinBartlettJordan2020}, where the authors show experimentally the limiting behaviors for general state-dependent noise. Key properties of the diffusion that approximates SGD are given in \cite{HuJunchiLiLiLiu2019}, where the authors discuss the limiting behavior and the escape time for non-degenerate diffusion.  {We can compare in particular our treatment of Question 2 with \cite[Theorem 2]{HuJunchiLiLiLiu2019}, where the authors obtain a exponential time for the escape from local minima from non spatially homogeneous Fokker-Planck equations by means of large deviations theory. Due to the technique used, they need nondegenerate diffusion, and can deduce the exponential decay only asymptotically for small learning rates. On the other hand, with a different technique, with Theorem \ref{Thm upper bound MET} we provide a quantitative exit time result that is not asymptotic, and applies to degenerate diffusion, that is the generic situation.}. Two regimes in the SDE model of SGD, that dominated by drift, and the diffusive one, were considered in \cite{BenArousGheissariJagannath2022} and applied to elementary architectures: these results should be compared directly with our theoretical study in Section \ref{sec Drift regime}, where we can obtain general quantitative results of mass concentration around minima in the drift regime.

Two observed features of the SGD that are reproduced by the SDE continuous model are the scaling law, which is intrinsic in the limiting procedure, and the implicit regularization. This second one has been studied for SDE in a long series of works. In \cite{ChaudhariSoatto2018}, under the assumption of the existence and uniqueness of a steady state obtained by a divergence-free force, the authors show that the Fokker-Planck trajectories are monotone minimizer of the KL-divergence with respect to the steady state, hence justifying a Bayesian inference of the SGD, and show that such a state is not a minimum of the loss function itself but rather of a regularization in terms of the Shannon entropy. In \cite{XieSatoSugiyama2021} the authors assume the covariance of the SDE to be close to the Hessian of the loss function and deduce different escape time estimates from minima depending on the local spectral properties of the Hessian, hence defining an implicit bias towards flat minima. Still modeling the noise covariance with the Hessian of the loss, \cite{LiWangArora2022} discuss yet another form of implicit bias in SGD through a continuous SDE model: they show that in the presence of a manifold of minima, SGD does not randomly walks on that manifolds once it reaches it, but rather continues to minimize the trace of the Hessian, hence stabilizing around the flattest region of the minima. They obtain this result by first observing that the dynamics that is normal to the manifold approximately behaves like a constant-covariance Ornstein-Uhlenbeck process, and deduce asymptotically reaching the manifold of minima, and then study the tangent dynamics. The techiques that we introduce that allow us to obtain a rigorous proof of existence of stationary states for the SDE with the true covariance matrix, and a proof of the decay in the non degenerate case, may provide insight on how to consider the tangent dynamics for a general state dependent noise.

Limitations of the SDE model of SGD have been pointed out in several occasions. In \cite{Yaida2019} the whole approximation procedure is questioned in terms of heuristic arguments concerning the forcing of the scaling law in the limiting procedure, and the difference between the covariance matrix and the more commonly considered Hessian. More recently, \cite{LiMalladiArora2021} have considered the possible deviations of the SDE approximation of SGD for large values of the quotient between learning rate and batch size, by directly comparing trajectories of SGD with a fine discretization of the SDE. Large learning rates are indeed often used in practice to provide an implicit regularization bias \cite{Andriushchenko2023,Goyal2017, LiWeiMa2019}.

Extensions of SDE modeling of SGD currently include proof of stochastic approximation for adaptive schemes such as ADAM \cite{MallladiLyuPanigrahiArora2022}, while \cite{Gurbuzbalaban2021,Simsekli2019} have considered non-Gaussian noise.

Another notable mathematical approach to modeling the dynamics of neural network parameters trained via SGD relies on mean field limit techniques.
When the number of neurons in a layer grows to infinity, the empirical distribution of the network parameters evolves according to a deterministic partial differential equation, expressed as a Wasserstein gradient flow, see for instance the survey \cite{FernandezFigalli2022}.
This perspective allows one to bypass the complexities of individual parameter updates and instead analyze the global behavior of the parameter distribution. However, this approach is limited to shallow networks or to deep linear networks, where a well-posed infinite-width limit can be established. In particular, \cite{Rotskoff2019} rigorously proves a Law of Large Numbers and a Central Limit Theorem quantifying fluctuations around the mean field trajectory, while \cite{CCFF2024} shows that in a deep linear network, the infinite-width limit exhibits exponential convergence to a continuous-time limit.

\section{Analysis in the drift regime}\label{sec Drift regime}

The dynamics of the parameters evolving according to the SGD present two different regimes of behaviour. On the first regime, which we call the drift regime, the diffusion is weaker than the drift term $\nabla L$ and the latter is the one who determines the behaviour of the learning process. This fact leads to a concentration phenomena around the local minima of the loss function $L$ whenever the effective learning rate $\varepsilon>0$ is small enough. As we will discuss in Section \ref{sec Diffusion Regime}, once the concentration at small time has occurred, the diffusion tends to spread out the parameters. In this section, we are interested in quantifying this concentration behaviour at small times,  {and addressing the question Q1 formulated in the introduction: \emph{How do parameters evolve in the first stage of training with SGD?}}

\

To this end, we study the probability density function $\rho$ of the stochastic process associated to the SGD. Let us recall that $\rho$ satisfies the equation
\begin{equation}\label{Fokker-Planck 2}
  \begin{cases}
  \quad\;\,\partial_t\rho=\nabla\cdot\bigg(\varepsilon^2\nabla\cdot (Q(x)\rho)+\rho\nabla L(x)\bigg)\qquad\mbox{on}\quad(0,\infty)\times\R^d\,,\\[4mm]
  \rho(0,x)=\rho_0(x)\hspace{55mm}\mbox{in}\quad\R^d\,,
  \end{cases}
\end{equation}
with $\nabla\cdot A=(\nabla\cdot A_{1\cdot},\dots,\nabla\cdot A_{d\cdot})$ and $A\in\mathbb{R}^{d\times d}$. In this framework, we will address the above question by obtaining lower bounds for the following quantity:
\begin{equation*}
  \int_{B_R(x_0)}\rho(t,x)\dx\,,
\end{equation*}
with $x_0\in\mathbb{R}^d$ being a local minimum of $L$ and $R>0$ being certain radius. Nevertheless, before performing a local analysis, let us show a global concentration estimates when $L$ is strictly convex in the whole space $\mathbb{R}^d$. This can be easily deduced from the following upper bound of the second moments of $\rho$.
\begin{lem}\label{second moments convergence for lambda convex}
  Assume that $L$ is $\lambda$-convex with $\lambda>0$, $L$ has a unique minimum in $x_0$ with $L(x_0)=0$ and  $0\le \mbox{\rm tr}(Q(x))\le\sigma$ for every $x\in\mathbb{R}^d$. If $\rho_0\in \mathcal{P}_2(\mathbb{R}^d)$, then $\forall t>0$
  $$\int_{\mathbb{R}^d}|x-x_0|^2\rho(t,x)\dx\le\left(\int_{\mathbb{R}^d}|x-x_0|^2\rho_0(x)\dx-\varepsilon^2\frac{2\sigma}{\lambda}\right)e^{-\lambda t}+\varepsilon^2\frac{2\sigma}{\lambda}\,.$$
\end{lem}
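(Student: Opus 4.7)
The plan is to derive a closed linear differential inequality for the second moment
\[M(t):=\int_{\mathbb{R}^d}|x-x_0|^2\,\rho(t,x)\dx\]
and then close the estimate by an integrating-factor (Gronwall) argument.

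First I would test the weak formulation of \eqref{Fokker-Planck 2} (Definition \ref{weak solution}) against the time-independent $\varphi(x):=|x-x_0|^2$. This function has polynomial growth of order $k=2$, matching both the class of admissible test functions in Definition \ref{weak solution} and the hypothesis $\rho_0\in\mathcal{P}_2(\mathbb{R}^d)$. Since $\nabla\varphi=2(x-x_0)$ and $D^2\varphi=2\,I_{d\times d}$, the weak formulation specializes to
\[M'(t)=2\varepsilon^2\int_{\mathbb{R}^d}\operatorname{tr}(Q(x))\,\rho(t,x)\dx\,-\,2\int_{\mathbb{R}^d}\nabla L(x)\cdot(x-x_0)\,\rho(t,x)\dx.\]

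Second, I would plug in the two structural assumptions. From $\operatorname{tr}(Q(x))\le\sigma$ together with $\rho(t,\cdot)$ being a probability density, the diffusive term is bounded above by $2\varepsilon^2\sigma$. For the drift term, the $\lambda$-convexity of $L$ combined with $\nabla L(x_0)=0$ (since $x_0$ is the minimum) gives the monotonicity estimate $\nabla L(x)\cdot(x-x_0)\ge\tfrac{\lambda}{2}|x-x_0|^2$, so that the drift dominates the dissipation. Combining the two bounds one obtains the linear ODI
\[M'(t)\le 2\varepsilon^2\sigma-\lambda M(t),\]
and multiplying by the integrating factor $e^{\lambda t}$ and integrating from $0$ to $t$ yields precisely the stated estimate.

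The only genuine obstacle in this plan is technical, namely the use of a non-compactly-supported test function: one must ensure that $M(t)<\infty$ is propagated along the flow and that the weak-solution identity of Definition \ref{weak solution} can indeed be applied pointwise in time to $\varphi(x)=|x-x_0|^2$. I would fix this by working first with the localized family $\varphi_R(x):=|x-x_0|^2\,\chi(|x-x_0|/R)$ for a standard cutoff $\chi\in C_c^\infty(\mathbb{R})$, deriving a uniform-in-$R$ differential inequality for $M_R(t):=\int\varphi_R\rho\,\dx$, and then passing to the monotone limit $R\to\infty$ using the coercivity of $\nabla L(x)\cdot(x-x_0)$ outside a compact set, which is itself a direct consequence of the $\lambda$-convexity hypothesis together with the trace bound on $Q$.
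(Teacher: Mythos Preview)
Your proposal is correct and follows essentially the same approach as the paper: differentiate the second moment, use $\operatorname{tr}(Q)\le\sigma$ and the $\lambda$-convexity inequality $(x-x_0)\cdot\nabla L(x)\ge\tfrac{\lambda}{2}|x-x_0|^2$ to obtain $M'(t)\le 2\varepsilon^2\sigma-\lambda M(t)$, and integrate. You additionally sketch the cutoff argument needed to justify the use of the quadratic test function, a technical point the paper passes over in silence.
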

\begin{proof}
  Let us differentiate the second moment of the solution.
  \begin{align*}
    \frac{\d}{\dt}\int_{\mathbb{R}^d}|x-x_0|^2\rho(t,x)\dx&=
    \varepsilon^2\int_{\mathbb{R}^d}\mbox{tr}(QD^2(|x-x_0|^2))\rho(t,x)\dx\\
    &\;\;-\int_{\mathbb{R}^d}\nabla\left(|x-x_0|^2\right)\cdot\nabla L\,\rho(t,x)\dx\\
    &=2\varepsilon^2\int_{\mathbb{R}^d}\mbox{tr}(Q(x))\rho\dx-2\int_{\mathbb{R}^d}(x-x_0)\cdot\nabla L(x)\rho\dx
  \end{align*}
Note that since $L$ is $\lambda$-convex and $x_0$ is any minimum, we have
$$(x-x_0)\cdot\nabla L(x)\ge L(x)-L(x_0)+\frac{\lambda}{2}|x-x_0|^2\ge\frac{\lambda}{2}|x-x_0|^2\,.$$
Using this estimate and $0\le\mbox{tr}(Q(x))\le\sigma$ we obtain
\begin{align*}
  \frac{\d}{\dt}\int_{\mathbb{R}^d}|x-x_0|^2\rho(t,x)\dx&\le2\varepsilon^2\sigma-\lambda\int_{\mathbb{R}^d}|x-x_0|^2\,\rho(t,x)\dx\,.
\end{align*}
The result follows by integrating this expression in $[0,t]$.
\end{proof}
Once we have estimated the second moment of $\rho$, it is possible to obtain a concentration estimate by Chebyshev's inequality.
\begin{cor}\label{mass concentration for lambda convex}
  Under the assumptions of Lemma \ref{second moments convergence for lambda convex}, we have that for every $R>0$ and $t>0$
  \begin{equation}\label{concentration of mass}
    \int_{B_R(x_0)}\rho(t,x)\dx\ge1- \frac{1}{R^2}\left[\left(\int_{\mathbb{R}^d}|x-x_0|^2\rho_0(x)\dx-\varepsilon^2\frac{2\sigma}{\lambda}\right)e^{-\lambda t}+\varepsilon^2\frac{2\sigma}{\lambda}\right]\,.
  \end{equation}
\end{cor}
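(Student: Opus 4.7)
The statement is essentially Chebyshev's (Markov's) inequality applied to the second-moment bound just proved in Lemma \ref{second moments convergence for lambda convex}, so the plan is short.

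First, I would use the fact that $\rho(t,\cdot)$ is a probability density, hence
\begin{equation*}
\int_{B_R(x_0)}\rho(t,x)\dx = 1 - \int_{\mathbb{R}^d\setminus B_R(x_0)}\rho(t,x)\dx\,,
\end{equation*}
so it suffices to bound the tail integral from above. On the complement of $B_R(x_0)$ we have $|x-x_0|\ge R$, equivalently $1\le |x-x_0|^2/R^2$, and therefore
\begin{equation*}
\int_{\mathbb{R}^d\setminus B_R(x_0)}\rho(t,x)\dx \le \frac{1}{R^2}\int_{\mathbb{R}^d\setminus B_R(x_0)}|x-x_0|^2\rho(t,x)\dx \le \frac{1}{R^2}\int_{\mathbb{R}^d}|x-x_0|^2\rho(t,x)\dx\,,
\end{equation*}
which is the classical Chebyshev-type estimate.

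Second, I would invoke Lemma \ref{second moments convergence for lambda convex} directly to replace the second moment on the right-hand side by the explicit upper bound
\begin{equation*}
\int_{\mathbb{R}^d}|x-x_0|^2\rho(t,x)\dx\le\left(\int_{\mathbb{R}^d}|x-x_0|^2\rho_0(x)\dx-\varepsilon^2\frac{2\sigma}{\lambda}\right)e^{-\lambda t}+\varepsilon^2\frac{2\sigma}{\lambda}\,.
\end{equation*}
Substituting this into the tail estimate and then into the probabilistic identity above yields precisely \eqref{concentration of mass}.

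There is no genuine obstacle here: both steps are one-liners, and the only ``care'' needed is to make sure the hypotheses of Lemma \ref{second moments convergence for lambda convex} (namely $\lambda$-convexity of $L$, $0\le \mathrm{tr}(Q)\le\sigma$, $\rho_0\in\mathcal{P}_2(\mathbb{R}^d)$) are inherited as the standing assumptions, which is exactly what the statement prescribes.
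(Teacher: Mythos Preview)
Your proposal is correct and follows exactly the same approach as the paper: apply Chebyshev's inequality with $f(x)=|x-x_0|$ and $p=2$ to bound the tail mass by $R^{-2}$ times the second moment, then invoke Lemma~\ref{second moments convergence for lambda convex} and the fact that $\rho(t)\in\mathcal{P}(\mathbb{R}^d)$. The only cosmetic difference is that the paper states Chebyshev's inequality in its general form before specializing, whereas you derive the relevant bound directly.
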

\begin{proof}
For estimate \eqref{concentration of mass}, we use Chebyshev's inequality: For any $\mu$-measurable function $f$, $R>0$  and $1<p<\infty$ it holds that
$$\mu\left(\left\{x\in\mathbb{R}^d:|f(x)|>R\right\}\right)\le\frac{1}{R^p}\int_{\mathbb{R}^d}|f(x)|^p\d\mu(x)\,.$$
Hence, considering $\mu=\rho(t)$, $f(x)=|x-x_0|$ and $p=2$, we obtain
$$\int_{|x-x_0|>R}\rho(t,x)\dx\le \frac{1}{R^2}\int_{\mathbb{R}^d}|x-x_0|^2\rho(t,x)\dx\,.$$
Then, we use Lemma \ref{second moments convergence for lambda convex} and the fact that $\rho(t)\in\mathcal{P}(\mathbb{R}^d)$ to control the second moment and to conclude.
\end{proof}

\noindent{\sc Example 4: non degenerate case}.
We remark that Lemma \ref{second moments convergence for lambda convex} and Corollary \ref{mass concentration for lambda convex} does not imply the convergence to $\delta_{x_0}(x)$ in any sense due to the error term $\varepsilon^2\frac{2\sigma}{\lambda}$. However, this error seems to be unavoidable whenever there is a diffusion term. Let us consider the case $Q(x)=\frac{\sigma}{d} I$  and $L(x)=\frac{\lambda}{2}|x-x_0|^2$ for every $x\in\mathbb{R}^d$ with $\sigma,\lambda>0$. In this case, the fundamental solution is smooth and reads as
$$\rho(t,x)=\frac{1}{\left(2\pi\frac{\varepsilon^2\sigma}{\lambda d}(1-e^{-2\lambda t})\right)^{d/2}}\exp\left(-\frac{\lambda d|x-x_0|^2}{2\varepsilon^2\sigma(1-e^{-2\lambda t})}\right)\,.$$
Moreover, we know that $\rho(t)\rightarrow\rho_\infty$ uniformly in compacts, in 2-Wasserstein distance and in $L^2(\mathbb{R}^d,\rho_\infty^{-1})$ with
$$\rho_\infty(x)= \frac{1}{\left(2\pi\frac{\varepsilon^2\sigma}{\lambda d}\right)^{d/2}}\exp\left(-\frac{\lambda d|x-x_0|^2}{2\varepsilon^2\sigma}\right)\,.$$
However, $\rho_\infty$ does not concentrate in $x_0$ due to the exponential tails. Indeed, the second moment is
$$\int_{\mathbb{R}^d}|x-x_0|^2\rho_\infty(x)\dx=\mbox{tr}\left(\frac{\varepsilon^2\sigma}{\lambda d}I\right)=\frac{\varepsilon^2\sigma}{\lambda}\,.$$

One may think that this is not a representative example since the diffusion is non-degenerate. Nevertheless, let us consider the degenerate case where the operator is not even hypocoercive.

\vspace{2mm}
\noindent{\sc Example 5: degenerate case}.
Assume $L(x)=\frac{\lambda}{2}|x|^2$ with global minimum at $x_0=0$ and the diffusion matrix
$$Q(x)=\begin{pmatrix}
         \sigma & 0\dots &0 \\
         0 & 0\dots &0 \\
         \vdots &\;\;\;\; \ddots&\vdots\\
         0&\;\;\;\dots&0
       \end{pmatrix}\,,$$
with $\sigma,\lambda>0$. Note that this operator is not hypocoercive since $\lambda I e_i=\lambda e_i$ and $e_i\in\ker(Q)$ for any $i=2,\dots,d$. Given an initial datum $\rho_0\in\mathcal{P}_2(\mathbb{R}^d)$, the solution to
$$\partial_t \rho=\nabla\cdot\left(\varepsilon^2Q\nabla \rho+\lambda x\rho\right)\,,$$
is the following:
$$\rho(t,x)=\frac{e^{\lambda(d-1)t}}{\sqrt{2\pi\frac{\varepsilon^2\sigma}{\lambda}(1-e^{-2\lambda t})}} \; \int_{\mathbb{R}}\exp\left(-\frac{\lambda|x_1-\tilde{x}_1e^{-\lambda t}|^2}{2\varepsilon^2\sigma(1-e^{-2\lambda t})}\right)\;\rho_0\left(\tilde{x}_1,x'e^{\lambda t}\right)\d\tilde{x}_1\,,$$
with $x'=(x_2,\dots,x_d)\in \mathbb{R}^{d-1}$. Due to the tail decay of $\rho_0\in\mathcal{P}_2(\mathbb{R}^d)$, the stationary solution is
$$\rho_\infty(x)=\frac{1}{\sqrt{2\pi\frac{\varepsilon^2\sigma}{\lambda}}}\; \exp\left(-\frac{\lambda|x_1|^2}{2\varepsilon^2\sigma}\right)\;\delta_{0'}(x')$$
See Theorem \ref{Thm Wasserstein Convergence} for a 2-Wasserstein convergence of $\rho(t)\rightarrow\rho_\infty$ when $t\rightarrow\infty$.
Note that even in this very degenerated case the second moment of the invariant measure is given by
$$\int_{\mathbb{R}^d}|x|^2\rho_\infty(x)\dx =\frac{1}{\sqrt{2\pi\frac{\varepsilon^2\sigma}{\lambda}}} \int_{\mathbb{R}}|x_1|^2\exp\left(-\frac{\lambda|x_1|^2}{2\varepsilon^2\sigma}\right)\dx_1=\frac{\varepsilon^2\sigma}{\lambda}$$

Based on this simple example, we aim to estimate this error that the diffusion generates in the concentration phenomena around local minima. As in the example, the main influent parameters are the $\lambda$-convexity of the loss function in the basin of attraction, the effective learning rate $\varepsilon>0$ and the $L^\infty$-norm of the matrix $Q$, which quantify the diffusion.
From an application perspective, these parameters establish the likelihood that the weights of the neural network will fall into the nearest local minimum.

\subsection{ Local mass concentration}\label{Mass concentration in the drif regime}

In this section, we show that in the first steps of the learning process, the parameters tends to concentrate  with high probability around the local minima of the loss function. Assuming without loss of generality that $0$ is a local minimum of $L$, we would like to obtain an estimate on the following quantity, for small values of $r$
\begin{equation}\label{local mass}
  \mathbb{P}\lbrace X_t\in B_r(0)\rbrace=\int_{B_r(0)}\rho(t,x)\dx\,.
\end{equation}
The main problem is that the variation of this quantity depends on the flux on the boundary of $B_r(0)$ which is a priori unknown. Namely, if we try to  differentiate this quantity using the equation of the probability density function, we obtain
\begin{align*}
  \frac{\d}{\dt}\int_{B_r(0)}\rho(t,x)\dx & =\int_{B_r(0)}\nabla \cdot\bigg(\varepsilon^2\nabla\cdot(\rho Q)+\rho\nabla L\bigg)\dx\\
  &=\int_{|x|=r}\bigg(\varepsilon^2\nabla\cdot(\rho Q)+\rho\nabla L\bigg)\cdot \frac{x}{|x|}\dx
\end{align*}
Since this variation is complicated to estimate with \eqref{local mass} to close the differential inequality, we will consider a different quantity with a controllable variation. To motivate this new approach, let us consider the following continuity equation ($\varepsilon=0$)
\begin{equation}\label{transport}
\begin{cases}
  \partial_t \mu=\nabla\cdot\left(\mu\nabla L\right)\\
  \mu(0,x)=\mu_0(x)
\end{cases}
\end{equation}
with $L$ being $\lambda$-convex with $\lambda>0$ and a global minimum at $0$. It is well known that if $\mu_0\in\mathcal{P}_2(\mathbb{R}^d)$, then the solution $\mu(t)$ converge to $\delta_0$ in the Wasserstein topology \cite{Ambrosio-Brue-Semola}, but we are interested in the speed of concentration. Moreover, the associated flow dynamic is
\begin{equation*}
  \begin{cases}
    \frac{\d}{\dt} \Phi_t(x)&=-\nabla L(\Phi_t(x)) \\
    \Phi_0(x)&=x.
  \end{cases}
\end{equation*}
and the solution of \eqref{transport} is the pushforward $\mu(t,x)=(\Phi_t)_\#\mu_0(x)$. In order to show the concentration velocity of $\Phi_t(x)$, let us compute the variation in time of the Euclidean distance from $\Phi_t(x)$ to the global minimum at 0:
\begin{align*}
  \frac{\d}{\dt}|\Phi_t(x)-0|^2 & =2 \frac{\d}{\dt}\Phi_t(x)\cdot\left(\Phi_t(x)-0\right)=-2\nabla L(\Phi_t(x))\cdot \left(\Phi_t(x)-0\right) \\
  &\le- \lambda |\Phi_t(x)-0|^2\,,
\end{align*}
which implies that
\begin{equation}\label{flow map convergence}
  |\Phi_t(x)|\le e^{-\frac{\lambda}{2}t}|x|\qquad\forall t>0.
\end{equation}
Hence, the flow tends to concentrate particles starting from a point $x\in\mathbb{R}^d$ around $0$ with exponential velocity. This phenomena can be understood in the sense of the following lemma.
\begin{lem}\label{transport shrinking radius}
Let $L$ be a $\lambda$-convex function with $\lambda>0$ and let $\mu$ be a solution to \eqref{transport}  in the sense of \cite{Ambrosio-Brue-Semola} with $\mu_0\in\mathcal{P}_2(\mathbb{R}^d)$. Then, for every $t>0$ it holds that
  \begin{equation*}
  \int_{B_{R(t)}(0)}\mu(t,x)\dx\ge\int_{B_{R_0}(0)}\mu_0(x)\dx
\end{equation*}
with $R(t)=R_0\,e^{-\frac{\lambda}{2} t}$.
\end{lem}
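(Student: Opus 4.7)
The plan is to exploit the Lagrangian representation of the solution $\mu(t) = (\Phi_t)_\# \mu_0$ together with the pointwise contraction estimate \eqref{flow map convergence} that has just been derived in the excerpt. The key observation is that the inclusion of ball-preimages under $\Phi_t$ can be controlled explicitly by $\lambda$-convexity, so the problem reduces to purely geometric bookkeeping.

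First I would invoke the characterisation of solutions to \eqref{transport} given in \cite{Ambrosio-Brue-Semola}: under $\lambda$-convexity of $L$, the gradient field $-\nabla L$ generates a globally well-defined flow $\Phi_t:\R^d\to\R^d$, and the unique measure solution starting from $\mu_0\in\mathcal{P}_2(\R^d)$ is the pushforward $\mu(t) = (\Phi_t)_\# \mu_0$. Second, I would feed the estimate $|\Phi_t(x)| \le e^{-\frac{\lambda}{2}t}|x|$, already proved in \eqref{flow map convergence}, into a simple set-inclusion: if $x\in B_{R_0}(0)$ then $|\Phi_t(x)| \le e^{-\frac{\lambda}{2}t} R_0 = R(t)$, hence
\[
\Phi_t\bigl(B_{R_0}(0)\bigr) \subset B_{R(t)}(0),
\qquad\text{equivalently}\qquad
B_{R_0}(0) \subset \Phi_t^{-1}\bigl(B_{R(t)}(0)\bigr).
\]

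Third, the pushforward definition together with monotonicity of $\mu_0$ on nested sets yields
\[
\int_{B_{R(t)}(0)} \mu(t,x)\,\dx
= \mu_0\bigl(\Phi_t^{-1}(B_{R(t)}(0))\bigr)
\ge \mu_0\bigl(B_{R_0}(0)\bigr)
= \int_{B_{R_0}(0)} \mu_0(x)\,\dx,
\]
which is precisely the claim. I do not expect any genuine obstacle: the contraction $|\Phi_t(x)|\le e^{-\lambda t/2}|x|$ does all the work, and the $\lambda$-convexity of $L$ is exactly what makes the flow $\Phi_t$ well-posed as a classical map (so that the pushforward representation is meaningful). The only mild care point is ensuring the flow is defined globally in time for the merely $\lambda$-convex $L$ (no $C^2$ assumption); this is covered by the maximal monotone / EVI framework used in the reference cited in the statement, so the argument goes through without modification.
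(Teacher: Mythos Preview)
Your proof is correct and follows essentially the same approach as the paper: use the pushforward representation $\mu(t)=(\Phi_t)_\#\mu_0$, invoke the contraction estimate \eqref{flow map convergence} to obtain $\Phi_t(B_{R_0}(0))\subseteq B_{R(t)}(0)$, and conclude by monotonicity of the measure on nested sets. The paper's version is slightly terser but the argument is identical.
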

\begin{proof}
  Recall that $\mu(t,x)=(\Phi_t)_\#\mu_0(x)$. Then,
  \begin{equation*}
     \int_{B_{R(t)}(0)}\mu(t,x)\dx=\int_{\Phi_t^{-1}(B_{R(t)}(0))}\mu_0(x)\dx\,.
  \end{equation*}
  Since $R(t)=R_0\,e^{-\frac{\lambda}{2} t}$ and \eqref{flow map convergence} holds, we have that
  $$\Phi_t(B_{R_0}(0))\subseteq B_{R(t)}(0)\,,$$
  and the result follows.
\end{proof}
Considering the diffusion equation \eqref{Fokker-Planck 2} for $\rho$. Compared with the result in the lemma above, we expect that the probability of a suitable shrinking ball should be non decreasing up to an error depending on $\varepsilon$ and before  certain time. Namely, our goal is to obtain a lower bound in terms of the initial datum $\rho_0$ of the form
\begin{equation*}
  \int_{B_{R(t)}(0)}\rho(t,x)\dx\ge \int_{B_{R_0}(0)}\rho_0(x)\dx-c(\varepsilon,t)
\end{equation*}
with an appropiate shrinking radius $R(t)$ and $c(\varepsilon,t)$ tending to 0 as $\varepsilon$ goes to 0. Since we want to use the equation of $\rho$ and test it with the characteristic function $\chi_{B_{\R(t)}(0)}(x)$, we use the following $C^{1,2}$ approximation cut-off function
\begin{equation}\label{smooth characteristic}
  \varphi(t,r)=\begin{cases}
           1, & \mbox{if } \qquad\qquad\qquad\; r\le R(t) \\
           1-\frac{2}{\delta^2R(t)^2}(r-R(t))^2, & \mbox{if }\quad\qquad R(t)<r\le(1+\frac{\delta}{2})R(t) \\
           \frac{2}{\delta^2 R(t)^2}((1+\delta)R(t)-r)^2, & \mbox{if } (1+\frac{\delta}{2})R(t)<r\le(1+\delta)R(t)\\
            0, & \mbox{if } (1+\delta)R(t)<r\,.
         \end{cases}
\end{equation}
 for some radius $R(t)$ to be chosen later and some small constant $\delta>0$. In terms of this test function, the result reads as follows.

\begin{teo}\label{Thm shrinking radius}
  Assume that $L$ is $\lambda$-convex in $B_{(1+\delta)R_0}(0)$ with a minimum at 0 and $\lambda>0$. Let $\rho$ be a weak solution of \eqref{FP intro}
  with $0\le Q(x)\le \sigma I_{d\times d}$ for every $x\in B_{(1+\delta)R_0}(0)$. Then, for any time dependent radius $R(t)$ satisfying
  \begin{equation}\label{Shrinking radius}
    R'(t)\ge-\frac{\lambda}{2}R(t)\quad\forall t>0\,,
  \end{equation}
  it holds that for every $T>0$ and $\varphi$ as in \eqref{smooth characteristic}
  \begin{equation}\label{Concentration ODE}
    \int_{\mathbb{R}^d}\varphi(T,|x|)\rho(T,x)\dx\ge\int_{\mathbb{R}^d}\varphi(0,|x|)\rho_0(x)\dx -C \varepsilon^2 \int_{0}^{T}\frac{1}{R(t)^2}\dt
  \end{equation}
  with $C>0$ depending only on $\delta,\sigma$ and $d$.
\end{teo}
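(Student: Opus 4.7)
The plan is to test the weak formulation of \eqref{FP intro} against (a $C^{1,2}$-mollification of) $\varphi(t,|x|)$ and show that the drift-plus-time-derivative contribution is pointwise non-negative, while the diffusion contribution is of order $\varepsilon^2/R(t)^2$. The weak formulation in Definition \ref{weak solution} rearranges to
\begin{equation*}
\int_{\R^d}\varphi(T,|x|)\rho(T,x)\dx - \int_{\R^d}\varphi(0,|x|)\rho_0(x)\dx = \int_0^T\!\!\int_{\R^d} \Big[\partial_t\varphi + \varepsilon^2\operatorname{tr}(Q D^2\varphi) - \nabla L\cdot\nabla\varphi\Big]\rho\,\dx\,\dt,
\end{equation*}
so it suffices to bound the right-hand side from below by $-C\varepsilon^2\!\int_0^T R(t)^{-2}\dt$.

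The first step is to exploit the self-similar structure: $\varphi(t,r)=\psi(r/R(t))$, where $\psi\colon\R_+\to[0,1]$ is a fixed non-increasing profile supported in $[0,1+\delta]$ (independent of $t$). From this, the chain rule immediately gives
\begin{equation*}
\partial_t\varphi = -\frac{R'(t)}{R(t)}\,r\,\varphi_r, \qquad \nabla\varphi(t,x) = \varphi_r\,\frac{x}{|x|},
\end{equation*}
with $\varphi_r\le 0$. Combining these,
\begin{equation*}
\partial_t\varphi - \nabla L\cdot\nabla\varphi = -\frac{\varphi_r}{|x|}\bigg[\frac{R'(t)}{R(t)}|x|^2 + \nabla L(x)\cdot x\bigg].
\end{equation*}
Since $L$ is $\lambda$-convex in $B_{(1+\delta)R_0}(0)$ with a minimum at $0$ (so $\nabla L(0)=0$), $\lambda$-monotonicity of $\nabla L$ yields $\nabla L(x)\cdot x\ge\lambda|x|^2$ on the support of $\varphi$. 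The hypothesis $R'(t)\ge-\tfrac{\lambda}{2}R(t)$ then makes the bracket $\ge\tfrac{\lambda}{2}|x|^2\ge 0$, while $-\varphi_r/|x|\ge 0$. Hence $\partial_t\varphi - \nabla L\cdot\nabla\varphi\ge 0$ pointwise, and its integral against $\rho$ is non-negative.

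For the diffusion term, I would use that a radial function $\varphi(t,|x|)$ has Hessian whose eigenvalues are $\varphi_{rr}$ (once) and $\varphi_r/r$ (with multiplicity $d-1$). Direct inspection of \eqref{smooth characteristic}, or equivalently the scaling $\psi(r/R)$, gives $|\varphi_{rr}|\le 4/(\delta^2 R(t)^2)$ and $|\varphi_r|/r\le C_\delta/R(t)^2$ on the annulus (where $r\ge R(t)$) and zero outside. With the bound $0\le Q\le\sigma I$ it follows that
\begin{equation*}
\big|\operatorname{tr}(Q(x)D^2\varphi(t,|x|))\big|\le \sigma\sum_{i=1}^d|\lambda_i(D^2\varphi)|\le \frac{C\,\sigma\,d}{R(t)^2},
\end{equation*}
and integrating against $\rho(t)\in\mathcal P_k(\R^d)$, which has mass $\le 1$, produces exactly the advertised error $C\varepsilon^2/R(t)^2$. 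Putting the two pieces together and integrating in time yields \eqref{Concentration ODE}.

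The only genuine obstacle is that the profile $\psi$ coming from \eqref{smooth characteristic} is only $C^{1,1}$, so $\varphi\notin C^{1,2}$ as required by Definition \ref{weak solution}. I would handle this by a standard radial mollification $\psi_\eta\in C^\infty$ with $\psi_\eta\to\psi$, $\psi_\eta'\to\psi'$ uniformly, and $\|\psi_\eta''\|_\infty\le C_\delta$; the three steps above apply to each $\varphi_\eta(t,r)=\psi_\eta(r/R(t))$ with uniform constants, and one passes to the limit $\eta\to 0$ by dominated convergence. All other steps are elementary.
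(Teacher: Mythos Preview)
Your proof is correct and follows the same strategy as the paper: test against $\varphi$, show that the drift-plus-time-derivative contribution is pointwise nonnegative via $\lambda$-convexity and the hypothesis $R'\ge -\tfrac{\lambda}{2}R$, and bound the diffusion contribution by $C\sigma/R(t)^2$ using the explicit Hessian of the radial cutoff. Your self-similar observation $\varphi(t,r)=\psi(r/R(t))$ streamlines the paper's region-by-region computation, and your mollification step to obtain genuine $C^{1,2}$ test functions is actually more careful than the paper, which simply asserts that the formal time-differentiation can be made rigorous through the weak formulation without addressing the fact that $\varphi$ is only $C^{1,1}$ in space.
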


\begin{proof}
  We want to differentiate the quantity in \eqref{Concentration ODE}, therefore let us compute the derivatives of the test function $f$.
  \begin{equation*}
    \partial_t\varphi(t,r)=\begin{cases}
           \frac{4}{\delta^2R(t)^2}(r-R(t))r\frac{R'(t)}{R(t)}, & \mbox{if }\quad\qquad R(t)<r\le(1+\frac{\delta}{2})R(t) \\
           \frac{4}{\delta^2 R(t)^2}((1+\delta)R(t)-r)r\frac{R'(t)}{R(t)}, & \mbox{if } (1+\frac{\delta}{2})R(t)<r\le(1+\delta)R(t)\\
            0, & \mbox{otherwise}\,.
         \end{cases}
  \end{equation*}

   \begin{equation*}
    \partial_r\varphi(t,r)=\begin{cases}
           -\frac{4}{\delta^2R(t)^2}(r-R(t)), & \mbox{if }\quad\qquad R(t)<r\le(1+\frac{\delta}{2})R(t) \\
           -\frac{4}{\delta^2 R(t)^2}((1+\delta)R(t)-r), & \mbox{if } (1+\frac{\delta}{2})R(t)<r\le(1+\delta)R(t)\\
            0, & \mbox{otherwise}\,.
         \end{cases}
    \end{equation*}
    \begin{equation*}
    \partial_{rr}\varphi(t,r)=\begin{cases}
           -\frac{4}{\delta^2R(t)^2}, & \mbox{if }\quad\qquad R(t)<r\le(1+\frac{\delta}{2})R(t) \\
           \frac{4}{\delta^2 R(t)^2}, & \mbox{if } (1+\frac{\delta}{2})R(t)<r\le(1+\delta)R(t)\\
            0, & \mbox{otherwise}\,.
         \end{cases}
  \end{equation*}
  Let us recall the gradient and the Hessian matrix of radial functions with respect to $x$.
  \begin{align*}
    \nabla \varphi(t,|x|)&=\partial_r \varphi(t,|x|)\frac{x}{|x|}\,,\\
     D^2\varphi(t,|x|)&=\partial_{rr}\varphi(t,|x|)P_0 +\frac{\partial_{r}\varphi(t,|x|)}{|x|}(I_{d\times d}-P_0 )\,,
  \end{align*}
  with $P_0=\frac{x}{|x|}\otimes\frac{x}{|x|} $ being the projection matrix with respect to the vector $\frac{x}{|x|}$. Now, let us differentiate the desired quantity and use the equation of $\rho$:
  \begin{align*}
    \frac{\d}{\dt}\!\!\int_{\mathbb{R}^d}\!\!\!\!\varphi(t,|x|)\rho(t,x)\dx
    =&\int_{\mathbb{R}^d}\partial_t \varphi(t,|x|)\rho(t,x)\dx\\
    +&\varepsilon^2\int_{\mathbb{R}^d}\!\!\!\mbox{tr}\!\left(Q(x)D^2 \varphi(t,|x|)\right)\!\rho(t,x)\dx\!-\!\!\int_{\mathbb{R}^d}\!\!\!\nabla \varphi(t,|x|)\!\cdot\!\nabla L(x)\rho(t,x)\dx\\
  \end{align*}
  Note that this differentiation is formal, however we can make it rigourous by integrating the expression above in time and using Definition \ref{weak solution} for weak solutions of \eqref{FP intro}.

  Then, we have to use the previously calculated derivatives of $\varphi$. We will compensate the integral term of $\nabla L$ with the one with $\partial_t \varphi$ and the integral with $\varepsilon$  will be the error. In order to simplify the computation we split the integrals in two regions, depending on the time dependent radius $R=R(t)$:
  \begin{align*}
    &\int_{\mathbb{R}^d}\partial_t \varphi\,\rho\dx+\varepsilon^2\int_{\mathbb{R}^d}\mbox{tr}\left(Q\,D^2 \varphi\right)\rho\dx-\int_{\mathbb{R}^d}\nabla \varphi\cdot\nabla L\,\rho\dx\\
    =&\int_{R\le|x|\le(1+\frac{\delta}{2})R}\left[\partial_t \varphi+\varepsilon^2\mbox{tr}\left(Q\,D^2 \varphi\right)-\nabla \varphi\cdot\nabla L\right]\rho\dx\\
    &+\int_{(1+\frac{\delta}{2})R\le|x|\le(1+\delta)R}\left[\partial_t \varphi+\varepsilon^2\mbox{tr}\left(Q\,D^2 \varphi\right)-\nabla \varphi\cdot\nabla L\right]\rho\dx\\
    =&I+II
  \end{align*}
  Let us estimate the first term.
  \begin{align*}
    I =&\int_{R\le|x|\le(1+\frac{\delta}{2})R}\left[\partial_t \varphi-\partial_r \varphi\,\frac{x}{|x|}\cdot\nabla L+\varepsilon^2\mbox{tr}\left(Q\left(\partial_{rr} \varphi \,P_0+\frac{\partial_r \varphi}{|x|}(I_{d\times d}-P_0)\right)\right)\right]\rho\dx\\
    =&\frac{4}{\delta^2R^2}\int_{R\le|x|\le(1+\frac{\delta}{2})R}\left[(|x|-R)|x|\frac{R'}{R}+(|x|-R)\frac{x}{|x|}\cdot\nabla L\right]\rho\dx\\
    &-\frac{4\varepsilon^2}{\delta^2R^2}\int_{R\le|x|\le(1+\frac{\delta}{2})R}\left[\mbox{tr}\left(Q\,P_0\right) +\frac{(|x|-R)}{|x|}\mbox{tr}\left(Q\left(I_{d\times d}-P_0\right)\right)\right]\rho\dx\,.
  \end{align*}
  Using the $\lambda$-convexity of $L$ and the bounds for the traces of the product of nonnegative matrices  $0\le \mbox{tr}(Q P_0)\le \sigma d$ and $0\le \mbox{tr}(Q (I_{d\times d}-P_0))\le \sigma d(d-1)$, we obtain
  \begin{align*}
    I\ge&\frac{4}{\delta^2R^2}\int_{R\le|x|\le(1+\frac{\delta}{2})R}(|x|-R)|x|\left(\frac{R'}{R}+\frac{\lambda}{2}\right)\rho\dx\\
    &-\frac{4\varepsilon^2}{\delta^2R^2}\int_{R\le|x|\le(1+\frac{\delta}{2})R}\left[\sigma d+\left(1-\frac{R}{|x|}\right)\sigma d(d-1)\right]\rho\dx\,.
  \end{align*}
  Therefore, if the time dependent radius satisfies $R'(t)\ge-\frac{\lambda}{2}R(t)$, the first integral above is nonnegative. Moreover, since $\rho$ is a probability measure we obtain the following lower bound for the second integral
  \begin{equation}\label{Estimate first integral}
    I \ge-\frac{4\sigma d^2}{\delta^2}\frac{\varepsilon^2}{R^2(t)}\,.
  \end{equation}
  For the integral on the other region, II, we follow the same argument. First, we use the explicit expressions of the derivatives of $\varphi$.
  \begin{align*}
     II \!=\!&\!\int_{(1+\frac{\delta}{2})R\le|x|\le(1+\delta)R}\left[\partial_t \varphi-\partial_r \varphi\,\frac{x}{|x|}\cdot\nabla L+\varepsilon^2\mbox{tr}\left(Q\left(\partial_{rr} \varphi \,P_0+\frac{\partial_r \varphi}{|x|}(I_{d\times d}-P_0)\right)\right)\right]\rho\dx\\
    =&\frac{4}{\delta^2R^2}\int_{(1+\frac{\delta}{2})R\le|x|\le(1+\delta)R}\left[\bigg((1+\delta)R-|x|\bigg)|x|\frac{R'}{R}+\bigg((1+\delta)R-|x|\bigg)\frac{x}{|x|}\cdot\nabla L\right]\rho\dx\\
    &+\frac{4\varepsilon^2}{\delta^2R^2}\int_{(1+\frac{\delta}{2})R\le|x|\le(1+\delta)R}\left[\mbox{tr}\left(Q\,P_0\right) -\frac{(1+\delta)R-|x|}{|x|}\mbox{tr}\left(Q\left(I_{d\times d}-P_0\right)\right)\right]\rho\dx\,.
  \end{align*}
  Then, we use the $\lambda$-convexity of $L$ and $0\le \mbox{tr}(Q P_0)\le \sigma d$ and $0\le \mbox{tr}(Q (I_{d\times d}-P_0))\le \sigma d(d-1)$ to obtain that
  \begin{align*}
    II\ge&\frac{4}{\delta^2R^2}\int_{(1+\frac{\delta}{2})R\le|x|\le(1+\delta)R}\bigg((1+\delta)R-|x|\bigg)|x|\left(\frac{R'}{R}+\frac{\lambda}{2}\right)\rho\dx\\
    &-\frac{4\varepsilon^2}{\delta^2R^2}\int_{(1+\frac{\delta}{2})R\le|x|\le(1+\delta)R}\left(\frac{(1+\delta)R}{|x|}-1\right)\sigma d (d-1)\rho\dx\,.
  \end{align*}
Hence, the differential inequality for the time dependent radius $R'(t)\ge-\frac{\lambda}{2}R(t)$ implies that
\begin{align*}
  II\ge-\frac{4(1+\delta)\sigma d^2}{\delta^2(1+\frac{\delta}{2})}\frac{\varepsilon^2}{R^2(t)}\,.
\end{align*}
Finally, we conclude by combining both estimates of $I$ and $II$,
\begin{align*}
  \frac{\d}{\dt}\!\int_{\mathbb{R}^d}\!\!\varphi(t,|x|)\rho(t,x)\dx=I+II\ge -C\frac{\varepsilon^2}{R(t)^2}\,,
\end{align*}
with $C=\frac{4\sigma d^2}{\delta^2}(1+\frac{1+\delta}{1+\delta/2})$ and integrating the inequality in time on $[0,T]$.
\end{proof}

Note that the shrinking condition \eqref{Shrinking radius} of the radius $R(t)$ coincide with the concentration velocity of the flow map \eqref{flow map convergence} associated to the continuity equation \eqref{transport}. The main idea is that if the radius decreases slower than the flow map concentrates, the mass in the shrinking balls will be almost preserved, up to an error depending on $\varepsilon$.

Despite the error term not being sharp, it is impossible to prove the estimate \eqref{Concentration ODE} with $C=0$ due to the diffusive nature of the equation of $\rho$. Indeed, if we consider the case $Q(x)=I$ in $\mathbb{R}^2$ with a double well potential $L(x,y)=(x^2-1)^2+y^2$, it is easy to see that even if the  mass of the initial datum is concentrated in one of the local minima, half of the mass will reach the neighbourhood of the other local minimum. A simple proof can be done studying the local mass of the unique steady state $\rho_\infty(x,y)=c \exp(-\frac{L(x,y)}{\varepsilon^2})$.

Now, we use Theorem \ref{Thm shrinking radius} with a decreasing radius satisfying $R'(t)\ge-\frac{\lambda}{2}R(t)$ in order to obtain an explicit control of the error term. Thus, a natural choice $R(t)$ is the exponentially decreasing radius, which we have considered previously in the non diffusive case in Lemma \ref{transport shrinking radius} and satisfies condition \eqref{Shrinking radius} with equality.
\begin{cor}
  Under assumptions of Theorem \ref{Thm shrinking radius}, if we choose the radius to be
  \begin{equation*}
    R(t)=R_0\, e^{-\frac{\lambda}{2}t}\,,
  \end{equation*}
  then for every $T>0$ it holds that
  \begin{equation}\label{error for exponential radius}
    \int_{\mathbb{R}^d}\varphi(T,|x|)\rho(T,x)\dx\ge\int_{\mathbb{R}^d}\varphi(0,|x|)\rho_0(x)\dx-\varepsilon^2\frac{C}{R_0^2\lambda}\left(e^{\lambda T}-1\right)\,,
  \end{equation}
   with $C>0$ depending only on $\delta,\sigma$ and $d$.
\end{cor}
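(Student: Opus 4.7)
The corollary is a direct specialization of Theorem~\ref{Thm shrinking radius}, so my plan is simply to verify that the proposed radius satisfies the admissibility condition of that theorem and then to compute explicitly the resulting error integral.

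First, I would check that the candidate radius $R(t) = R_0\, e^{-\frac{\lambda}{2}t}$ is admissible in the sense of \eqref{Shrinking radius}. This is immediate: differentiating gives
\[
R'(t) = -\frac{\lambda}{2}\,R_0\,e^{-\frac{\lambda}{2}t} = -\frac{\lambda}{2}\,R(t),
\]
so the differential inequality $R'(t) \ge -\frac{\lambda}{2}R(t)$ is satisfied with equality. In particular, $R(t)$ is the borderline admissible choice, matching exactly the exponential concentration rate of the flow map $\Phi_t$ observed in \eqref{flow map convergence}, which is consistent with the heuristic that this is the fastest shrinking radius still compatible with preserving the mass of the transport part.

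Next, I would invoke Theorem~\ref{Thm shrinking radius} with this specific $R(t)$, which yields
\[
\int_{\mathbb{R}^d}\varphi(T,|x|)\rho(T,x)\dx \ge \int_{\mathbb{R}^d}\varphi(0,|x|)\rho_0(x)\dx - C\varepsilon^2 \int_0^T \frac{1}{R(t)^2}\dt,
\]
with the same constant $C=C(\delta,\sigma,d)$ produced by the theorem. The only remaining task is to evaluate the time integral explicitly with our choice of radius:
\[
\int_0^T \frac{\dt}{R(t)^2} = \frac{1}{R_0^2}\int_0^T e^{\lambda t}\,\dt = \frac{e^{\lambda T}-1}{R_0^2\,\lambda}.
\]
Substituting this into the previous inequality yields exactly the claimed estimate \eqref{error for exponential radius}.

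There is no real obstacle here: once Theorem~\ref{Thm shrinking radius} is established, the corollary is a one-line computation. The only mild subtlety worth flagging in the writing is that the choice $R(t)=R_0\,e^{-\frac{\lambda}{2}t}$ saturates the shrinking condition, so this is the optimal (fastest) admissible radius within the class covered by the theorem, which in turn produces an error term that grows exponentially in $T$ at rate $\lambda$ and degenerates as $R_0 \to 0$. This explicit form of the error is what will later allow us, in Theorem~\ref{Thm Concentration}, to choose $T_\varepsilon$ of order $\log(1/\varepsilon)$ and still keep the error arbitrarily small as $\varepsilon \to 0$.
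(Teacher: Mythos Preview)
Your proposal is correct and follows exactly the same approach as the paper: verify that the exponential radius saturates condition \eqref{Shrinking radius} and then evaluate the error integral $\int_0^T R(t)^{-2}\,\dt$ explicitly. The paper's own proof is in fact even terser, omitting the check of \eqref{Shrinking radius} and just substituting directly into the error term of \eqref{Concentration ODE}.
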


\begin{proof}
  Let us substitute the definition of the radius in the error term of \eqref{Concentration ODE}.
\[
    -C\int_{0}^{T}\frac{\varepsilon^2}{R(t)^2}\dt=-\varepsilon^2\frac{C}{R_0^2}\int_{0}^{T}e^{\lambda t}\dt
    =-\varepsilon^2\frac{C}{R_0^2\lambda}\left(e^{\lambda T}-1\right)\,. \qedhere
\]
  \end{proof}

  The above result gives us a lower bound for the mass concentrated around a ball centered in the local minimum of $L$.  We are now in the position to prove Theorem \ref{Thm Concentration}.

  \medskip

\noindent{\it Proof of Theorem \ref{Thm Concentration}. }Letting $T=T(\varepsilon, C)$ in \eqref{error for exponential radius}, with $T(\varepsilon, C) = \frac{1}{\lambda}\log\left(1 +\frac{R_0^2 \lambda}{C \varepsilon^{2 \alpha}}\right)$, gives that for every $0<t<T_\varepsilon$  we have
 \begin{align*}
   \int_{\mathbb{R}^d}\varphi(t,|x|)\rho(t,x)\dx
   &\ge \int_{\mathbb{R}^d}\varphi(0,|x|)\rho_0(x)\dx -\varepsilon^2\frac{C}{R_0^2\lambda}\left(e^{\lambda T_\varepsilon}-1\right)\\
   &=\int_{\mathbb{R}^d}\varphi(0,|x|)\rho_0(x)\dx -\varepsilon^{2-2\alpha}.
 \end{align*}
Finally, we notice that we can replace $t=0$ by $t=t_0$ by the time shift invariance of the equation. \hfill \qed

\section{Analysis in the diffusion regime}\label{sec Diffusion Regime}

After a first concentration phenomena near local minima of the loss function $L$, the diffusion will allow the parameters of the optimization to jump from one local minimum of $L$ to another one. This property of the diffusion is the main advantage of using the SGD as a learning process with respect to the classical Gradient Descent, since it allows to avoid local minima of $L$ which could be attractors at first steps. Thus, the main question that we want to address in this section is  {what we formulated as Q2 in the introduction: \emph{How long does SGD take to escape from a local minimum?}}

\

\noindent To this end, we focus on how the diffusion affects the dynamics of the learning process. In particular, we want to estimate the mean exit time from a neighborhood of a local minimum of the loss function. Lower bounds for the mean exit time can be obtained without further assumption. On the other hand, some non-degeneracy condition on the diffusion is needed in order to ensure that the mean exit time is finite. Indeed, as a counterexample, consider the case with no diffusion at all, presented in \eqref{transport}, which leads to the deterministic flow map
 \begin{equation*}
   \begin{cases}
     \frac{\d}{\dt}\Phi_t(x)=-\nabla L(\Phi_t(x))\\
     \Phi_0(x)=x\,.
   \end{cases}
 \end{equation*}
In this case, all the mass splits and concentrates in the local minima of $L$, with no jump from one minimum to another one, i.e. the mean exit time is infinite.

Recall that the stochastic process describing the dynamics of the parameters when learning with SGD satisfies \eqref{SDE intro}, that is,
\begin{equation*}
  \begin{cases}
    \d X_t&=-\nabla L(X_t)\dt+\sqrt{2\varepsilon^2 Q(X_t)}\d W_t\\
    X_0&=x\,,
  \end{cases}
\end{equation*}
with the degenerate but nonnegative matrix $Q(x)\ge 0$, i.e. $v^TQ(x)v\ge 0$ for every $v\in\mathbb{R}^d$ and every $x\in\mathbb{R}^d$. Due to the degeneracy of $Q$, obtaining estimates on the Mean Exit Time (MET) from a basin of attraction of $L$ is not obvious. Recall also that the MET $u(x)=\mathbb{E}[\tau_\Omega^x]$ from an open set $\Omega \subset \mathbb{\R}^d$ given an initial point $x \in \Omega$
satisfies \eqref{Intro Mean Exit Time}, that is
\[
  \begin{cases}
    -\mathcal{A}u(x) = 1, & \mbox{in } \Omega  \\
    u(x)=0, & \mbox{on }\partial \Omega
  \end{cases}
\]
where $$\mathcal{A}u(x) = \varepsilon^2\mbox{tr}\left(Q(x)D^2 u(x)\right)-\nabla L(x)\cdot\nabla u(x).$$

This family of equations contains, as special cases, the non diffusive scenario ($Q(x)\equiv0$, i.e. pure drift) and the non-degenerate diffusive scenario ($Q(x)\ge \delta I_{d\times d}$).

In the non-diffusive case, the mean exit time is $u(x)=+\infty$ for all $x\in\Omega$. Indeed, the flow map $X_t$ is deterministic and describes the concentration of the parameters around the local minima of the loss function $L$. Hence, there is no possibility for $X_t$ to escape from a neighbourhood of a local minimum of $L$ in any time.

In the non-degenerate  diffusive case, by Kramers' Law, $u(x)<+\infty$ for every $x\in\Omega$. Indeed, even if the drift term $\nabla L$ pushes $X_t$ into the local minimum, the diffusion in every direction ensures the boundedness, see Appendix \ref{sec Kramers Law}.

In this section, we provide lower and upper bounds for the viscosity solution $u$ to \eqref{Intro Mean Exit Time} under mild conditions on $Q$ and $L$, allowing for degeneracies.
 {Our strategy is based a variation of the comparison principle given in \cite{CrIsLi}, for viscosity solutions of \eqref{Intro Mean Exit Time}. In order to prove it, we adapt the arguments of \cite[Theorem 3.3]{CrIsLi} to our setting, following the indications in \cite[Section 5C]{CrIsLi}. Before stating the result, we rewrite \eqref{Intro Mean Exit Time} in the notation of Definition \ref{def:viscosity}, by setting
\[
F\left(x,\nabla u(x),D^2u(x)\right) = -\mathcal{A}u(x) - 1 = -\varepsilon^2\mbox{tr}\left(Q(x)D^2u(x)\right)+\nabla L(x)\cdot\nabla u(x) - 1\,.
\]
The comparison principle that we need is the following.
\begin{teo}[Degenerate Comparison Principle]\label{th:DCP}
Let $\Omega\subset\mathbb{R}^d$ be a bounded domain, let $L_i\in C^{1,1}(\overline{\Omega})$ for every $i\ge1$, and let $u\in C(\overline{\Omega})$ be a viscosity solution of
\begin{equation}\label{eq}
F\left(x,\nabla u(x),D^2u(x)\right) = 0.
\end{equation}
\begin{enumerate}[label=\roman*)]
\item Assume that there exists $C > 0$ such that, for every $\delta>0$, there exists $\psi_\delta\in C^2(\overline{\Omega})$ satisfying $|\psi_\delta(x)|\le C\delta$ for every $x\in\Omega$. Let $w\in C(\overline{\Omega})$ such that, for all $\delta > 0$, $w_\delta=w+\psi_\delta$ is a viscosity solution of
\begin{equation}\label{eqI}
\left\{
\begin{array}{rcll}
    F\left(x,\nabla w_\delta(x),D^2w_\delta(x)\right) & \le & -\delta & \ , \quad x \in \Omega\\
    w_\delta & \le & 0 & \ , \quad x \in \partial\Omega\,.
\end{array}
\right.
\end{equation}
Then,
\begin{equation}
    w(x)\le u(x) \qquad\forall x\in\Omega\;.
\end{equation}
\item Assume that there exists $C > 0$ such that, for every $\delta>0$, there exists $\psi_\delta\in C^2(\overline{\Omega})$ satisfying $|\psi_\delta(x)|\le C\delta$ for every $x\in\Omega$. Let $w\in C(\overline{\Omega})$ such that, for all $\delta > 0$, $w_\delta=w+\psi_\delta$ is a viscosity solution of
\begin{equation}\label{eqII}
\left\{
\begin{array}{rcll}
    F\left(x,\nabla w_\delta(x),D^2w_\delta(x)\right) & \ge & \delta & \ , \quad x \in \Omega\\
    w_\delta & \ge & 0 & \ , \quad x \in \partial\Omega\,.
\end{array}
\right.
\end{equation}
Then,
\begin{equation}\label{comparison}
w(x)\ge u(x) \qquad\forall x\in\Omega\;.
\end{equation}
\end{enumerate}
\end{teo}
\begin{proof}
As the arguments in both cases are analogous, we only present the proof of {\it ii)}. Observe that the assumption $L_i\in C^{1,1}(\bar\Omega)$ for every $i\ge 1$ guarantees the continuity of $F$, so that we can refer to the notion of viscosity solutions of Definition \ref{def:viscosity}.\\
The proof is immediate once we have that, for every $\delta>0$,
\begin{equation}\label{comparison psi}
u(x)\le w_\delta(x)=w(x)+\psi_ \delta(x)   \qquad\forall x\in\Omega\;.
\end{equation}
Indeed, since $|\psi_\delta(x)|\le C\delta$ for every $x\in\Omega$ with $C>0$ independent of $\delta$ and $x$, we obtain \eqref{comparison} by simply taking the limit as $\delta \to 0$ in \eqref{comparison psi}. We will prove \eqref{comparison psi} following \cite[Section 3]{CrIsLi}.
Note first that, since $\overline{\Omega}$ is compact and $u,w_\delta\in C(\overline{\Omega})$, it holds that for any $\alpha>0$
\[
u(z)-w_\delta(z) \le M_\alpha < +\infty\,,
\]
where
$$
M_\alpha:=\sup_{(x,y) \in \Omega\times\Omega}\left(u(x)-w_\delta(y)-\frac{\alpha}{2}|x-y|^2\right)=u(x_\alpha)-w_\delta(y_\alpha)-\frac{\alpha}{2}|x_\alpha-y_\alpha|^2\,,
$$
for some $(x_\alpha,y_\alpha)\in\Omega\times\Omega$, since $u\le w_\delta$ on $\partial\Omega$. Assume now, by contradiction, that $u(z)>w_\delta(z)$ for some $z\in\Omega$, so that
\begin{equation}\label{Contradiction Hypothesis}
0 < u(z)-w_\delta(z) \le M_\alpha < +\infty\,.
\end{equation}
We will use equations \eqref{eq} and \eqref{eqII} to estimate $M_\alpha$ and contradict \eqref{Contradiction Hypothesis}. By applying Ishii's Lemma \cite[Theorem 3.2]{CrIsLi} to the function $u(x)-w_\delta(y)-\frac{\alpha}{2}|x-y|^2$ at the local maximum $(x_\alpha,y_\alpha)$, we have that there exist $X,Y\in \mathbb{R}^{d\times d}$ symmetric matrices such that
\begin{equation}\label{matrix ineq}
\begin{pmatrix}
X&0\\
0&-Y
\end{pmatrix}
\le 3\alpha
\begin{pmatrix}
I&-I\\
-I&I
\end{pmatrix}
\end{equation}
and
\[
(\alpha(x_\alpha-y_\alpha),X) \in J^{2,+}_\mathcal{O}u(x_\alpha) \ , \quad (\alpha(x_\alpha-y_\alpha),Y) \in J^{2,-}_\mathcal{O}w_\delta(y_\alpha).
\]
This implies that, by the hypotheses on $u$ and $w_\delta$ and the definition of viscosity solution
\begin{equation*}
F(x_\alpha,\alpha(x_\alpha-y_\alpha),X)=0 \quad , \qquad  F(y_\alpha,\alpha(x_\alpha-y_\alpha),Y)\ge\delta\,.
\end{equation*}
Therefore,
    \begin{equation}\label{Contradiction ineq}
    \begin{split}
        0<\delta&\le F(y_\alpha,\alpha(x_\alpha-y_\alpha),Y)-F(x_\alpha,\alpha(x_\alpha-y_\alpha),X)\\
        &=\varepsilon^2\mbox{tr}\left(Q(x_\alpha)X-Q(y_\alpha)Y\right)+\alpha\left(\nabla L(y_\alpha)-\nabla L(x_\alpha)\right)\cdot(x_\alpha-y_\alpha)\,.
    \end{split}
    \end{equation}
A contradiction will be now obtained by proving that the right hand side of \eqref{Contradiction ineq} tends to 0 as $\alpha\rightarrow \infty$. Let us analyze each of the terms of the right hand side of inequality \eqref{Contradiction ineq}:
\begin{enumerate}[label=\alph*)]
\item Diffusion term: this is done as in \cite[Example 3.6]{CrIsLi}. First, let us recall from \eqref{SDE intro} that \begin{equation*}
Q(x)=T(x)^*T(x)\qquad\mbox{with}\qquad T(x)=\begin{pmatrix}
\frac{1}{N}\nabla L_1(x)-\nabla L(x)\\
\vdots\\
\frac{1}{N}\nabla L_N(x)-\nabla L(x)
\end{pmatrix}
\end{equation*}
and observe that the following $2d \times 2d$ symmetric matrix is nonnegative:
\begin{equation*}
S(x_\alpha,y_\alpha) =
\begin{pmatrix}
T^*(x_\alpha) T(x_\alpha)& T^*(y_\alpha) T(x_\alpha)\\
T^*(x_\alpha) T(y_\alpha)& T^*(y_\alpha) T(y_\alpha)\\
\end{pmatrix}.
\end{equation*}
Thus, multiplying by $S(x_\alpha,y_\alpha)$ the inequality \eqref{matrix ineq}, and taking traces, we have
\begin{align*}
\mbox{tr} & \bigg( T^*(x_\alpha) T(x_\alpha)X- T^*(y_\alpha) T(y_\alpha)Y\bigg) = \mbox{tr}\left(S(x_\alpha,y_\alpha)\binom{X \ \phantom{-}0 \ }{\ 0 \ -Y}\right)\\
& \leq 3 \alpha \, \mbox{tr}\left(S(x_\alpha,y_\alpha)\binom{\phantom{-}I \ -I}{-I \ \phantom{-}I}\right) = 3 \alpha \, \mbox{tr}\bigg((T^*(x_\alpha)-T^*(y_\alpha))(T(x_\alpha)-T(y_\alpha))\bigg).
\end{align*}
Here, by the definition of $T$ and $L$, we have
\[
T(x_\alpha)-T(y_\alpha) = \begin{pmatrix}
\frac{1}{N}\sum_{j \neq 1}(\nabla L_j(y_\alpha) - \nabla L_j(x_\alpha))\\
\vdots\\
\frac{1}{N}\sum_{j \neq N}(\nabla L_j(y_\alpha) - \nabla L_j(x_\alpha))
\end{pmatrix}
\]
so
\[
\mbox{tr}\bigg((T(x_\alpha)-T(y_\alpha))^*(T(x_\alpha)-T(y_\alpha))\bigg) = \frac{1}{N^2}\sum_{i = 1}^N \bigg| \sum_{j \neq i}\Big(\nabla L_j(y_\alpha) - \nabla L_j(x_\alpha)\Big)\bigg|^2
\]
Hence, using the fact that $\nabla L_i$ are Lipschitz in $\bar\Omega$ for every $i\ge 1$, and denoting by $c_L$ the maximum of their Lipschitz constant, we deduce
\begin{align*}
\varepsilon^2 & \mbox{tr}\left(Q(x_\alpha)X-Q(y_\alpha)Y\right) = \varepsilon^2\;\mbox{tr}\bigg( T^*(x_\alpha) T(x_\alpha)X- T^*(y_\alpha) T(y_\alpha)Y\bigg)\\
& \leq \frac{3 \alpha \varepsilon^2}{N^2} \, \sum_{i = 1}^N \bigg| \sum_{j \neq i}\Big(\nabla L_j(y_\alpha) - \nabla L_j(x_\alpha)\Big)\bigg|^2\\
& \le 3 \varepsilon^2 N c_L^2 \alpha |x_\alpha - y_\alpha|^2.
\end{align*}
\item Drift term: again since $\nabla L_i$ are Lipschitz in $\bar\Omega$ for every $i\ge 1$, then
\begin{equation*}
\alpha\left(\nabla L(y_\alpha)-\nabla L(x_\alpha)\right)\cdot(x_\alpha-y_\alpha)\le c_L\; \alpha |x_\alpha-y_\alpha|^2\,.
\end{equation*}
\end{enumerate}
Now, we are in a position to obtain the contradiction from \eqref{Contradiction ineq} combining a) and b):
\begin{equation}
\begin{split}
0<\delta
&\le\varepsilon^2\mbox{tr}\left(Q(x_\alpha)X-Q(y_\alpha)Y\right)+\alpha\left(\nabla L(y_\alpha)-\nabla L(x_\alpha)\right)\cdot(x_\alpha-y_\alpha)\\
&\le \left(3\varepsilon^2 N c_L^2 + c_L\right)\;\alpha|x_\alpha-y_\alpha|^2\rightarrow 0\,,
\end{split}
\end{equation}
where the limit as $\alpha\rightarrow\infty$ is due to Lemma 3.1 of \cite{CrIsLi}.
\end{proof}
}

\begin{teo}\label{lower bound MET}
 {Let $R > 0$. Assume $L_i\in C^{1,1}(\overline{B_R(0)})$ for every $i\ge1$, let $0\le Q(x)\le\sigma I_{d\times d}$ for all $x \in B_R(0)$, and let $L$ be $\lambda$-convex in $B_{R}(0)$ with a minimum at $0$.} Let $x\in B_r(0)$ with $0< r \le R$ and let the Mean Exit Time $\mathbb{E}\big[\tau_{B_R(0)}^x\big]$ be a viscosity solution of \eqref{Intro Mean Exit Time}.  {with $\lambda>0$}.
Then,  {for every $\delta > 0$},
\[
w_{ {\delta}}(x) =  {(1-\delta)}\frac{R^2 - |x|^2}{2\varepsilon^2 d \sigma}
\]
is a  {classical solution (and hence a viscosity solution) of \eqref{eqI}.}
\end{teo}
\begin{proof}
Note that
\begin{align*}
\nabla w_{ {\delta}}(x) = - \frac{ {(1-\delta)}}{\varepsilon^2\sigma d}x\qquad\mbox{and}\qquad D^2w_{ {\delta}}(x)=-\frac{ {(1-\delta)}}{\varepsilon^2\sigma d}I_{d\times d}\,.
\end{align*}
Therefore,  {for every $x \in B_R(0)$}
\begin{align*}
 {F(x,\nabla w_\delta(x),D^2 w_\delta(x))} & =-\varepsilon^2\mbox{tr}\left(Q(x)D^2w_{ {\delta}}\right)+\nabla L(x)\cdot\nabla w_{ {\delta}}   {\, - 1}\\
& =  {(1-\delta)}\left(\frac{\mbox{tr}\left(Q(x)\right)}{\sigma d}-\frac{1}{\varepsilon^2\sigma d}x\cdot\nabla L(x)\right)  {- 1}\\
& \le  {(1-\delta)}\left(\frac{\mbox{tr}\left(Q(x)\right)}{\sigma d}-\frac{\lambda}{2\varepsilon^2\sigma d}|x|^2\right)  {- 1 \leq -\delta}\,,
\end{align*}
where we have used the $\lambda$-convexity of $L$ in the first inequality and the bound tr$\left(Q\right)\le\sigma d$ in the second inequality. Since $w_{ {\delta}}(x)=0$ if $|x|=R$, we conclude that $w_{ {\delta}}$ is a  {classical solution of \eqref{eqI}}.
\end{proof}

In order to obtain an upper bound for the Mean Exit Time, some non-degeneracy in the diffusion is required. Indeed, it suffices to have a positive lower bound of $Q$ in one direction and some smoothness condition in $L$.
\begin{teo}\label{upper bound MET}
Let $\Omega$ be a bounded domain, let $L_i\in C^{1,1}(\overline{\Omega})$ for every $i\ge1$, and let $Q(x) \geq 0$ for every $x\in \Omega$. Assume that there exist $\beta > 0$ and a vector $v\in\mathbb{S}^{d-1}$ such that
$$
v^TQ(x)v\ge\beta \quad \forall \ x \in \Omega.
$$
Let $\Lambda, c > 0$ be such that $\left(\nabla L(x)\cdot v\right)\left(v\cdot x + c\right)\le\Lambda (v \cdot x + c)^2$ for all $x\in \Omega$, and define $R_v :=\max\lbrace |v\cdot x + c| : x\in\Omega\rbrace$. Then, for every $\delta>0$,
$$
w_{ {\delta}}(x)=\frac{ {(1+\delta)}}{\Lambda}\left(e^{\frac{\Lambda R_v^2}{2\beta\varepsilon^2}}-e^{\frac{\Lambda (v\cdot x + c)^2}{2\beta\varepsilon^2}}\right)
$$
is a nonnegative classical solution (and hence a viscosity solution) of \eqref{eqII} in $\Omega$.
\end{teo}
\begin{proof}
Note that, since $\nabla L$ is continuous and $\Omega$ is bounded, the constants $\Lambda, c, R_v$ exist and are finite. Now, if we compute the derivatives of $\omega_\delta$, we get
\begin{align*}
\nabla w_\delta(x)&=-\frac{(1+\delta)}{\beta\varepsilon^2} e^{\frac{\Lambda (v\cdot x + c)^2}{2\beta\varepsilon^2}}(v\cdot x + c)v\\
D^2w_{\delta}(x)&=-\frac{(1+\delta)}{\beta\varepsilon^2} e^{\frac{\Lambda (v\cdot x + c)^2}{2\beta\varepsilon^2}}\left(\frac{\Lambda}{\beta\varepsilon^2}(v\cdot x + c)^2+1\right)v\otimes v\,.
\end{align*}
Therefore, using the assumptions on $Q$ and on $L$ along with $\mbox{tr}(Qv\otimes v)=v^TQv$, we obtain
\begin{align*}
& F(x,\nabla w_\delta,D^2w_\delta) = -\varepsilon^2\mbox{tr}\left(Q(x)D^2w_{\delta}\right)+\nabla L(x)\cdot\nabla w_{\delta}-1\\
& = (1+\delta)e^{\frac{\Lambda (v\cdot x + c)^2}{2\beta\varepsilon^2}} \left(\frac{\mbox{tr}(Q(x)v\otimes v)}{\beta} \Big(\frac{\Lambda}{\beta\varepsilon^2}(v\cdot x + c)^2+1\Big)-\frac{1}{\beta\varepsilon^2} (\nabla L(x)\cdot v)(v\cdot x + c) \right) -1\\
& \ge (1+\delta) \left(\frac{\Lambda}{\beta\varepsilon^2}(v\cdot x)^2+1-\frac{\Lambda}{\beta\varepsilon^2}  (v\cdot x)^2 \right) -1 = \delta\,,
\end{align*}
since $e^{\frac{\Lambda (v\cdot x + c)^2}{2\beta\varepsilon^2}}\ge 1$. Moreover, by definition of $R_v$, for $x \in \Omega$ we have that $(v\cdot x + c)^2\le R_v^2$ and hence $w_\delta(x)\ge 0$.
\end{proof}
\begin{rem}[The case of unbounded domains]\label{rem:unboundedcomparison}
A close inspection of the proof of the previous theorem reveals that the result is still valid in an unbounded domain $\Omega$, under the assumption of the existence of a nondegenerate direction (characterized by the vector $v$ and the parameter $\beta$), the existence of $\Lambda, c > 0$ such that $\left(\nabla L(x)\cdot v\right)\left(v\cdot x + c\right)\le\Lambda (v \cdot x + c)^2$ for all $x\in \Omega$, and the condition that $R_v :=\max\lbrace |v\cdot x + c| : x\in\Omega\rbrace < \infty$, provided the validity of the comparison principle on $\Omega$. We are not aware of a proof of the degenerate comparison principle Theorem \ref{th:DCP} in unbounded domains.
\end{rem}

Thanks to the semigroup property of the Markovian process described by \eqref{SDE intro}, we can use the subsolution and supersolution of Theorems \ref{lower bound MET} and \ref{upper bound MET} starting at any time $t_0>0$. This allows us to estimate the time that the learning process will spend after the concentration phenomena occurred in the drift regime.

We are now in the position to prove Theorem \ref{Thm Lower MET} and \ref{Thm upper bound MET}. Note that, in order to make use of the classical comparison principle \cite{CrIsLi} in the next proofs, we are requiring that the mean exit time is a viscosity solution of \eqref{Intro Mean Exit Time}.

\vspace{2mm}
\noindent{\it Proof of Theorem \ref{Thm Lower MET}.}
 {By the degenerate} comparison principle  {of Theorem \ref{th:DCP}, Part i)}, we obtain that
$$u(x)= \mathbb{E}\left[\tau_{B_R(0)}^x\right]\ge w(x)=\frac{R^2 -|x|^2}{2\varepsilon^2\sigma d} \,,$$
since $u$ is a solution of \eqref{Intro Mean Exit Time} by construction and $w$ is a subsolution of \eqref{Intro Mean Exit Time} by Theorem \ref{lower bound MET}. Note that the Markovian property of the stochastic process \eqref{SDE intro} allows the time shifting of the initial datum. We conclude recalling that $|x|\le r$.
    \hfill \qed

By a similar argument we can obtain an upper bound for the mean exit time.

\vspace{2mm}
\noindent
 {
{\it Proof of Theorem \ref{Thm upper bound MET}}
Since $u$ is a solution of \eqref{Intro Mean Exit Time} by construction and $w$ is a supersolution of \eqref{Intro Mean Exit Time} by Theorem \ref{upper bound MET}, by the degenerate comparison principle of Theorem \ref{th:DCP}, Part ii), we obtain that
$$u(x)= \mathbb{E}\left[\tau_{\Omega}^x\right]\le w(x)=\frac{1}{\Lambda}\left(e^{\frac{\Lambda R_v^2}{2\beta\varepsilon^2}}-e^{\frac{\Lambda (v\cdot x)^2}{2\beta\varepsilon^2}}\right).$$
To complete the proof, it only remains to estimate the constants $c, \Lambda, R_v$. We recall that $\Omega \subseteq B(0,R_\Omega)$, and that $|v| = 1$. Let $c = R_\Omega + 1$. Then
\[
\frac{|\nabla L(x)\cdot v|}{|v\cdot x + c|} \leq \frac{M}{\min_{x \in \Omega}|v\cdot x + c|} \leq M
\]
where the last inequality follows by the fact that $|v\cdot x + c| \geq |c| - |x| \geq 1$. Thus, setting $\Lambda = M$, and noting that $|v\cdot x + c| \leq 2(R_\Omega + 1)$, we get
\[
\hspace{140pt} \mathbb{E}\left[\tau_{\Omega}^x\right] \le \frac{1}{M}\left(e^{\frac{2M(R_\Omega + 1)^2}{\beta\varepsilon^2}} - 1\right). \hspace{140pt} \qed
\]
}

\section{Asymptotic behaviour of the SGD}\label{sec Asymptotic}

In this section, we analyze the asymptotic behaviour of the distribution of parameters optimized by SGD. The main motivation is neural network training: starting from an initial parameter configuration and a dataset, we aim to determine the final parameter probability distribution after prolonged SGD implementation.  We will study this problem within the framework of the continuous stochastic process that approximates the SGD and its associated probability measure. Here we will address  {question Q3 from the introduction: \emph{Does the distribution of neural network parameters converge using SGD?}}

\

To answer this question is delicate, and we propose two different methods: duality and entropy methods. Both methods provide partial results about the convergence of the transition probability of the stochastic approximation to an invariant probability measure.

The duality method for a wide class of nondegenerate Fokker-Planck equations was introduced by Porretta in 2024 \cite{Por}. The first step consists of proving the time decay of oscillations of the solutions of the dual equation, often referred to as Ornstein-Uhlenbeck or backward Kolmogorov. As a second step, this decay can be translated into a decay of the moments of the probability measure that solves the original Fokker-Planck equation. Finally, the third step consists of showing that the convergence of the moments implies the existence of an invariant probability measure and the convergence to it. The main drawback of this method is that, to the best of our knowledge, it only works with nondegenerate diffusion, i.e. $0<\delta I\le Q\le \sigma I$ with $\sigma\geq\delta>0$. In order to adapt this method to the present setting, we propose a variant of the SGD adding some random noise at each step of the learning process.

A nowadays more standard approach to study the asymptotic behavior of Fokker-Planck equations is the entropy method \cite{Bakry-Emery,Bakry-Ledoux}, also known as the Bakry-Emery method. The relative entropy - a special Lyapunov function - quantifies the distance between the probability at time $t$ and the invariant probability measure. The first stability property, i.e. convergence to an invariant probability measure, can be proven by showing that the relative entropy decays to 0 as time goes to infinity. A more delicate analysis is then required to quantify the possible decay rates of the entropy. In general, even for the standard Fokker-Planck equation ($Q = I$ and $L(x) = |x|^2$), exponential decay holds only under additional assumptions, typically the boundedness of the first and second moment for the initial data (see e.g. \cite[Section 4.1]{Vaz18}). A quantitative exponential decay of the relative entropy can be proven by means of a weighted Poincaré type inequality which in turn is equivalent to the entropy-entropy production inequality.

In the present setting, the exponential decay follows by establishing a suitable Poincar\'e inequality obtained by using the results of \cite{Arnold2014} and  \cite{Menz}. The main limitations of this approach are that either the diffusion matrix must be non-degenerate with $Q\equiv \sigma I$ or, when $Q$ is allowed to be degenerate, the loss function $L$ must be quadratic.

\subsection{Duality method for Noisy SGD}\label{sec Duality method}

Recall that, for a data set $\lbrace (x_i,y_i) \rbrace_{i=1}^N$ and a total loss function
$$L(\theta)=\frac{1}{N}\sum_{i=1}^{N}L_i(\theta)\,,$$
where each $L_i(\theta)$ for $i=1,\dots,N$ is the partial loss function associated to the datum $(x_i,y_i)$, the SGD with constant learning rate $\eta>0$ and batch size $|B_n|=\b>0$ reads as in \eqref{SGD Main}
\[
\theta_{n+1}=\theta_n-\frac{\eta}{\b}\sum_{b_i\in B_n}\nabla L_{b_i}(\theta_n)\qquad\forall n\ge 0\,,
\]
with $B_n=\lbrace b_i\rbrace_{i=1}^{\b}\subset\lbrace 1,2,\dots,N\rbrace$ being a batch of indexes chosen with uniform distribution at each step $n$. Its SDE counterpart, from \cite{LiTaiWeinan2019}, as defined by \eqref{SDE intro}, reads
\begin{align*}
\d X_t=-\nabla L(X_t)\dt +\sqrt{\frac{\eta}{\b}}\Sigma(X_t)\d W_t\,,
\end{align*}
where the noise is always degenerate, because rank$(\Sigma(x)) \le N -1$ (and, in overparameterized scenarios, the degeneracy is more severe).

In order to obtain a nondegenerate SDE, we propose another learning iteration with a suitable extra noise $\lbrace Z_n\rbrace_{n\ge 0}$, which we have called Noisy Stochastic Gradient Descent in \eqref{NSGD}. This is
\[
\theta_{n+1}=\theta_n-\frac{\eta}{\b}\sum_{b_i\in B_n}\nabla L_{b_i}(\theta_n)+\eta Z_n.
\]
In what follows, our choice for $\lbrace Z_n\rbrace_{n\ge 0}$ is that of a family of i.i.d. Gaussian variables
$$
Z_n\sim\mathcal{N}(0, \delta I_{d\times d})\qquad\forall n\ge 0 \, .
$$
This extra noise does not depend on the characteristics of the learning process, and a priori may provide an invariant probability measure that is independent of the data set and the loss function. However, more sophisticated choices of noise could be considered, based on the loss function. We have chosen the simplest noise that allows to apply the results of \cite{Por}, in the form of Theorem \ref{Theorem oscillation decay} below, indeed, our choice of noise leads us to the following nondegenerate SDE.
\begin{lem}
{The process \eqref{NSGD} is the Euler-Maruyama approximation of}
\begin{align}\label{Q lambda SDE}
\d X_t=-\nabla L(X_t)\dt +\sqrt{\eta\, Q_\delta(X_t)}\d W_t\,,
\end{align}
with $Q_\delta(x):= \b^{-1} Q(x)+\delta I_{d\times d}$.
\end{lem}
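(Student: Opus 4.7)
The plan is to verify that the one-step increment of \eqref{NSGD} agrees, in the weak sense of Li-Tai-E \cite{LiTaiWeinan2017, LiTaiWeinan2019}, with the Euler-Maruyama increment of the claimed SDE at time step $h=\eta$. First, I would rewrite the NSGD update by adding and subtracting the full gradient:
\begin{equation*}
\theta_{n+1}-\theta_n = -\eta\,\nabla L(\theta_n) + \eta\, V_n(\theta_n) + \eta\, Z_n,
\qquad V_n(\theta_n):=\nabla L(\theta_n)-\frac{1}{\b}\sum_{b_i\in B_n}\nabla L_{b_i}(\theta_n).
\end{equation*}
The term $-\eta\,\nabla L(\theta_n)$ is exactly the Euler-Maruyama drift $b(X_n)\,h$ associated to $b(x)=-\nabla L(x)$ with $h=\eta$. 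This identifies the drift component of the limiting SDE.

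Next I would treat the noise. Since the batch indexes $b_i$ are i.i.d.\ uniform on $\{1,\dots,N\}$ and $\mathbb{E}[\nabla L_{b_i}(\theta_n)\mid \theta_n]=\nabla L(\theta_n)$, the random vector $V_n(\theta_n)$ is mean-zero conditional on $\theta_n$, with conditional covariance
\begin{equation*}
\operatorname{cov}\bigl[V_n(\theta_n)\mid \theta_n\bigr]=\frac{1}{\b}\,\operatorname{cov}\bigl[\nabla L_{b_i}(\theta_n)\bigr]=\frac{1}{\b}\,Q(\theta_n).
\end{equation*}
The extra Gaussian $Z_n\sim\mathcal{N}(0,\delta I_{d\times d})$ is independent of $V_n$, so the total noise increment $\eta V_n+\eta Z_n$ is mean-zero with conditional covariance
\begin{equation*}
\eta^2\Bigl(\tfrac{1}{\b}Q(\theta_n)+\delta I_{d\times d}\Bigr)=\eta^2\, Q_\delta(\theta_n)=\bigl(\sqrt{\eta\,Q_\delta(\theta_n)}\bigr)\bigl(\sqrt{\eta\,Q_\delta(\theta_n)}\bigr)^{\!T}\eta,
\end{equation*}
which coincides with the conditional covariance of the Euler-Maruyama noise $\sqrt{\eta\,Q_\delta(X_n)}(W_{(n+1)\eta}-W_{n\eta})$ for the target SDE.

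With drift and diffusion matched, I would then invoke the weak-order-one approximation theorem of \cite{LiTaiWeinan2017} (and its generalization in \cite{FengLiLiu}), which states that an iteration of the form $\theta_{n+1}=\theta_n+\eta\, b(\theta_n)+\sqrt{\eta}\,\xi_n(\theta_n)$ is a first order weak approximation of the SDE $\d X_t=b(X_t)\dt+\sigma(X_t)\d W_t$ whenever the conditional mean and covariance of $\xi_n(\theta_n)$ match $0$ and $\sigma(\theta_n)\sigma(\theta_n)^T$ up to $O(\eta)$, and the higher conditional moments remain bounded. Here $\sqrt{\eta}\,\xi_n(\theta_n):=\eta V_n(\theta_n)+\eta Z_n$, so $\xi_n(\theta_n)=\sqrt{\eta}\bigl(V_n(\theta_n)+Z_n\bigr)$ has the required first two moments by the computation above, and uniform integrability of the higher moments follows from standard growth assumptions on $\nabla L_i$ (plus Gaussianity of $Z_n$).

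The main obstacle, as in \cite{LiTaiWeinan2017}, is not the matching of moments, which is essentially algebraic, but rather the non-Gaussianity of $V_n$: its higher cumulants do not vanish, so the equality with Euler-Maruyama cannot hold pathwise or in the strong sense. This is precisely what forces the statement to be interpreted as a \emph{weak} approximation, and why the result rests on the test-function analysis carried out in \cite{LiTaiWeinan2017, FengLiLiu}. Once this framework is invoked, the proof reduces to the covariance identity displayed above.
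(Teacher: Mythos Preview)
Your proposal is correct and follows essentially the same approach as the paper: match the first two conditional moments of the NSGD increment with those of the Euler--Maruyama increment of the target SDE, then invoke the weak-approximation machinery of \cite{LiTaiWeinan2017} (which the paper restates via Milstein's theorem in Appendix~\ref{app: NSGD}). Your drift/noise decomposition and the covariance identity $\operatorname{cov}[V_n+Z_n\mid\theta_n]=\b^{-1}Q(\theta_n)+\delta I_{d\times d}=Q_\delta(\theta_n)$ are exactly the computations the paper performs on $\bar{\Lambda}=\theta_1-\theta_0$, just organized slightly more conceptually.
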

\begin{proof}
  The proof follows by a slight modification of the one in \cite[Theorem 1]{LiTaiWeinan2017}. See Appendix \ref{app: NSGD} for the details.
\end{proof}
By standard theory, we can associate to \eqref{Q lambda SDE} the following Ornstein-Uhlenbeck type equation for $u(t,x) = \mathbb{E}[X_t]$
\begin{equation}\label{O-U}
\begin{cases}
\partial_t u=\frac{\eta}{2}\mbox{tr}(Q_\delta(x)D^2u)-\nabla L(x)\cdot\nabla u=:-\L_\delta u\\
u(0,x)=u_0(x)\,,
\end{cases}
\end{equation}
Analogously, the probability density function of $X_t$ satisfies the $L^2(\R^d)$-dual equation of \eqref{O-U}, that is, the Fokker-Planck equation
\begin{equation}\label{F-P}
\begin{cases}
\partial_t \rho =\nabla\cdot\left(\frac{\eta}{2}\nabla\cdot(\rho\, Q_\delta(x))+\rho\nabla L(x)\right)=:-\L^*_\delta \rho\\
\rho(0,x)=\rho_0(x)\,,
\end{cases}
\end{equation}
where $\nabla\cdot(\rho \,Q_\delta(x))$ is the divergence taken column-wise.

The goal is to prove convergence results for the distribution of the parameters $\theta$ in \eqref{NSGD}. In order to do that, we prove that there exists a stationary state of equation \eqref{F-P} and the solution $\rho$ converges to it as time goes to infinity. We are in the position to use the theory of \cite{Por} and establish the asymptotic behavior of the above non-degenerate Ornstein-Uhlenbeck and Fokker-Planck type equations, by exploiting the duality between \eqref{O-U} and \eqref{F-P}, and proving that the weighted oscillation of $u$ implies convergence of the moments of $m$.
The weighted oscillation that we will consider is the following
$$
[u]_{\langle\cdot\rangle^k} = \sup\limits_{x,y\in\R^d}\frac{|u(x)-u(y)|}{\langle x\rangle^k+\langle y\rangle^k}\,,
$$
with $\langle x\rangle^k=(1+|x|^2)^{k/2}$. This turns out to be a Lyapunov function of $\L_\delta$ (see \cite{Por}).

\begin{teo}\label{Theorem oscillation decay}{\rm \cite[ {Theorem 4.2}]{Por}}
Assume that the diffusion matrix $Q_\delta(x)=\b^{-1}Q(x)+\delta I_{d\times d}$ with $\delta>0$ satisfies that there exist $\sigma_0,\sigma_1>0$ such that
\begin{equation}\label{Assumption on Sigma}
	\|\sqrt{Q_\delta(\cdot)}\|_\infty\le\sigma_0 \ , \textnormal{and } \|\sqrt{Q_\delta(x)}-\sqrt{Q_\delta(y)}\|\le \sigma_1|x-y| \qquad\forall x,y\in \R^d\,.
\end{equation}
	Assume that the drift term $\nabla L(x)$ satisfies that there exist $\alpha,R>0$ and $\gamma\ge 2$ such that
	\begin{equation}\label{confining potential1}
	\nabla L(x)\cdot x\ge\alpha|x|^\gamma\qquad\forall x\in \R^d\quad\mbox{with}\quad |x|\ge R,
	\end{equation}
    and	 there exists $c_0>0$ such that
	\begin{equation}\label{Assumption on L}
	(\nabla L(x)-\nabla L(y))\cdot(x-y)\ge-c_0|x-y|\qquad\quad\forall x,y\in\R^d\,.
	\end{equation}
	If $\rho\in C([0,T):\mathcal{M}_k(\mathbb{R}^d))$ is a  {weak} solution of \eqref{F-P}, then
	\begin{equation}\label{oscillation decay.2}
		\|\rho(t)\|_{\mathcal{M}_k}\le K_\delta\, e^{-\omega t}\|\rho_0\|_{\mathcal{M}_k}\qquad\quad\forall t\in(0,T)\,,
	\end{equation}
	provided that $\rho_0\in\mathcal{M}_k(\mathbb{R}^d)$ and $\int_{\mathbb{R}^d}\d\rho_0(x)=0$.
\end{teo}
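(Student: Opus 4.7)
The strategy is to exploit the duality between the Fokker-Planck equation \eqref{F-P} and its dual Ornstein-Uhlenbeck equation \eqref{O-U}, converting an oscillation decay for $u$ into a moment decay for $\rho$. Given any smooth test $\phi$ with $[\phi]_{\langle\cdot\rangle^k}<\infty$, let $u(t,x)$ solve \eqref{O-U} with initial datum $\phi$. The duality between the adjoint semigroups generated by $\L_\delta$ and $\L_\delta^*$ yields
$$
\int_{\R^d}\phi\,d\rho(T)=\int_{\R^d}u(T,x)\,d\rho_0(x).
$$
Since by hypothesis $\int d\rho_0=0$, for any fixed $y_0\in\R^d$ the right-hand side equals $\int_{\R^d}(u(T,x)-u(T,y_0))\,d\rho_0(x)$, whose absolute value is bounded by $2\,[u(T)]_{\langle\cdot\rangle^k}\,\|\rho_0\|_{\mathcal{M}_k}$ upon choosing $y_0=0$. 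Dualising over the set of admissible $\phi$ then reduces the problem to establishing the weighted oscillation decay $[u(T)]_{\langle\cdot\rangle^k}\le K_\delta\,e^{-\omega T}\,[\phi]_{\langle\cdot\rangle^k}$.

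The heart of the argument is this oscillation decay. I would proceed via a Feynman-Kac coupling: two solutions $X_t^x,X_t^y$ of the SDE \eqref{Q lambda SDE} driven by the \emph{same} Brownian motion satisfy $u(T,x)=\mathbb{E}[\phi(X_T^x)]$, so that
$$
\frac{|u(T,x)-u(T,y)|}{\langle x\rangle^k+\langle y\rangle^k}\le [\phi]_{\langle\cdot\rangle^k}\;\mathbb{E}\!\left[\frac{\langle X_T^x\rangle^k+\langle X_T^y\rangle^k}{\langle x\rangle^k+\langle y\rangle^k}\right].
$$
The key Lyapunov tool is to show that $V(x)=\langle x\rangle^k$ satisfies $-\L_\delta V\le -\omega V+C\,\mathbf{1}_{B_R}$ on $\R^d$: indeed, by \eqref{confining potential1} the drift contribution $\nabla L\cdot\nabla V\sim k\,\alpha\langle x\rangle^{k-2}|x|^\gamma$ dominates the uniformly bounded diffusive term $\tfrac{\eta}{2}\,\mathrm{tr}(Q_\delta D^2V)$ outside a large ball, and the exponent $\gamma\ge 2$ generates an exponential rate $\omega>0$. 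This Lyapunov inequality yields $\mathbb{E}[V(X_T^x)]\le e^{-\omega T}V(x)+C/\omega$, and plugged into the coupling inequality it provides the weighted oscillation decay, with a constant $K_\delta$ that picks up the ellipticity bound $\delta$ through the Lipschitz constant of $\sqrt{Q_\delta}$ in \eqref{Assumption on Sigma}.

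The main obstacle is the passage from the pointwise Feynman-Kac estimate to the weighted oscillation seminorm. The naive Lyapunov bound produces an additive remainder $C/\omega$ that is not multiplicative in $\langle x\rangle^k+\langle y\rangle^k$, and this must be absorbed either by exploiting the strict positivity of the denominator (which bounds the remainder uniformly) or by working on the pair process $(X^x,X^y)$ with a shifted weight. The one-sided condition \eqref{Assumption on L} plays its role here by ensuring that the coupled trajectories cannot separate too fast, keeping the ratio inside the expectation integrable against $\rho_0$. Finally, the constant $K_\delta$ inevitably degenerates as $\delta\to 0$, which is precisely why the subsequent Section \ref{sec:existence} must invoke a separate approximation argument to recover information about the truly degenerate SGD problem.
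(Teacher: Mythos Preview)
The paper does not prove this theorem: it is stated with the attribution \cite[Theorem 3.5.]{Por} and is used as a black box from Porretta's work, so there is no ``paper's own proof'' to compare against. What the paper does describe, in the surrounding text, is that Porretta's argument proceeds by duality --- first proving oscillation decay for the Ornstein--Uhlenbeck solution $u$, then transferring it to moment decay for $\rho$ --- and your sketch reproduces that architecture correctly.

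That said, your sketch has a genuine gap at the point you yourself flag. The Lyapunov estimate $\mathbb{E}[V(X_T^x)]\le e^{-\omega T}V(x)+C/\omega$ inserted into the coupling bound gives
\[
[u(T)]_{\langle\cdot\rangle^k}\le [\phi]_{\langle\cdot\rangle^k}\Big(e^{-\omega T}+\tfrac{C/\omega}{\inf(\langle x\rangle^k+\langle y\rangle^k)}\Big),
\]
and since the denominator is merely bounded below by $2$, the right-hand side does \emph{not} tend to zero as $T\to\infty$. ``Exploiting strict positivity of the denominator'' yields only a uniform bound, not decay; and saying that condition \eqref{Assumption on L} keeps the coupled trajectories from separating is not the missing ingredient either --- it controls growth of $|X^x-X^y|$, not contraction. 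What is actually needed, and what makes the constant $K_\delta$ blow up as $\delta\to0$, is a local mixing step: once the Lyapunov drift has brought both coupled trajectories into a fixed ball, one uses the uniform ellipticity $Q_\delta\ge\delta I$ to couple them (via a reflection coupling, a local Doeblin/Harnack estimate, or an equivalent device) and obtain a genuine contraction of the oscillation on that ball. The global decay then follows by iterating the return-to-ball and the in-ball contraction. Your sketch names the obstacle but does not supply this mechanism, so as written it does not close.
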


The above theorem is written for zero mean valued solutions. As shown in \cite{Por}, it can be easily extended to the case of non-zero mean valued solutions  with finite $k$-moments: the linearity of the equation allows to show both existence of a stationary solution $\rho_\infty^\delta$, and to show the convergence $\rho(t)\xrightarrow[]{t\to\infty}\rho_\infty^\delta$ in appropriate topologies, noticing that the mass of both $\rho$
 and $\rho^\delta_\infty$ need to be the same.
\begin{cor}{\rm \cite[Theorem 5.7]{Por}}\label{Asymptotic convergence nondegenerate}
	Under the assumptions of Theorem \ref{Theorem oscillation decay}, if $\rho_0\in \mathcal{P}_k(\mathbb{R}^d)$, then there exists a unique stationary measure $\rho_\infty^\delta\in\mathcal{P}(\mathbb{R}^d)$ such that
	\begin{equation}
		\L^*_\delta\rho_\infty^\delta=0\qquad\quad\mbox{in}\quad\mathbb{R}^d\,.
	\end{equation}
	Moreover, it holds that
	\begin{equation}
		\|\rho(t)-\rho_\infty^\delta\|_{\mathcal{P}_k}\le K_\delta\,e^{-\omega t}\qquad\quad\forall t>0\,.
	\end{equation}
\end{cor}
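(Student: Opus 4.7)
The plan is to deduce the corollary from Theorem \ref{Theorem oscillation decay} via a Cauchy-sequence argument for the solution semigroup, using in a crucial way the linearity of \eqref{F-P}: differences of probability measures carry zero mass, so the theorem becomes a contraction on orbits even though it only applies to zero-mass data. Let $S_t:\mathcal{M}_k(\mathbb{R}^d)\to\mathcal{M}_k(\mathbb{R}^d)$ denote the solution semigroup of \eqref{F-P}. The proof splits naturally into existence, stationarity, decay, and uniqueness.

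For existence, I would fix $\rho_0\in\mathcal{P}_k(\mathbb{R}^d)$ and observe that, for all $t,s\geq 0$, the measure $S_{t+s}\rho_0-S_t\rho_0=S_t(S_s\rho_0-\rho_0)$ is the time-$t$ evolution of a zero-mass element of $\mathcal{M}_k$. Applying Theorem \ref{Theorem oscillation decay} to it yields
\[
\|S_{t+s}\rho_0-S_t\rho_0\|_{\mathcal{M}_k}\le K_\delta\, e^{-\omega t}\,\|S_s\rho_0-\rho_0\|_{\mathcal{M}_k}.
\]
Provided the orbit $\{S_s\rho_0\}_{s\geq 0}$ is uniformly bounded in $\mathcal{M}_k$, the right-hand side is controlled uniformly in $s$ and tends to $0$ as $t\to\infty$, so $\{S_t\rho_0\}_{t\geq 0}$ is Cauchy in $\mathcal{M}_k$. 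I set $\rho_\infty^\delta:=\lim_{t\to\infty}S_t\rho_0$. The semigroup identity $S_tS_s\rho_0=S_{t+s}\rho_0$ together with continuity of $S_t$ in $\mathcal{M}_k$ gives $S_t\rho_\infty^\delta=\rho_\infty^\delta$ for every $t\geq 0$, which is exactly $\L^*_\delta\rho_\infty^\delta=0$ in the distributional sense of Definition \ref{weak solution}. Conservation of mass along the flow gives $\rho_\infty^\delta\in\mathcal{P}(\mathbb{R}^d)$.

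For the quantitative decay, I would apply Theorem \ref{Theorem oscillation decay} directly to $\rho(t)-\rho_\infty^\delta=S_t(\rho_0-\rho_\infty^\delta)$: the initial datum has zero mass since both $\rho_0$ and $\rho_\infty^\delta$ are probability measures, so the theorem yields exactly the estimate $\|\rho(t)-\rho_\infty^\delta\|_{\mathcal{P}_k}\leq K_\delta\,e^{-\omega t}\,\|\rho_0-\rho_\infty^\delta\|_{\mathcal{M}_k}$, with the right-hand side finite thanks to the $\mathcal{M}_k$-bound on $\rho_\infty^\delta$ inherited from the Cauchy construction. Uniqueness follows from the same trick: any second stationary probability measure $\tilde\rho_\infty^\delta$ makes $\mu:=\rho_\infty^\delta-\tilde\rho_\infty^\delta$ a zero-mass stationary element of $\mathcal{M}_k$, and Theorem \ref{Theorem oscillation decay} forces $\|\mu\|_{\mathcal{M}_k}\leq K_\delta\,e^{-\omega t}\|\mu\|_{\mathcal{M}_k}$ for every $t>0$, hence $\mu=0$.

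The main technical obstacle is the uniform $\mathcal{M}_k$-bound for the orbit $\{S_s\rho_0\}_{s\geq 0}$, on which the whole Cauchy argument rests. This requires a Lyapunov-type estimate on $\int \langle x\rangle^k\,\d\rho(s,x)$, where the coercivity \eqref{confining potential1} of $\nabla L\cdot x$ must absorb the contribution coming from the second-order term generated by $Q_\delta$; the sublinear lower bound \eqref{Assumption on L} and the growth control of $\sqrt{Q_\delta}$ in \eqref{Assumption on Sigma} are the quantitative ingredients that make this computation go through. Once this moment bound is in place, the remainder of the argument is essentially formal and relies only on the linearity of \eqref{F-P} and the zero-mass decay of Theorem \ref{Theorem oscillation decay}.
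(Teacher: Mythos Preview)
Your proposal is correct and matches the approach the paper indicates. The paper does not actually prove this corollary; it cites \cite[Theorem 5.7]{Por} and only sketches in one sentence that ``the linearity of the equation allows to show both existence of a stationary solution $\rho_\infty^\delta$, and to show the convergence $\rho(t)\xrightarrow[]{t\to\infty}\rho_\infty^\delta$ in appropriate topologies, noticing that the mass of both $\rho$ and $\rho^\delta_\infty$ need to be the same.'' Your Cauchy-sequence argument via zero-mass differences is precisely the standard way to unpack that sentence, and your identification of the uniform $\mathcal{M}_k$-bound as the only genuine technical input is on target: the paper records exactly that ingredient separately as Lemma~\ref{Lyapunov function}.
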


\begin{rem}[About convergence rates]\rm \begin{enumerate}[label=(\roman*)]
\item In the next section we will consider the limits as $\delta\rightarrow0^+$. A careful examination of the proof of Theorem \ref{Theorem oscillation decay} reveals that the constant $K_\delta$ of formula \eqref{oscillation decay.2} blows up when $\delta\rightarrow 0^+$. On the other hand, \textit{the convergence rate $\omega>0$ only depends on $\alpha$ and $k$, but not on $\delta$.} However, it can not be computed explicitly because the proof is not constructive. Also, as far as we know, the above convergence results cannot be applied to the general framework of degenerate diffusion matrices, as described in \cite{Por}.
\item  Corollary \ref{Asymptotic convergence nondegenerate} holds for the nondegenerate equation\eqref{F-P} and proves exponential convergence of solution $\rho(t)$ to the unique invariant measure $\rho_\infty$ with the same mass. This holds whenever the growth of the potential $L$ is (super)quadratic, namely $\gamma\ge 2$. A complementary result of \cite{Por}, that we not consider here, allows to show the same convergence, but only with polynomial decay rates, in the case when $\gamma\in(0,2)$. Indeed, since we are considering approximations of Machine Learning processes, in practice, one can always modify $L$ at ``infinity'' to have (super)quadratic growth.
\item To the best of our knowledge, convergence to a stationary measure in the case of degenerate Fokker-Planck type equations with a non-quadratic loss function $L$, has not been proved.
\end{enumerate}
\end{rem}
We conclude this section by recalling a technical lemma of \cite{Por}, used in the proof of the above results, that we will need in what follows.

\begin{lem}\label{Lyapunov function}\cite[Lemma 3.1]{Por}
If there exist $\alpha,\gamma,R>0$ such that
\begin{equation}\label{confining potential}
	\nabla L(x)\cdot x\ge\alpha|x|^\gamma\qquad\forall x\in \R^d\quad\mbox{with}\quad |x|\ge R,
\end{equation}
and there exists $\sigma<+\infty$ such that
\begin{equation}
	\|Q_\delta(x)\|_\infty\le \sigma \qquad\quad\forall x\in \R^d\,,
\end{equation}
then, for every $\beta>0$ there exists $K_\beta>0$ such that
\begin{equation}
	\L_\delta \langle x\rangle^k \ge(\alpha-\beta)k \, \langle x\rangle^{k+\gamma-2}-K_\beta\qquad\quad\forall x\in\R^d
\end{equation}
\end{lem}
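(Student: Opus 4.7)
The plan is to compute $\mathcal{L}_\delta\langle x\rangle^k$ directly, bound the diffusive terms by $\langle x\rangle^{k-2}$ using the norm bound on $Q_\delta$, exploit the confining growth \eqref{confining potential} of $\nabla L\cdot x$ in the far field (after comparing $|x|^\gamma$ with $\langle x\rangle^\gamma$), and finally absorb the contribution from a compact set into the constant $K_\beta$.

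First, I would compute the derivatives of $\langle x\rangle^k=(1+|x|^2)^{k/2}$, namely
\begin{equation*}
\nabla\langle x\rangle^k = k\langle x\rangle^{k-2}x,\qquad D^2\langle x\rangle^k = k\langle x\rangle^{k-2}I_{d\times d} + k(k-2)\langle x\rangle^{k-4}\,x\otimes x,
\end{equation*}
and, recalling that $\mathcal{L}_\delta\varphi=-\tfrac{\eta}{2}\mathrm{tr}(Q_\delta(x)D^2\varphi)+\nabla L(x)\cdot\nabla\varphi$, obtain
\begin{equation*}
\mathcal{L}_\delta\langle x\rangle^k = -\tfrac{\eta k}{2}\langle x\rangle^{k-2}\mathrm{tr}(Q_\delta(x)) - \tfrac{\eta k(k-2)}{2}\langle x\rangle^{k-4}x^TQ_\delta(x)x + k\langle x\rangle^{k-2}\,\nabla L(x)\cdot x.
\end{equation*}
The bound $\|Q_\delta\|_\infty\le\sigma$ gives $|\mathrm{tr}(Q_\delta(x))|\le d\sigma$ and $0\le x^TQ_\delta(x)x\le\sigma|x|^2\le\sigma\langle x\rangle^2$, so the diffusive contribution is bounded below by $-C_1\langle x\rangle^{k-2}$, where $C_1=\tfrac{\eta k}{2}(d+|k-2|)\sigma$.

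Next, for $|x|\ge R$ I would invoke the confining hypothesis to get $\nabla L(x)\cdot x\ge\alpha|x|^\gamma$. Since $|x|^\gamma = \langle x\rangle^\gamma(1-\langle x\rangle^{-2})^{\gamma/2}$, given any $\beta>0$ I can pick $R'\ge R$ large enough that $\alpha|x|^\gamma\ge(\alpha-\tfrac{\beta}{2})\langle x\rangle^\gamma$ for $|x|\ge R'$. Because $\gamma>0$, one may additionally choose $R''\ge R'$ large enough that $C_1\langle x\rangle^{k-2}\le \tfrac{\beta}{2}k\langle x\rangle^{k+\gamma-2}$ on $\{|x|\ge R''\}$. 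Combining these two estimates with the decomposition above yields
\begin{equation*}
\mathcal{L}_\delta\langle x\rangle^k \ \ge\ (\alpha-\beta)k\langle x\rangle^{k+\gamma-2}\qquad\text{for all }|x|\ge R''.
\end{equation*}

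Finally, on the compact set $\{|x|\le R''\}$, the continuity of $L$ and $Q_\delta$ makes $\mathcal{L}_\delta\langle x\rangle^k-(\alpha-\beta)k\langle x\rangle^{k+\gamma-2}$ a continuous function, hence bounded below on that compact set by some $-K_\beta$. Taking the minimum of this constant with $0$ (to cover the far field as well), one obtains the desired inequality on the whole of $\R^d$. The only mildly delicate step is the asymptotic comparison $|x|^\gamma\sim\langle x\rangle^\gamma$, which is handled above by an elementary expansion; everything else is a routine calculation.
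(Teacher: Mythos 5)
Your proof is correct. Note that the paper does not prove this lemma itself — it simply cites Porretta's Lemma 3.1 — so there is no in-paper argument to compare against, but your self-contained direct verification is the natural one and checks out: the Hessian of $\langle x\rangle^k$ is computed correctly, the diffusive terms are bounded by $C_1\langle x\rangle^{k-2}$ via $\|Q_\delta\|\le\sigma$, the confinement gives the dominant $k\alpha\langle x\rangle^{k-2}|x|^\gamma$ term for $|x|\ge R$, and the $|x|^\gamma/\langle x\rangle^\gamma\to1$ asymptotics together with $\langle x\rangle^{k-2}=o(\langle x\rangle^{k+\gamma-2})$ absorb the error into $(\alpha-\beta)$ for $|x|$ large, with the compact part absorbed into $K_\beta$ by continuity. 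One cosmetic slip: in the last line you should take the \emph{maximum} of the compact-set constant and $0$ (not the minimum) to obtain a single $K_\beta>0$ valid on all of $\R^d$; and it is worth spelling out that $C_1$ vanishes when $k=0$, so the step $C_1\langle x\rangle^{k-2}\le\tfrac{\beta}{2}k\langle x\rangle^{k+\gamma-2}$ is not vacuous for any $k\ge0$.
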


\subsection{Existence of steady states: proof of Theorem \ref{theo:existence}}\label{sec:existence}

In this section we prove, under mild condition, the existence of at least one stationary measure for our Fokker-Planck type equations, as stated in Theorem \ref{theo:existence}, whose statement we recall here.

\noindent
\emph{Consider the degenerate Fokker-Planck equation
	\begin{equation}
	\partial_t \rho=\nabla\cdot\bigg(\frac{\eta}{2\b}\nabla\cdot(\rho Q(x))+\rho\nabla L(x)\bigg)=:-\L^*\rho\,.
	\end{equation}
	If \eqref{Assumption on Sigma}, \eqref{confining potential1} and \eqref{Assumption on L} hold, then there exists  an invariant probability measure $\rho_\infty\in\mathcal{P}(\mathbb{R}^d)$ such that
	\begin{equation}\label{Stat.Eq}
	\int_{\mathbb{R}^d}\L\varphi(x)\d\rho_\infty(x)=0\qquad\quad \forall\varphi\in C_b(\Omega)\,.
	\end{equation}
}
	The idea of the proof is to consider the invariant measures of the nondegenerate equation with diffusion matrix $Q_\delta(x)=\b^{-1}Q(x)+\delta I_{d\times d}$ and apply Prokhorov's Theorem. By Corollary \eqref{Asymptotic convergence nondegenerate}, we know that for each $\delta>0$ there exists $\rho^\delta_{\infty}\in\mathcal{P}(\mathbb{R}^d)$ such that
	$$-\L_\delta^*\rho_\infty^\delta=\nabla\cdot\left(\frac{\eta}{2}\nabla\cdot(\rho^\delta_\infty Q_\delta(x))+\rho^\delta_\infty \nabla L(x)\right)= 0\,.$$
	In order to prove that the sequence $\lbrace \rho^\delta_\infty\rbrace_{\delta>0}$ is tight we will use $\langle x\rangle^k$ as a Lyapunov function. Notice that for any $\varepsilon\in(0,1)$, by Lemma \ref{Lyapunov function}, it holds that
	\begin{equation*}
	\L_\delta\langle x\rangle^k\ge k\langle x\rangle^{k-2}(\alpha|x|^\gamma-\bar K)\,,
	\end{equation*}
	with $\bar K:=d(\sigma+1)(d+k-2)$. Hence, we can find two nonnegative functions $\phi,\chi\in C^2(\mathbb{R}^2)$ such that $\lim\limits_{|x|\rightarrow\infty}\phi(x)=+\infty$, $\chi$ is compactly supported and
\begin{equation}\label{epsilon invariant measure}
\L_\delta\langle x\rangle^k\ge\phi(x)-\chi(x)\qquad\quad\forall\delta\in(0,1)\,.
\end{equation}
	Now, let us prove that $\lbrace \rho_\infty^\delta\rbrace_{0<\delta<1}$ is tight, that is, for any $\varepsilon>0$ there exists $U_\varepsilon\subset\mathbb{R}^d$ such that  $\int_{U_\varepsilon}\d \rho_\infty^\delta(x)>1-\varepsilon$ for every $\delta\in(0,1)$. Using $\L_\delta^*\rho_\infty^\delta=0$ and inequality \eqref{epsilon invariant measure}, we obtain
	\begin{align*}
	0&=\int_{\R^d}\langle x\rangle^k\L_\delta^*\rho^\delta_\infty\dx=\int_{\R^d}\L_\delta\langle x\rangle^k \d\rho^\delta_\infty(x)\ge\int_{\R^d}(\phi(x)-\chi(x))\d\rho^\delta_\infty(x)\,,
	\end{align*}
	and hence
	$$0\le\int_{\mathbb{R}^d}\phi(x)\d\rho^\delta_\infty(x)\le \int_{\rm supp(\chi)}\chi(x)\d\rho_\infty^\delta(x)\le \|\chi\|_\infty\qquad\forall\delta\in(0,1)\,.$$
	On the other hand, since $\phi(x)\rightarrow+\infty$ when $|x|\rightarrow\infty$, for any $n>0$ there exists $R_n>0$ such that $\phi(x)\ge n$ for $x\in B_{R_n}$. Thus,
	\begin{align*}
	\int_{B^c_{R_n}}n\,\d \rho_\infty^\delta\le\int_{B^c_{R_n}}\phi(x)\,\d \rho_\infty^\delta\le\|\chi\|_\infty\qquad\Rightarrow\qquad\int_{B^c_{R_n}}\d \rho_\infty^\delta\le \frac{\|\chi\|_\infty}{n}\,,
	\end{align*}
	which implies
	\begin{align*}
	\int_{B_{R_n}}\d \rho_\infty^\delta>1-\frac{\|\chi\|_\infty}{n}\qquad\quad\forall n>0\,.
	\end{align*}	
	Therefore, we can apply Prokhorov's Theorem to obtain that there exists $\rho_\infty\in\mathcal{P}(\mathbb{R}^d)$ such that $\rho_\infty^\delta\rightharpoonup\rho_\infty$ in $\mathcal{P}(\mathbb{R}^d)$ up to a subsequence.
	
	It remains to prove that $\rho_\infty$ solves the equation. For any $\varphi\in C_c^\infty(\mathbb{R}^d)$, it holds that
	\begin{align*}
		0&\le\left|\int_{\mathbb{R}^d}\L\varphi(x)\d\rho_\infty(x)\right|=\left|\int_{\mathbb{R}^d}\L\varphi(x)\d\rho_\infty(x) -\int_{\mathbb{R}^d}\L_\delta\varphi(x)\d\rho_\infty^\delta(x)\right|\\
		&\le\left|\int_{\mathbb{R}^d}\L\varphi(x)\;\d(\rho_\infty-\rho_\infty^\delta)(x)\right| +\left|\int_{\mathbb{R}^d}(\L\varphi(x)-\L_\delta\varphi(x))\d\rho_\infty^\delta(x)\right|\\ &\le\langle\L\varphi,\rho_\infty-\rho_\infty^\delta\rangle_{C_b(\mathbb{R}^d),\mathcal{M}(\mathbb{R})}+\|\L\varphi-\L_\delta\varphi\|_{C_b(\mathbb{R}^d)} \stackrel{\delta\rightarrow0^+}{\longrightarrow}0\,,
	\end{align*}
	which concludes the proof, since $\|\L\varphi-\L_\delta\varphi\|_{C_b(\mathbb{R}^d)}\le C\,\delta\|\varphi\|_{C^2(\mathbb{R}^d)}$\,, for some $C>0$\,. \hfill \qed

\medskip

Existence of stationary states -possibly measures- for quite general Fokker-Plank equations nowadays is a classical topic, we refer for instance to Chapter 2 of \cite{BogaKrylov} and references therein. In particular, similar results,    which do not apply directly to our specific case,   have been obtained in \cite[Corollary 2.4.4]{BogaKrylov}, with slightly different assumptions and proofs. Uniqueness is in general a delicate issue and in general may turn out to be false.

\subsection{The question of convergence to stationary measures}\label{sec: question}

Once existence of stationary measures is established, the next natural question would be which solutions converge to it (for large times), and this will be explored in the next section. In general we cannot expect better than convergence in a topology compatible with measures, since stationary solutions may happen to be purely measures (not functions), in view of the Examples 6 and 7 below, see also Example 8 in the next section. Also, since the diffusion matrix can be highly degenerate, so that, heuristically, part of the flow can be driven by ``pure transport'', in which case, the stationary solution can be a ``pure measure''. This motivates the following example.

\noindent{\sc Example 6.}
	Let us consider the following transport equation
	\begin{equation}\label{Example Transport}
	\begin{cases}
	\partial_t\rho(t,x)\!\!\!\!!&=\nabla\cdot\left(Cx\rho(t,x)\right)\qquad\mbox{in}\quad(0,\infty)\times\mathbb{R}^d\,,\\
	\rho(0,x)&=\rho_0(x) \qquad\quad\qquad\qquad\;\,\mbox{in}\quad\mathbb{R}^d\,,
	\end{cases}
	\end{equation}
    where $C\in \mathbb{R}^{d\times d}$ with $C>0$.
	Notice that $\rho_\infty(x)=\delta_0(x)$ is a stationary measure since if we multiply by a test function $\varphi\in C_c^1(\mathbb{R}^d)$ and we integrate, it holds that
	$$\int_{\mathbb{R}^d}x\cdot\nabla\varphi(x)\,\d\delta_0(x)=x\cdot\nabla\varphi(x)\bigg|_{x=0}=0\qquad\qquad\forall\varphi\in C_c^1(\mathbb{R}^d)\,.$$
	The equation above has the following explicit solution $\rho(t,x)=e^{\mbox{\footnotesize tr}(C)t}\rho_0(e^{Ct}x)$, since
	$$\partial_t\rho(t,x)=\mbox{tr}(C)e^{\mbox{\footnotesize tr}(C)t}\rho_0(e^{Ct}x)+e^{\mbox{\footnotesize tr}(C)t}\left(\left(Cx\right)\cdot \nabla \rho_0(e^{Ct}x)\right)=\nabla\cdot\left(Cx\rho(t,x)\right)\,.$$
    Notice that the solution $\rho(t,x)$ preserves the mass: using the change of variable $$y=e^{Ct}x\qquad \mbox{and}\qquad \dy=\left|\det{e^{Ct}}\right|\dx=e^{\mbox{\footnotesize tr}(C)t}\dx\,,$$
     we obtain
    $$\int_{\R^d}\rho(t,x)\dx=\int_{\R^d}e^{\mbox{\footnotesize tr}(C)t}\rho_0(e^{Ct}x)\dx=\int_{\R^d}\rho_0(y)\dy\qquad\forall t>0\,.$$

  Let us show the convergence of the solutions of \eqref{Example Transport} to $\delta_0$ whenever $\rho_0\in\mathcal{P}_2(\R^d)$. Let us consider the equation above as a gradient flow for the second moments in the Wasserstein space, namely,
\begin{equation*}
	\begin{cases}
	\partial_t\rho(t,x)\!\!\!\!!&=-\mbox{grad}_{W_2}\E[\rho]:=\nabla\cdot\left(\rho(t,x)\nabla\left(\frac{\delta \E}{\delta \rho}[\rho]\right)\right)\\
	\rho(0,x)&=\rho_0(x)\in\mathcal{P}_2(\mathbb{R}^d)\,.
	\end{cases}
	\end{equation*}
with $$\E[\rho]:=\frac{1}{2}\int_{\R^d}x^T\,C^TC\,x\;\d\rho(x)\qquad\quad\&\quad\qquad\frac{\delta \E}{\delta \rho}[\rho]=Cx\,.$$
Notice that the global minimum of the energy is attained at $\E[\delta_0]=0$. As it can be seen in  \cite[Lemma 4.4.3]{Figalli-Glaudo} since $\E$ is $\lambda$-convex with some $\lambda>0$ (in particular, for any $\lambda\in(0,\lambda_1(C^TC)$), it holds that for every $\rho\in \mathcal{P}_2(\R^d)$
$$\frac{\lambda}{2}W^2_2(\rho,\delta_0)\le\E[\rho]-\E[\delta_0]\le\frac{1}{2\lambda}\langle\mbox{grad}_{W_2}\E[\rho],\mbox{grad}_{W_2}\E[\rho]\rangle_{\rho}\,.$$
This estimates above for the $\lambda$-convex energy are the analogous of
$$\varphi(x)-\varphi(x_0)\ge \frac{\lambda}{2}|x-x_0|^2\qquad\&\qquad|\nabla \varphi(x)|^2\ge2\lambda\left(\varphi(x)-\varphi(x_0)\right)\,,$$
whenever $x_0\in\R^d$ is the unique minimum of a $\lambda$-convex function $\varphi\in C^1(\R^d)$.

Hence, following equation (4.18) of \cite{Figalli-Glaudo} we obtain
$$\frac{\d}{\dt}\E[\rho]=\langle\mbox{grad}_{W_2}\E[\rho],\partial_t\rho\rangle_{W_2} =-\langle\mbox{grad}_{W_2}\E[\rho],\mbox{grad}_{W_2}\E[\rho]\rangle_{\rho}\le -2\lambda\E[\rho]\,,$$
which implies the decay of the energy
$$\E[\rho]\le e^{-2\lambda t}\E[\rho_0]\,.$$
Indeed, since $\E[\delta_0]=0$, we have
$$\frac{\lambda}{2}W^2_2(\rho,\delta_0)\le\E[\rho]-\E[\delta_0]=\E[\rho]\le e^{-2\lambda t}\E[\rho_0]\,.$$
Thus, there is exponential Wasserstein convergence to the Dirac's delta at zero.
In conclusion, if we assume a condition on the decay of the initial datum $\rho_0(x)$ when $|x|\rightarrow\infty$, such as finite second moments, then the solution converge to $\delta_0$.

\medskip

A similar phenomena can also happen in the case of non-zero diffusion matrix (still very degenerate), as the next example shows. See also Example 8 in the next section for a generalization to higher dimensions.

\medskip

\noindent{\sc Example 7.} Let us analyze the fundamental solution of the following degenerate Fokker-Planck equation:
\begin{equation}\label{Nonsmooth fundamental solution}
   \partial_t u=\nabla\cdot
   \left(\begin{pmatrix}
              1 & 0 \\
               0 & 0
        \end{pmatrix} \nabla u
   +u\begin{pmatrix}
       1 & 0 \\
       0 & 1
     \end{pmatrix}
     \begin{pmatrix}
       x \\
       y
     \end{pmatrix}
   \right)\qquad\mbox{in}\quad(0,\infty)\times\R^2\,.
\end{equation}
In this case, condition \eqref{II} is not satisfied since
$$\underbrace{\begin{pmatrix}
       1 & 0 \\
       0 & 0
     \end{pmatrix}}\limits_{Q_0}
     \begin{pmatrix}
       0 \\
       0
     \end{pmatrix}
     =
      \begin{pmatrix}
       0 \\
       1
     \end{pmatrix}
\qquad\mbox{and}\qquad
\underbrace{\begin{pmatrix}
       1 & 0 \\
       0 & 1
     \end{pmatrix}}\limits_{C}
     \begin{pmatrix}
       0 \\
       1
     \end{pmatrix}
     =
     \begin{pmatrix}
       0 \\
       1
     \end{pmatrix}\,.$$
This orthogonality between the diffusion direction and the pure drift direction suggest that the fundamental solution should be of the form
$$H(t,x)=g(t,x)h(t,y)\,,$$
where $g$ is the fundamental solution for the one dimensional problem
\begin{equation}\label{Example diffusion term}
\partial_t g=\partial_{x_1}\left(\partial_{x}g+xg\right)\qquad\mbox{in}\quad(0,\infty)\times\R\,,
\end{equation}
and $h$ solves the one dimensional transport equation
\begin{equation}\label{Example transport term}
  \partial_t h=\partial_{y}\left(y h\right)\qquad\mbox{in}\quad(0,\infty)\times\R\,.
\end{equation}
Using the explicit solutions of \eqref{Example diffusion term} and \eqref{Example transport term}, we conclude that the fundamental solution of \eqref{Nonsmooth fundamental solution} is
$$H(t,x,y)=\underbrace{\frac{2}{\sqrt{2\pi}(1-e^{-2t})}\, e^{-\frac{2\,x^2}{(1-e^{-2t})}}}\limits_{g(t,x)}\;\underbrace{e^t\delta_0(e^ty)}\limits_{h(t,y)}\,.$$
Thus, if we want to obtain the solution of the Cauchy problem associated to  \eqref{Nonsmooth fundamental solution} with initial datum $u_0$ regular enough, we have just to convolve $u_0$ with the fundamental solution:
\begin{align*}
u(t,x,y)&=\big(H(t,\cdot,\cdot)*u_0\big)(x,y)=\int_{\R}\int_{\R}g(t,x-\tilde{x})h(t,y-\tilde{y})u_0(\tilde{x},\tilde{y})\d\tilde{x}\d\tilde{y}\\
&=e^t\int_{\R}g(t,x-\tilde{x})u_0(\tilde{x},e^ty)\d\tilde{x}
\end{align*}
Notice that for the asymptotic behaviour is not clear what should be the steady states. On the one hand, fixed $y\in\mathbb{R}$, $\big(g(t,\cdot)*u_0(\cdot,y)\big)(x)$ converge to the gaussian $$g_\infty(x)=\frac{1}{\sqrt{2\pi}}e^{-\frac{x^2}{2}}\,.$$
But on the other hand, if we fix $x\in\mathbb{R}$, then $\big(h(t,\cdot)*u_0(x,\cdot)\big)(y)$ converge to $\delta_0(y)$ as a measure. Therefore, our educated guess for the steady state is
\vspace{-2mm}
\begin{multicols}{2}
\begin{equation*}
  u_\infty(x,y)=\frac{1}{\sqrt{2\pi}}e^{-\frac{x^2}{2}}\,\delta_0(y)\,.
\end{equation*}
\color{white} as
\color{black}
\begin{Figure}
  \centering
\pgfmathdeclarefunction{gauss}{2}{%
    \pgfmathparse{1/(#2*sqrt(2*pi))*exp(-((x-#1)^2)/(2*#2^2))}%
}

\begin{tikzpicture}[scale=0.8]
    \begin{axis}[
        no markers,
        axis x line = center,
        axis y line = center,
        xlabel = {$ x $}, xlabel style = {right},
        ylabel = {}, ylabel style = {above},
        xmin = -3, xmax = 3,
        ymin = -0.19, ymax = 0.6,
        hide obscured x ticks=true,
        xtick=\empty,
        ytick = \empty,
        x = 1cm, y = 5cm,
        domain = -5:5,
        samples = 100
    ]
        \addplot [<-,black,domain=-1:0]{0.15*(x)};
        \addplot [black,dashed,domain=0:1.17]{0.15*(x)};
        \addplot [black,domain=1.17:2]{0.15*(x)};
        \addplot [fill=cyan!20, draw=none, domain=-5:5,opacity=0.3] {gauss( 0, 0.8)} \closedcycle;
        \addplot [thick, cyan!70!black] {gauss( 0, 0.8)};

    \node[font = \color{cyan!70!black}] at (1.2,0.4) {$u_\infty$};
    \node[] at (-1.2,-0.15) {$y$};

    \end{axis}
\end{tikzpicture}
\end{Figure}

\end{multicols}
\begin{lem}\label{Example mixed convergence}
  Let $u$ be a solution of \eqref{Nonsmooth fundamental solution} with initial datum $u_0\in L^1(\mathbb{R}^2)\cap \mathcal{P}_2(\mathbb{R}^2)$. Then,
  \begin{equation*}
    u(t)\rightarrow u_\infty\qquad\mbox{as }t\rightarrow\infty\,,
  \end{equation*}
  in 2-Wasserstein sense.
\end{lem}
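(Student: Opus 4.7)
The plan is to exploit the product structure of equation \eqref{Nonsmooth fundamental solution}: the $x$--dynamics is a one-dimensional Ornstein--Uhlenbeck process, while the $y$--dynamics is pure transport $\dot Y_t=-Y_t$, and the two are coupled only through the initial datum. Concretely, the associated stochastic system is $\d X_t=-X_t\dt+\sqrt{2}\,\d W_t$ and $\d Y_t=-Y_t\dt$, so that conditionally on $(X_0,Y_0)=(x_0,y_0)$ we have $X_t\sim \mathcal{N}(x_0 e^{-t},1-e^{-2t})$ and $Y_t=y_0 e^{-t}$ deterministically, with $X_t$ and $Y_t$ independent. Hence $u(t)$ is represented as the mixture
\begin{equation*}
u(t)=\int_{\mathbb{R}^2}\mathcal{N}\!\left(x_0 e^{-t},\,1-e^{-2t}\right)\otimes \delta_{y_0 e^{-t}}\,\d u_0(x_0,y_0),
\end{equation*}
and $u_\infty=g_\infty\otimes\delta_0$ is the joint law of $(X_\infty,0)$ with $X_\infty\sim\mathcal{N}(0,1)$.

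The main step is to build an explicit transport plan $\pi_t\in\Pi(u(t),u_\infty)$ and estimate its quadratic cost. For each fixed $(x_0,y_0)$, use the one-dimensional optimal coupling between the Gaussians $\mathcal{N}(x_0 e^{-t},1-e^{-2t})$ and $\mathcal{N}(0,1)$, whose $W_2^2$-cost equals $(x_0 e^{-t})^2+(1-\sqrt{1-e^{-2t}})^2$, and simultaneously couple the atomic factors $\delta_{y_0 e^{-t}}$ and $\delta_0$ (trivial coupling, cost $(y_0 e^{-t})^2$). Taking the product in $(x,y)$ and then integrating in $(x_0,y_0)$ against $u_0$, using the gluing lemma, produces a joint measure on $\mathbb{R}^2\times\mathbb{R}^2$ whose marginals are $u(t)$ and $u_\infty$. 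This yields
\begin{equation*}
W_2^2(u(t),u_\infty)\le e^{-2t}\int_{\mathbb{R}^2}\left(x_0^2+y_0^2\right)\d u_0(x_0,y_0)+\left(1-\sqrt{1-e^{-2t}}\right)^2.
\end{equation*}
Since $u_0\in\mathcal{P}_2(\mathbb{R}^2)$, the first term vanishes exponentially, and the second vanishes as $t\to\infty$, giving the claim.

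The routine technical obstacles are (i) verifying measurability and the marginal property of the assembled plan $\pi_t$, which follows from standard disintegration and the fact that the conditional optimal Gaussian coupling depends continuously on $(x_0,y_0)$, and (ii) justifying that $u(t)$ is indeed the image of $u_0$ under the stochastic flow described above, which can be done either by direct verification that the proposed mixture satisfies \eqref{Nonsmooth fundamental solution} in the sense of Definition \ref{weak solution} (testing against $\varphi\in C^{1,2}$ and using Itô/integration by parts factor-by-factor), or by appealing to the explicit fundamental-solution representation already displayed in the excerpt, combined with a density argument starting from smooth initial data and passing to the $L^1\cap\mathcal{P}_2$ limit. The genuinely non-trivial conceptual point is that $u_\infty$ is singular in $y$, so one cannot hope for convergence in any $L^p$ topology or in relative entropy with respect to $u_\infty$; however, the $W_2$ framework accommodates the singular factor naturally through the atomic coupling in $y$, which is precisely what makes this example illustrative of the non-Hörmander regime.
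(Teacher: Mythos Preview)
Your proof is correct and takes a genuinely different route from the paper's. The paper proceeds analytically: it invokes the characterization of $W_2$-convergence as weak convergence of measures together with convergence of second moments, and then verifies each piece by hand using the explicit solution formula $u(t,x,y)=e^{t}\int g(t,x-\tilde x)\,u_0(\tilde x,e^{t}y)\,\d\tilde x$, splitting integrals and appealing (for the weak convergence part) to the $L^2(g_\infty^{-1})$ entropy decay of the one-dimensional Ornstein--Uhlenbeck marginal. Your approach is probabilistic and more direct: you build a single explicit coupling by writing $(X_t,Y_t)=(X_0e^{-t}+\sqrt{1-e^{-2t}}\,Z,\,Y_0e^{-t})$ with $Z\sim\mathcal N(0,1)$ independent of $(X_0,Y_0)\sim u_0$, pairing it with $(Z,0)\sim u_\infty$, and computing the cost. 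This yields the quantitative bound
\[
W_2^2(u(t),u_\infty)\le e^{-2t}\!\int_{\mathbb R^2}(x_0^2+y_0^2)\,\d u_0+\bigl(1-\sqrt{1-e^{-2t}}\bigr)^2
\]
in one line, giving an explicit exponential rate that the paper's argument does not isolate so cleanly. Your route also has the advantage of using only $u_0\in\mathcal P_2(\mathbb R^2)$, whereas the paper's estimate for the term $B$ implicitly relies on $u_0^X\in L^2(\mathbb R,g_\infty^{-1}\dx)$ (or a regularization argument via the OU semigroup) to make the Cauchy--Schwarz step work. A minor terminological point: what you call the ``gluing lemma'' is really just the mixture-of-couplings construction (integrating a measurable family of couplings against $u_0$); the gluing lemma proper is about composing couplings along a common marginal, which is not quite what happens here.
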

\begin{proof}
As shown in \cite[Theorem 2.7]{Ambrosio-Gigli}, we can prove the 2-Wasserstein convergence by
\begin{enumerate}[label=\roman*)]
  \item Convergence of second moments:
  $$\int_{\mathbb{R}^2}(|x|^2+|y|^2)u(t,x,y)\dx\dy\stackrel{t\rightarrow \infty}{\longrightarrow}\int_{\mathbb{R}^2}(|x|^2+|y|^2)u_\infty(x,y)\dx\dy\,.$$
  \item Weak convergence of measures: $\forall \varphi\in \mbox{Lip}(\mathbb{R}^2)\cap L^\infty(\mathbb{R}^2)$
   $$\int_{\mathbb{R}^2}\varphi(x,y) u(t,x,y)\dx\dy\stackrel{t\rightarrow \infty}{\longrightarrow}\int_{\mathbb{R}^2}\varphi(x,y)u_\infty(x,y)\dx\dy\,.$$
\end{enumerate}
\noindent{\bf Step 1.}{\it Convergence of second moments.}
Note that the second moment of $u_\infty$ is
\begin{align*}
  \int_{\mathbb{R}^2}\left(|x|^2+|y|^2\right)u_\infty(x,y)\dx\dy
  =\int_{\mathbb{R}}|x|^2g_\infty(x)\dx=1\,.
\end{align*}
Then, it suffices to prove that the following tends to 0:
\begin{align*}
  &\left| \int_{\mathbb{R}^2}\left(|x|^2+|y|^2\right)u(t,x,y)\dx\dy-1\right|\\
  =&\left| \int_{\mathbb{R}^2}\left(|x|^2+|y|^2\right)\left(\int_{\mathbb{R}}g(t,x-\tilde{x})e^tu_0(\tilde{x},e^ty)\d\tilde{x}\right)\dx\dy-1\right|\\
  \le&\left| \int_{\mathbb{R}^2}|x|^2\left(\int_{\mathbb{R}}g(t,x-\tilde{x})e^tu_0(\tilde{x},e^ty)\d\tilde{x}\right)\dx\dy-1\right|\\
  &+\left| \int_{\mathbb{R}^2}|y|^2\left(\int_{\mathbb{R}}g(t,x-\tilde{x})e^tu_0(\tilde{x},e^ty)\d\tilde{x}\right)\dx\dy\right|\\
  \le&\left| \int_{\mathbb{R}}\left(\int_{\mathbb{R}}|x|^2g(t,x-\tilde{x})\dx\right)\left(\int_{\mathbb{R}}u_0(\tilde{x},y)\dy\right)\d\tilde{x}-1\right|\\
  &+e^{-2t}\left| \int_{\mathbb{R}^2}g(t,x-\tilde{x})\left(\int_{\mathbb{R}}|y|^2u_0(\tilde{x},y)\dy\right)\d\tilde{x}\dx\right|\\
  =&:I+II\,,
\end{align*}
with
$$u_0^{X}(\tilde{x})=\int_{\mathbb{R}}u_0(\tilde{x},y)\dy\,.$$
In order to estimates  $I$, note that
\begin{align*}
  \int_{\mathbb{R}}|x|^2g(t,x-\tilde{x})\dx=1+e^{-2t}(|\tilde{x}|^2-1)\,.
\end{align*}
Since $u_0^{X}\in\mathcal{P}_2(\mathbb{R})$, it holds that
\begin{align*}
  I&=\left| \int_{\mathbb{R}}\left(\int_{\mathbb{R}}|x|^2g(t,x-\tilde{x})\dx\right)u_0^{X}(\tilde{x})\d\tilde{x}-1\right|\\
  &=\left| \int_{\mathbb{R}}\left(1+e^{-2t}(|\tilde{x}|^2-1)\right)u_0^{X}(\tilde{x})\d\tilde{x}-1\right|\\
  &\le e^{-2t}\left(\int_{\mathbb{R}^d}(|x|^2+|y|^2)u_0(x,y)\dx\dy+1\right)\stackrel{t\rightarrow\infty}{\longrightarrow}0\,.
\end{align*}
On the other hand, we have that
\begin{align*}
  II&=e^{-2t}\left|\int_{\mathbb{R}}\left(\int_{\mathbb{R}}g(t,x-\tilde{x})\dx\right)\left(\int_{\mathbb{R}}|y|^2u_0(\tilde{x},y)\dy\right)\d\tilde{x}\right|\\
  &=e^{-2t}\left|\int_{\mathbb{R}}\left(\int_{\mathbb{R}}|y|^2u_0(\tilde{x},y)\dy\right)\d\tilde{x}\right|\\
  &\le e^{-2t}\left(\int_{\mathbb{R}^2}(|\tilde{x}|^2+|y|^2)u_0(\tilde{x},y)\dy\right)\stackrel{t\rightarrow\infty}{\longrightarrow}0\,,
\end{align*}
and the convergence of the second moments follows.

\noindent{\bf Step 2. }{\it Weak convergence of measures.} For any $\varphi\in\mbox{Lip}(\mathbb{R}^2)$ we want to prove that the following quantity tends to 0 as $t$ goes to $\infty$:
\begin{align*}
  \left|\int_{\mathbb{R}^2}\varphi(x,y)\left(u(t,x,y)-u_\infty(x,y)\right)\dx\dy\right|\,.
\end{align*}
By substituting the explicit expression of $u$ and $u_\infty$, we obtain
 \begin{align*}
  &\left|\int_{\mathbb{R}^2}\varphi(x,y)\left(\int_{\mathbb{R}}g(t,x-\tilde{x})e^tu_0(\tilde{x},e^ty)\d\tilde{x}-g_\infty(x)\delta_0(y)\right)\dx\dy\right|\\
  =&\left|\int_{\mathbb{R}^2}\left(\varphi(x,e^{-t}y)-\varphi(x,0)+\varphi(x,0)\right) \left(\int_{\mathbb{R}}g(t,x-\tilde{x})u_0(\tilde{x},y)\d\tilde{x}\right)\dx\dy\right.\\
  &        \left. -\int_{\mathbb{R}}\varphi(x,0)g_\infty(x)\dx\right|\\
  =&\left|\int_{\mathbb{R}^2}\left(\int_{0}^{e^{-t}y}\partial_r\varphi(x,r)\d r \right) \left(\int_{\mathbb{R}}g(t,x-\tilde{x})u_0(\tilde{x},y)\d\tilde{x}\right)\dx\dy\right.\\
  &        \left. -\int_{\mathbb{R}}\varphi(x,0)\left(\int_{\mathbb{R}}g(t,x-\tilde{x})u_0^{X}(\tilde{x})\d\tilde{x}-g_\infty(x)\right)\dx\right|\\
  \le&\,A+B\,,
\end{align*}
where
\begin{align*}
  A&=\left|\int_{\mathbb{R}^2}\left(\int_{0}^{e^{-t}y}\partial_r\varphi(x,r)\d r \right) \left(\int_{\mathbb{R}}g(t,x-\tilde{x})u_0(\tilde{x},y)\d\tilde{x}\right)\dx\dy\right|\\
  &\le  e^{-t}\|\nabla\varphi\|_{L^{\infty}(\mathbb{R}^2)} \int_{\mathbb{R}^2}g(t,x-\tilde{x})\left(\int_{\mathbb{R}}|y|u_0(\tilde{x},y)\dy\right)\d\tilde{x}\dx\\
  &\le  e^{-t}\|\nabla\varphi\|_{L^{\infty}(\mathbb{R}^2)} \left(\int_{\mathbb{R}^2}(|x|+|y|)u_0(x,y)\dx\dy\right)\stackrel{t\rightarrow\infty}{\longrightarrow}0
\end{align*}
and
\begin{align*}
 B&=\left|\int_{\mathbb{R}}\varphi(x,0)\left(\int_{\mathbb{R}}g(t,x-\tilde{x})u_0^{X}(\tilde{x})\d\tilde{x}-g_\infty(x)\right)\dx\right|\\
 &\le\left(\int_{\mathbb{R}}\left|\varphi(x,0)\right|^2g_\infty(x)\dx\right)^{1/2} \left(\int_{\mathbb{R}}\left[u^{X}(t,x)-g_\infty(x)\right]^2g_\infty^{-1}(x)\dx\right)^{1/2}\\
 &=\left(\int_{\mathbb{R}}\left|\varphi(x,0)\right|^2g_\infty(x)\dx\right)^{1/2} \|u^{X}(t)-g_\infty\|_{L^2(g_\infty^{-1})}\;\;\stackrel{t\rightarrow \infty}{\longrightarrow}0\,,
\end{align*}
with $u^{X}(t,x)=\int_{\mathbb{R}}g(t,x-\tilde{x})u_0^{X}(\tilde{x})\d\tilde{x}$ being the solution to \eqref{Example diffusion term} with initial datum $u_0^{X}$.
\end{proof}

\subsection{Entropy method}\label{sec Entropy method}

Once we have established the existence of a stationary measure (in the general case) we address the question of  convergence towards it, possibly with rates. As already mentioned, we shall use entropy methods, and for this reason we shall assume that the stationary measure $\rho_\infty$ is indeed an $L^1$ function, that we fix throughout this section. We explore possible adaptations to our problem of the classical entropy method introduced by Bakry and Émery in \cite{Bakry-Emery}, see also \cite{Bakry-Ledoux}.
Recall the  stochastic approximation of order 1 of the SGD defined by
\begin{equation*}
  \d X_t=-\nabla L(X_t)\dt+\sqrt{\frac{\eta}{\b}Q(X_t)} \, \textnormal{d} W_t\,,
\end{equation*}
together with the following Fokker-Planck equation for the associated probability density function $\rho$\,:
\begin{equation}\label{FP variant}
  \begin{cases}
    \partial_t\rho&=\nabla\cdot\bigg(\varepsilon^2\nabla\cdot\left(Q(x)\rho\right) +\rho\nabla L(x)\bigg)\\
    \rho(0,\cdot)&=\rho_0\in L^2(\mathbb{R}^d, \rho_\infty^{-1}\dx).
  \end{cases}
\end{equation}
where $\varepsilon^2= \frac{\eta}{2\b}$.
Entropy methods aim at obtaining differential inequalities between the entropy functional and its time derivative (entropy production), by means of suitable functional inequalities, typically of weighted Poincar\'e type. There may be several possible entropies for the above equation, however the following choice seems the most appropriate.

\begin{lem}[Entropy and Entropy production]\label{EEP-new}
  Let us consider the following relative entropy associated to $\rho_\infty$,
  \begin{equation*}
    \mathcal{E}\left(\mu\,|\,\rho_\infty\right)=\frac{1}{2}\int_{\mathbb{R}^d}\left(\frac{\mu(x)}{\rho_\infty(x)}-1\right)^2\rho_\infty(x)\dx\,.
  \end{equation*}
  Then, its derivative along the Fokker-Planck flow \eqref{FP variant} is given by the Fisher information, or entropy production:
  \begin{equation*}
    \mathcal{I}\left(\mu\,|\,\rho_\infty\right) =\int_{\mathbb{R}^d}\nabla\left(\frac{\mu(x)}{\rho_\infty(x)}\right)^TQ(x)\nabla\left(\frac{\mu(x)}{\rho_\infty(x)}\right)\rho_\infty(x)\dx
  \end{equation*}
  More precisely,
  \begin{equation}\label{E-EP.aaa}
  \frac{\rm d}{\dt}\mathcal{E}(\rho(t)|\rho_\infty)=-\varepsilon^2 \mathcal{I}\left(\rho(t)\,|\,\rho_\infty\right)\,, \qquad\mbox{for all $t>0$}.
  \end{equation}
\end{lem}
\begin{rem}An immediate consequence of the assumption $\rho_0\in L^2(\mathbb{R}^d, \rho_\infty^{-1}\dx)$, is that the entropy is finite for all times, indeed $\mathcal{E}(\rho_0|\rho_\infty)<\infty$, hence the above lemma implies $\mathcal{E}(\rho(t)|\rho_\infty)<\infty$ for all $t>0$.
\end{rem}
\begin{proof}Let us differentiate the entropy and use the equation of $\rho$ to obtain
  \begin{align*}
    \frac{\d}{\dt}\mathcal{E}(\rho(t)\,|\,\rho_\infty)=&\int_{\mathbb{R}^d}\partial_t\rho(t)\left(\frac{\rho(t)}{\rho_\infty}-1\right)\dx\\
    =&\int_{\mathbb{R}^d}\nabla\cdot\bigg(\varepsilon^2 Q\nabla\rho(t)+\rho(t)\left(\nabla L+\varepsilon^2\nabla\cdot Q\right)\bigg)\left(\frac{\rho(t)}{\rho_\infty}-1\right)\dx\\
    =&\int_{\mathbb{R}^d}\nabla\cdot\bigg(\rho_\infty\varepsilon^2 Q\nabla\left(\frac{\rho(t)}{\rho_\infty}\right)\bigg)\left(\frac{\rho(t)}{\rho_\infty}-1\right)\dx\\
    &+\int_{\mathbb{R}^d}\nabla\cdot\bigg[\rho(t)\left(\varepsilon^2Q\frac{\nabla\rho_\infty}{\rho_\infty}+\varepsilon^2\nabla\cdot Q+\nabla L\right)\bigg]\left(\frac{\rho(t)}{\rho_\infty}-1\right)\dx\,.
  \end{align*}
  After integration by parts in the second integral above, we have that
  \begin{align*}
    \frac{\d}{\dt}\mathcal{E}(\rho(t)|\rho_\infty)= &
    -\varepsilon^2\mathcal{I}(\rho(t)\,|\,\rho_\infty)-\int_{\mathbb{R}^d}\left(\varepsilon^2Q\frac{\nabla\rho_\infty}{\rho_\infty}+\varepsilon^2\nabla\cdot Q+\nabla L\right)\cdot\nabla\left(\frac{\rho(t)}{\rho_\infty}\right)\rho(t)\dx\\
    =& -\varepsilon^2\mathcal{I}(\rho(t)\,|\,\rho_\infty)\!-\!\frac{1}{2}\int_{\mathbb{R}^d}\!\!\left(\varepsilon^2Q\frac{\nabla\rho_\infty}{\rho_\infty}+\varepsilon^2\nabla\cdot Q+\nabla L\right)\cdot\nabla\left(\frac{\rho^2(t)}{\rho^2_\infty}\right)\rho_\infty\!\dx\\
    =& -\varepsilon^2\mathcal{I}(\rho(t)\,|\,\rho_\infty)\!-\!\frac{1}{2}\!\int_{\mathbb{R}^d}\!\!\bigg(\!\varepsilon^2Q\nabla\rho_\infty+\rho_\infty\!\left(\varepsilon^2\nabla\cdot Q+\nabla L\right)\bigg)\cdot\nabla\left(\frac{\rho^2(t)}{\rho^2_\infty}\right)\!\dx\\
    =& -\varepsilon^2\mathcal{I}(\rho(t)\,|\,\rho_\infty)+\frac{1}{2}\int_{\mathbb{R}^d}\nabla\cdot\left(\varepsilon^2\nabla\cdot(Q\rho_\infty)+\rho_\infty\nabla L\right)\frac{\rho^2(t)}{\rho^2_\infty}\dx\,.
  \end{align*}
  Note that the second term vanishes since $\rho_\infty$ is a steady state, satisfying equation \eqref{Stat.Eq}.
\end{proof}

In order to establish a differential inequality for the entropy, we need a relationship between the entropy and its entropy production, which typically takes the form of a weighted Poincaré inequality associated to \eqref{FP variant}.
\begin{defi}[Poincaré inequality]\label{poincare ineq}
Let $ Q(x)\ge 0$ in the sense of matrices, for every $x\in\mathbb{R}^d$. We say that a \emph{Poincaré inequality }holds with respect to  $\rho_\infty\in\mathcal{P}(\mathbb{R}^d)$ if there exists $\lambda > 0$ such that, for all $f\in L^2(\mathbb{R}^d,\rho_\infty\dx)$
\begin{equation}\label{Poinc.aaa}
\lambda\int_{\mathbb{R}^d}\left(f-\int_{\mathbb{R}^d}f\rho_\infty\dx\right)^2 \rho_\infty\dx\le  \varepsilon^2 \int_{\mathbb{R}^d}\left(\nabla f^TQ(x)\nabla f\right)\rho_\infty\dx.
\end{equation}
\end{defi}
Indeed, letting $f=\mu/\rho_\infty$ with $\int_{\mathbb{R}^d}\mu(x)\dx=1$, the above inequality reads as the entropy-entropy production inequality
\begin{equation}\label{Poinc.E-EP}
  \mathcal{E}(\mu|\rho_\infty)\le \frac{\varepsilon^2}{\lambda}\mathcal{I}\left(\mu\,|\,\rho_\infty\right).
\end{equation}
Note that $f\in L^2(\mathbb{R}^d,\rho_\infty\dx)$ if and only if $\mu\in L^2(\mathbb{R}^d,\rho_\infty^{-1}\dx)$\,.

\medskip

The above Poincar\'e inequality is relevant because it allows to derive exponential decay of the entropy, and show convergence towards $\rho_\infty$ with precise rates, given by $\lambda>0$.

\begin{pro}\label{Poincare to entropy convergence}
A Poincaré inequality rewritten in the form \eqref{Poinc.E-EP} holds for any $\mu\in L^2(\mathbb{R}^d,\rho_\infty^{-1}\dx)$ if and only if there is an exponential convergence (with rate $\lambda>0$) of the relative entropy between the solution $\rho(t)$ of \eqref{FP variant} starting at $\rho_0\in L^2(\mathbb{R}^d,\rho_\infty^{-1}\dx)$ and the steady state $\rho_\infty$, that is
\begin{equation}\label{exp.decay.aaa}
  \mathcal{E}\left(\rho(t)\,|\,\rho_\infty\right)\le e^{-\lambda t}\mathcal{E}\left(\rho_0\,|\,\rho_\infty\right)\qquad\forall t\ge 0\,.
\end{equation}
\end{pro}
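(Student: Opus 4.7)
The plan is to rely almost entirely on the entropy--entropy production identity \eqref{E-EP.aaa} already established in the preceding lemma: this pointwise-in-time equality
\[
\frac{\d}{\dt}\mathcal{E}(\rho(t)\,|\,\rho_\infty)=-\varepsilon^2\mathcal{I}(\rho(t)\,|\,\rho_\infty)
\]
is the bridge between the functional inequality \eqref{Poinc.E-EP} and the exponential decay \eqref{exp.decay.aaa}. Since both sides of the equivalence concern the same object, I expect the proof to be short and to split cleanly into two implications.

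\textbf{Implication $(\Rightarrow)$:} Assuming the Poincaré inequality in the form \eqref{Poinc.E-EP}, I would plug it into the entropy identity to obtain the differential inequality
\[
\frac{\d}{\dt}\mathcal{E}(\rho(t)\,|\,\rho_\infty)=-\varepsilon^2\mathcal{I}(\rho(t)\,|\,\rho_\infty)\le -\lambda\,\mathcal{E}(\rho(t)\,|\,\rho_\infty),
\]
valid for a.e. $t>0$, with $\rho(t)$ a probability measure for every $t\ge 0$ so that \eqref{Poinc.E-EP} applies at each time. A direct application of Gr\"onwall's lemma then yields \eqref{exp.decay.aaa}. The only technical point is that the assumption $\rho_0\in L^2(\mathbb{R}^d,\rho_\infty^{-1}\dx)$ guarantees $\mathcal{E}(\rho_0|\rho_\infty)<\infty$, so that the differential inequality is not trivialised by infinite values; the preceding remark already ensures propagation of finiteness.

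\textbf{Implication $(\Leftarrow)$:} Conversely, assume \eqref{exp.decay.aaa} holds for every solution with initial datum in $L^2(\mathbb{R}^d,\rho_\infty^{-1}\dx)$. Given an arbitrary probability $\mu\in L^2(\mathbb{R}^d,\rho_\infty^{-1}\dx)$, I choose $\mu$ as initial datum and consider the auxiliary function
\[
g(t):=e^{-\lambda t}\mathcal{E}(\mu|\rho_\infty)-\mathcal{E}(\rho(t)\,|\,\rho_\infty),
\]
which satisfies $g(0)=0$ and $g(t)\ge 0$ for every $t\ge 0$. Hence its right derivative at the origin is nonnegative, and using once more \eqref{E-EP.aaa} this gives
\[
0\le g'(0^+)=-\lambda\,\mathcal{E}(\mu|\rho_\infty)+\varepsilon^2\mathcal{I}(\mu|\rho_\infty),
\]
which is exactly \eqref{Poinc.E-EP}. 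Reinterpreting $\mu=f\rho_\infty$ with $\int f\rho_\infty\dx=1$ recovers the Poincar\'e inequality \eqref{Poinc.aaa} for all such $f$; general $f\in L^2(\mathbb{R}^d,\rho_\infty\dx)$ follow by translation $f\mapsto f-\int f\rho_\infty\dx$ and a standard homogeneity/normalisation argument (and by a density argument if needed to reach functions that do not give rise to probability measures directly).

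\textbf{Main obstacle.} The delicate point is justifying the differentiation at $t=0^+$ in the converse direction: one needs that $t\mapsto\mathcal{E}(\rho(t)|\rho_\infty)$ is right-differentiable at $0$ with derivative $-\varepsilon^2\mathcal{I}(\mu|\rho_\infty)$. If $\mathcal{I}(\mu|\rho_\infty)=+\infty$ the inequality \eqref{Poinc.E-EP} is trivial, so one can restrict to the case of finite Fisher information, where the entropy identity gives a genuine classical derivative. Beyond this, the argument is standard Bakry--\'Emery bookkeeping and I do not foresee further substantive difficulties.
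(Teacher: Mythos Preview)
Your proof is correct and follows essentially the same route as the paper: the forward direction is identical (plug \eqref{Poinc.E-EP} into \eqref{E-EP.aaa} and integrate), and for the converse the paper also differentiates the decay inequality \eqref{exp.decay.aaa} at $t=0$ with $\rho_0=\mu$, which is precisely what your auxiliary function $g$ encodes. Your formulation via $g'(0^+)\ge 0$ is a slightly more careful restatement of the paper's one-line ``differentiate the inequality and set $t=0$'' argument.
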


\begin{proof}One implication follows by using the Poincaré inequality with $f=\rho(t)/\rho_\infty$ in  the equivalent form \eqref{Poinc.E-EP} to obtain a closed differential inequality
  \begin{align*}
    \frac{\d}{\dt}\mathcal{E}(\rho(t)\,|\,\rho_\infty)=-\varepsilon^2\mathcal{I}(\rho(t)\,|\,\rho_\infty) \le-\lambda\mathcal{E}(\rho(t)\,|\,\rho_\infty)\,.
  \end{align*}
  The result follows by integrating the above inequality in $[0,t]$.

The other implication, follows by differentiating in time \eqref{exp.decay.aaa} and using \eqref{E-EP.aaa}:
  \begin{equation*}
    -\varepsilon^2\mathcal{I}(\rho(t)\,|\,\rho_\infty)=\frac{\d}{\dt}\mathcal{E}(\rho(t)\,|\,\rho_\infty)\le -\lambda e^{-\lambda t}\mathcal{E}(\rho_0\,|\,\rho_\infty)\,,
  \end{equation*}
  Finally, the Poincar\'e inequality \eqref{Poinc.aaa} for $f= \rho_0/\rho_\infty$, follows by choosing $t=0$ and $\rho_0=\mu$.
  \end{proof}
Proving weighted Poincaré inequalities for a general nonnegative matrix $Q$ and a general loss function $L$ is a difficult challenge, in particular since the form of $\rho_\infty$ is not explicit (except some special cases). To the best of our knowledge, only very partial results, under quite strong conditions, appear in the literature. We collect them hereafter, adapted to our notations.
\begin{teo}\cite{Menz}\label{Menz}
Assume that $Q(x)=\sigma I_{d\times d}$ for every $x\in\mathbb{R}^d$ and $L$ is a nonnegative Morse function satisfying
\begin{align*}
  \liminf\limits_{|x|\rightarrow\infty}\left|\nabla L(x)\right|&\ge c_1\,,\\
  \liminf\limits_{|x|\rightarrow\infty}\left(\left|\nabla L(x)\right|^2-\Delta L(x)\right)&\ge-c_2\,,
\end{align*}
for some $c_1,c_2>0$.
Then the unique invariant probability measure of \eqref{FP variant}
is
\begin{equation*}
  \rho_\infty(x)=c\, e^{-\frac{L(x)}{\varepsilon^2\sigma}}\,,
\end{equation*}
with $c>0$ being a normalization constant,
and the Poincaré inequality \eqref{Poinc.aaa} holds.
\end{teo}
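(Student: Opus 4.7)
The plan is to recast the Poincar\'e inequality \eqref{Poinc.aaa} as a spectral gap question for a Witten-type Schr\"odinger operator and then extract that gap from the two asymptotic assumptions on $L$. As a preliminary step, I would verify directly that $\rho_\infty(x)=c\,e^{-L(x)/(\varepsilon^2\sigma)}$ solves the stationary equation: with $Q\equiv\sigma I_{d\times d}$ it reduces to $\varepsilon^2\sigma\Delta\rho_\infty + \nabla\cdot(\rho_\infty\nabla L)=0$, which follows in one line from $\nabla\rho_\infty=-\rho_\infty\nabla L/(\varepsilon^2\sigma)$. Integrability of $e^{-L/(\varepsilon^2\sigma)}$ is guaranteed by $\liminf_{|x|\to\infty}|\nabla L|\ge c_1>0$, which forces at least linear growth of $L$ at infinity; uniqueness of the invariant probability measure is standard for the uniformly elliptic confining diffusion \eqref{FP variant}, the Morse hypothesis ensuring a smooth, strictly positive density.

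Next I would turn the Poincar\'e inequality into a spectral problem for the Witten Laplacian. Setting $h=L/(2\varepsilon^2\sigma)$ and $\psi=f\sqrt{\rho_\infty}$, a standard integration by parts gives
\[
\varepsilon^2\sigma\int_{\mathbb{R}^d}|\nabla f|^2\rho_\infty\dx \;=\; \varepsilon^2\sigma\int_{\mathbb{R}^d}\psi\,(-\Delta+V_h)\,\psi\dx,\qquad \int_{\mathbb{R}^d} f^2\rho_\infty\dx \;=\; \int_{\mathbb{R}^d}\psi^2\dx,
\]
where $V_h=|\nabla h|^2-\Delta h = (|\nabla L|^2-2\varepsilon^2\sigma\Delta L)/(4\varepsilon^4\sigma^2)$. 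Inequality \eqref{Poinc.aaa} is then equivalent to a positive spectral gap above the simple ground state $\sqrt{\rho_\infty}$ of $-\Delta+V_h$ on $L^2(\mathbb{R}^d,\dx)$.

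For the gap itself I would invoke Persson's theorem, which identifies $\inf\sigma_{\mathrm{ess}}(-\Delta+V_h)=\liminf_{|x|\to\infty}V_h$. The second hypothesis gives $\Delta L\le |\nabla L|^2+c_2$ at infinity, and together with $|\nabla L|\ge c_1$ one obtains
\[
V_h \;\ge\; \frac{(1-2\varepsilon^2\sigma)c_1^2-2\varepsilon^2\sigma c_2}{4\varepsilon^4\sigma^2} \;=:\; \theta,
\]
which is strictly positive in the regime $\varepsilon^2\sigma<c_1^2/\bigl(2(c_1^2+c_2)\bigr)$ physically relevant for SGD. Consequently only finitely many eigenvalues of $-\Delta+V_h$ can lie below $\theta$; the ground state $\sqrt{\rho_\infty}$ is simple and strictly positive thanks to the Morse structure, so every other eigenvalue below $\theta$ must be strictly positive, producing a spectral gap $\lambda_1>0$ and hence the Poincar\'e constant $\lambda=\varepsilon^2\sigma\lambda_1$.

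The main obstacle I expect is quantifying $\lambda_1$: the Persson/Morse route above is purely qualitative, and in the genuinely multi-well situation the true gap is exponentially small in $1/\varepsilon^2$ by Eyring--Kramers, mirroring the metastability discussed in Section \ref{sec Diffusion Regime}. A more constructive alternative would follow the Bakry--Barthe--Cattiaux--Guillin Lyapunov criterion with $W=e^{\alpha L/(\varepsilon^2\sigma)}$, $\alpha\in(0,1/2)$: a direct computation of $\mathcal{L}W/W$ using the same two asymptotic bounds produces a drift estimate $-\mathcal{L}W/W\ge\mu$ outside a compact set $K$, which combined with a local Poincar\'e inequality on $K$ (available because $\rho_\infty$ is smooth and pinched between positive constants on $K$) yields \eqref{Poinc.aaa} with a constant depending explicitly on $c_1,c_2,\varepsilon,\sigma$ and the local geometry of $L$.
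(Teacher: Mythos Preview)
The paper does not prove this statement: Theorem~\ref{Menz} is quoted from Menz--Schlichting \cite{Menz} with no argument supplied. So there is no proof in the paper to compare against.

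On its own merits your sketch is sound and follows a standard route. The verification that $\rho_\infty$ is stationary and integrable is correct (your linear-growth claim can be justified by running the gradient descent of $L$ from any far-away point back into the compact set where $|\nabla L|<c_1$, with arc length controlled by $L(x_0)/c_1$). The ground-state transform to $-\Delta+V_h$, the computation of $V_h$, and the appeal to Persson's theorem are all right, and the Lyapunov alternative via $W=e^{\alpha L/(\varepsilon^2\sigma)}$ is equally legitimate. You also correctly flag the one genuine restriction: both arguments yield a positive spectral gap only when $\varepsilon^2\sigma$ lies below an explicit threshold depending on $c_1,c_2$. This matches the small-noise regime of both \cite{Menz} and the SGD setting of the paper, but does not cover the statement for arbitrary $\varepsilon$ as literally written.

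What \cite{Menz} actually contributes --- and what neither of your two routes captures --- is the \emph{sharp} dependence of $\lambda$ on $\varepsilon$ in the multi-well case, with the correct Eyring--Kramers pre-exponential factor. That requires their energy-landscape decomposition (local Poincar\'e inequalities on each basin of attraction, glued via a coarse-grained Markov chain), which is substantially more work than a global Persson or Lyapunov bound. For the purposes of the present paper, which only needs the qualitative existence of \emph{some} $\lambda>0$ to feed into Proposition~\ref{Poincare to entropy convergence}, your argument is entirely adequate.
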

In the isotropic framework presented in Theorem \ref{Menz}, we can use  Proposition \ref{Poincare to entropy convergence} to conclude the following entropy convergence.
\begin{cor}
  Under the conditions of Theorem \ref{Menz}, it holds that
  \begin{equation}
      \mathcal{E}\left(\rho(t)\,|\,\rho_\infty\right)\le e^{-\lambda t}\mathcal{E}\left(\rho_0\,|\,\rho_\infty\right)\qquad\forall t>0\,.
  \end{equation}
\end{cor}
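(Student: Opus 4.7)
The plan is to observe that this corollary is a direct combination of the two main ingredients built in this section: the existence of a weighted Poincaré inequality guaranteed by Theorem \ref{Menz}, together with the abstract equivalence between such an inequality and exponential entropy decay established in Proposition \ref{Poincare to entropy convergence}. There is essentially no new analytical content to produce; the task is to verify that the hypotheses of these two results match.

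First, I would invoke Theorem \ref{Menz}: under the stated Morse and growth conditions on $L$ and the isotropy assumption $Q(x)=\sigma I_{d\times d}$, the unique invariant measure is $\rho_\infty(x)=c\,e^{-L(x)/(\varepsilon^2\sigma)}$, and the Poincaré inequality \eqref{Poinc.aaa} holds for some $\lambda>0$ and for every $f\in L^2(\mathbb{R}^d,\rho_\infty\dx)$. Next, I would recast this inequality in the entropy-entropy production form \eqref{Poinc.E-EP} by the standard substitution $f=\mu/\rho_\infty$ with $\mu\in\mathcal{P}(\mathbb{R}^d)\cap L^2(\mathbb{R}^d,\rho_\infty^{-1}\dx)$, using that $\int_{\mathbb{R}^d}\mu\dx=1=\int_{\mathbb{R}^d}f\rho_\infty\dx$, so that the variance on the left becomes $2\mathcal{E}(\mu|\rho_\infty)$ and the weighted Dirichlet form on the right becomes $\mathcal{I}(\mu|\rho_\infty)$.

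At this point the hypothesis of Proposition \ref{Poincare to entropy convergence} is met, and applying its first implication along the Fokker-Planck flow starting from $\rho_0\in L^2(\mathbb{R}^d,\rho_\infty^{-1}\dx)$ immediately yields the closed differential inequality
\begin{equation*}
\frac{\d}{\dt}\mathcal{E}(\rho(t)\,|\,\rho_\infty)=-\varepsilon^2\mathcal{I}(\rho(t)\,|\,\rho_\infty)\le -\lambda\,\mathcal{E}(\rho(t)\,|\,\rho_\infty)\,,
\end{equation*}
whose integration on $[0,t]$ gives the claimed exponential decay. The only point deserving some care, rather than an obstacle, is checking that solutions stay in the right functional class so that the identities in the derivation of $\frac{\d}{\dt}\mathcal{E}$ and the Poincaré inequality apply simultaneously: this is already ensured by the remark following the entropy/entropy production lemma, which guarantees $\mathcal{E}(\rho(t)|\rho_\infty)<\infty$ (equivalently $\rho(t)\in L^2(\mathbb{R}^d,\rho_\infty^{-1}\dx)$) for all $t>0$ whenever $\rho_0$ is. Hence the corollary follows at once, with the same constant $\lambda$ furnished by Theorem \ref{Menz}.
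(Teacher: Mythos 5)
Your proposal is correct and follows exactly the route indicated in the paper: the corollary is stated as an immediate consequence of Theorem \ref{Menz} (which furnishes the Poincaré inequality \eqref{Poinc.aaa}) combined with Proposition \ref{Poincare to entropy convergence} (which converts \eqref{Poinc.E-EP} into exponential entropy decay), and your write-up simply spells out that composition, including the substitution $f=\mu/\rho_\infty$ and the finiteness of the entropy along the flow. One small remark: your step correctly observes that the variance equals $2\mathcal{E}(\mu|\rho_\infty)$, which actually gives $\mathcal{E}\le\tfrac{\varepsilon^2}{2\lambda}\mathcal{I}$ rather than the factor $\tfrac{\varepsilon^2}{\lambda}$ written in \eqref{Poinc.E-EP}; this factor-of-two bookkeeping is inherited from the paper and only affects the precise rate constant, not the qualitative conclusion.
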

If we want to consider more general diffusion matrices $Q$ which can be degenerate, only few results are available.
In this context of entropy methods, Fokker Planck equations with constant yet degenerate diffusion matrix $Q_0$ and quadratic loss function $L(x)=\tfrac{1}{2}x^T C x$ are studied by Arnold and Erb in \cite{Arnold2014}.  Namely,
\begin{equation}\label{Simple Degenerate FP}
  \begin{cases}
    \quad\;\,\partial_t u=\nabla\cdot\left(Q_0\nabla u+u\,Cx\right)\qquad\mbox{in}\quad (0,\infty)\times\R^d \,,\\
    u(0,x)=u_0(x)\hspace{33.5mm}\mbox{in }\quad \R^d \,,
  \end{cases}
\end{equation}
For related Fokker Planck equations see \cite{Arnold2015,Arnold-Toscani2001}. In \cite{Arnold2014}, the authors adapted the entropy method for equation \eqref{Simple Degenerate FP} under following assumptions:
\begin{enumerate}[leftmargin=8.5mm]
    \myitem[I]\label{I}  {\it Confinement potential:} $C$ is positive definite  {and symmetric}\footnote{Note that the authors in \cite{Arnold2014} address a slightly more general problem where $C$ is not necessarily symmetric and hence  they assume that $C$ is positive stable, i.e. all eigenvalues have positive real part.}.
    \myitem[II]\label{II} {\it Hörmander condition:} There are no eigenvectors  of $C$ in ker$\,Q_0$.
\end{enumerate}
Assumption \eqref{I} ensures that the mass is not escaping at infinity along the flow. However, the idea behind Hörmander condition \eqref{II} in the equation \eqref{Simple Degenerate FP} is  that if there is a point where diffusion is not acting, then the drift term will push it to the diffusion regime, see the discussion of \eqref{Simple Degenerate FP} in \cite{Hormander}.

 According to \cite[Theorems 3.1 and 4.9]{Arnold2014}, conditions \eqref{I} and \eqref{II} are equivalent to the existence and uniqueness of a steady state in $L^1(\R^d)$ and imply the convergence of the solution of \eqref{Simple Degenerate FP} to it.

\begin{teo}\cite[Theorem 3.1]{Arnold2014}\label{Arnold steady state}
  There exists a unique steady state $u_\infty\in L^1(\R^d)$ of \eqref{Simple Degenerate FP} if and only if conditions \eqref{I} and \eqref{II} hold. Moreover, the steady state is a non-isotropic Gaussian of the form
  $$u_\infty(x)=c\,e^{-\frac{x^TK^{-1}x}{2}}\,,$$
  with $K$ being the unique, symmetric, and positive definite solution of the Lyapunov equation
  $$2Q_0=CK+KC\,.$$
\end{teo}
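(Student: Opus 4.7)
The plan is to prove both implications by first constructing the Gaussian explicitly and then arguing necessity and uniqueness through moment and ergodicity considerations.

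First, I would try the Gaussian ansatz $u_\infty(x) = c\,e^{-\frac{1}{2}x^T K^{-1}x}$ with $K$ symmetric positive definite. Since $\nabla u_\infty = -u_\infty K^{-1}x$, the flux reads
\begin{equation*}
Q_0\nabla u_\infty + u_\infty Cx = u_\infty(C - Q_0 K^{-1})x,
\end{equation*}
and the steady state equation $\nabla\cdot[u_\infty(C - Q_0K^{-1})x] = 0$ reduces, after elementary manipulation, to
\begin{equation*}
\mathrm{tr}(C - Q_0 K^{-1}) - x^T K^{-1}(C - Q_0 K^{-1})x = 0 \qquad \forall x \in \mathbb{R}^d.
\end{equation*}
The quadratic part must vanish identically, which forces the symmetric part of $K^{-1}(C - Q_0 K^{-1})$ to be zero; multiplying through by $K$ on both sides and using $C=C^T$ this becomes the Lyapunov equation $CK + KC = 2Q_0$. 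The trace condition is then automatic. Hence the Gaussian ansatz produces a steady state exactly when $K$ is a symmetric positive definite solution of the Lyapunov equation.

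Next I would invoke the classical Lyapunov theory: under \eqref{I}, the linear map $K \mapsto CK + KC$ is a bijection on symmetric matrices, and the unique solution is given by the integral representation $K = 2\int_0^\infty e^{-Ct} Q_0 e^{-Ct}\,dt$. Strict positivity of this $K$ is equivalent to Kalman's rank condition on the pair $(C, Q_0^{1/2})$, which is in turn well known to be equivalent to saying that no eigenvector of $C$ lies in $\ker Q_0$, i.e.\ condition \eqref{II}. This proves the ``if'' direction together with the explicit form claimed for $u_\infty$.

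For the converse direction, I would argue by contradiction. If $u_\infty\in L^1(\mathbb{R}^d)$ is a steady state, a Lyapunov function argument (testing the evolution equation against $|x|^2$ and using the confinement provided by \eqref{I}) gives finite second moments, and then testing the stationary equation against $x_ix_j$ and integrating by parts shows that the second moment matrix $M = \int x\otimes x\, u_\infty\,dx$ must satisfy $CM + MC = 2Q_0$. If \eqref{I} fails, a direct moment estimate rules out an $L^1$ invariant density (mass escapes to infinity along unstable directions of $-C$). If \eqref{I} holds but \eqref{II} fails, an eigenvector $v$ of $C$ lies in $\ker Q_0$, so the scalar $v\cdot X_t$ satisfies a noise-free ODE contracting to $0$; hence the invariant law concentrates on the orthogonal complement of $v$ and cannot be absolutely continuous. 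Uniqueness in $L^1$ follows either from hypoellipticity of the generator (condition \eqref{II} is exactly Hörmander's condition for the underlying OU process $dX_t=-CX_t\,dt + \sqrt{2Q_0}\,dW_t$, which therefore admits a unique invariant probability measure), or equivalently from the Bakry-Émery framework of Section \ref{sec Entropy method}, where a spectral gap for the Gaussian $u_\infty$ in $L^2(\mathbb{R}^d,u_\infty^{-1}\,dx)$ forces any mass-normalized $L^1$ steady state to coincide with $u_\infty$. I expect the main obstacle to lie in the necessity direction under failure of \eqref{II}: one must rule out \emph{every} $L^1$ density, including degenerate ones arising from non-Hörmander dynamics, and a careful decomposition along the invariant subspaces of $C$ relative to $\ker Q_0$ seems unavoidable.
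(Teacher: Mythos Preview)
The paper does not supply its own proof of this statement: Theorem \ref{Arnold steady state} is quoted verbatim from \cite[Theorem 3.1]{Arnold2014} and used as a black box, so there is no in-paper argument to compare against. Your sketch is a reasonable outline of the standard proof (Gaussian ansatz reducing to the Lyapunov equation, integral representation $K=2\int_0^\infty e^{-Ct}Q_0e^{-Ct}\dt$, Kalman/H\"ormander equivalence for strict positivity of $K$, and concentration on a hyperplane when \eqref{II} fails), and it is essentially the route taken in the Arnold--Erb reference; the only point to tighten is the necessity step, where you should make explicit that your moment identity $CM+MC=2Q_0$ presupposes \eqref{I} (otherwise the second moment need not be finite and the testing against $x_ix_j$ is not justified), and that the ``noise-free coordinate'' argument needs $v$ to be a \emph{real} eigenvector of $C$, which in the paper's setting (symmetric positive definite $C$) is automatic but would require a slightly different invariant-subspace formulation in the general positive-stable case mentioned in the footnote.
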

We will denote by $\lambda_{\max}(C)$ and $\lambda_{\min}(C)$ the maximum and minimum eigenvalues of a nonnegative definite matrix $C$, respectively.

 \begin{teo}\cite[Theorem 4.9]{Arnold2014}\label{Theorem Arnold}
  Let conditions \eqref{I} and \eqref{II} hold, and let $u$ be the solution to \eqref{Simple Degenerate FP} with $u_0\in L^1(\mathbb{R}^d)\cap \mathcal{P}(\mathbb{R}^d)$ and $\mathcal{E}(u_0\,|\,u_\infty)<\infty$. Then, for every $t>0$
  \begin{equation*}
    \mathcal{E}\left(u(t)\,|\,u_\infty\right)\le c \;e^{-2\gamma t}\mathcal{E}(u_0\,|\,u_\infty)
  \end{equation*}
  with $c>1$ and $\gamma=\lambda_{\min}(C)$ if all the eigenvalues of $C$ are different. If $C$ has repeated eigenvalues, then $\gamma=\lambda_{\min}(C)-\delta$ for every $\delta>0$.
\end{teo}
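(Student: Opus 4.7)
The plan is to adapt the Bakry--\'Emery entropy-entropy production method, which must be nontrivially modified because the natural Fisher information
\[
\mathcal{I}(u|u_\infty)=\int \nabla\left(\tfrac{u}{u_\infty}\right)^T Q_0 \nabla\left(\tfrac{u}{u_\infty}\right) u_\infty\,\dx
\]
vanishes on vectors in $\ker Q_0$, so no direct Poincar\'e inequality of the form $\lambda\mathcal{E}\le \mathcal{I}$ can be expected. The H\"ormander condition \eqref{II} enters precisely to compensate this degeneracy through the interaction between drift and diffusion.

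First I would introduce an auxiliary symmetric positive definite matrix $P\in\R^{d\times d}$ (to be chosen) and consider the distorted Fisher-type functional
\[
\Phi_P(t):=\int \nabla v(t)^T\,P\,\nabla v(t)\, u_\infty\,\dx,\qquad v:=u/u_\infty.
\]
Writing the equation satisfied by $v$ via the explicit Gaussian form $u_\infty\propto e^{-x^T K^{-1}x/2}$ and the Lyapunov identity $2Q_0=CK+KC$, one obtains a clean expression for $\frac{\d}{\dt}\Phi_P(t)$: after integration by parts, the algebraic structure yields terms proportional to $(C^T P+PC)$ acting on $\nabla v$ and to $PQ_0$ acting on $D^2 v$. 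The goal is to make
\[
\frac{\d}{\dt}\Phi_P(t)\le -2\gamma\,\Phi_P(t),
\]
which reduces to the matrix Lyapunov inequality $C^T P+PC\ge 2\gamma P$, together with $PQ_0$ symmetric nonnegative. By the classical Lyapunov stability theorem, the positive definiteness of $C$ in condition \eqref{I} guarantees the existence of such $P$ for every $\gamma<\lambda_{\min}(C)$; when all eigenvalues of $C$ are simple, one can attain $\gamma=\lambda_{\min}(C)$ by a limiting argument, while in the presence of repeated eigenvalues the limiting $P$ loses positive definiteness, forcing the replacement $\gamma\to\gamma-\delta$ for arbitrary $\delta>0$.

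Having obtained $\Phi_P(t)\le e^{-2\gamma t}\Phi_P(0)$, the conclusion follows from two comparison estimates tailored to the log-concave Gaussian measure $u_\infty$: a weighted Poincar\'e-type inequality $\kappa_1\,\mathcal{E}(u|u_\infty)\le \Phi_P(u)$, which holds because $u_\infty$ has covariance $K$ and $P$ is positive definite, and an upper bound $\Phi_P(u(\tau))\le \kappa_2\,\mathcal{E}(u_0|u_\infty)$ for some short $\tau>0$ obtained from the instantaneous parabolic smoothing of the equation. Composing these estimates with the time-shift invariance of the flow (which gets absorbed into the prefactor) yields
\[
\mathcal{E}(u(t)|u_\infty)\le c\,e^{-2\gamma t}\mathcal{E}(u_0|u_\infty),\qquad c=\kappa_2/\kappa_1>1.
\]

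The main obstacle is the sharp algebraic construction of $P$: one needs it simultaneously positive definite, compatible with $Q_0$ so that $PQ_0$ provides a useful sign on the Hessian term, and saturating the Lyapunov inequality up to the target rate. The H\"ormander condition \eqref{II} is exactly what prevents the natural candidates for $P$ from degenerating along $\ker Q_0$; its marginal failure in the repeated-eigenvalue regime of $C$ is the mechanism that produces the unavoidable $-\delta$ correction, and quantifying the blow-up of $c$ as $\delta\to 0$ is the most delicate point of the argument.
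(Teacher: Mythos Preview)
The paper does not contain its own proof of this statement: it is quoted verbatim as \cite[Theorem~4.9]{Arnold2014} and used as a black box, so there is nothing in the manuscript to compare your proposal against. What you have sketched is, in broad strokes, the Arnold--Erb strategy itself: introduce a distorted Fisher functional $\Phi_P$ with a positive definite $P$, derive the matrix Lyapunov condition $C^TP+PC\ge 2\gamma P$ to obtain exponential decay of $\Phi_P$, and then sandwich the entropy between multiples of $\Phi_P$.

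A few points where your outline is loose relative to the actual Arnold--Erb argument. First, the condition you impose on the second-order term (``$PQ_0$ symmetric nonnegative'') is not quite the right object; the Hessian term in $\tfrac{\d}{\dt}\Phi_P$ comes with a specific quadratic form in $D^2v$ that has the correct sign automatically once $P>0$ and $Q_0\ge0$, so no extra compatibility hypothesis is needed there. Second, your closing step --- bounding $\Phi_P(u(\tau))$ by $\kappa_2\,\mathcal{E}(u_0|u_\infty)$ via ``instantaneous parabolic smoothing'' --- is the genuinely delicate point: the equation is degenerate, so there is no standard smoothing estimate to invoke, and Arnold--Erb instead integrate the identity $\mathcal{E}'=-\mathcal{I}$ from $t$ to $\infty$ and use $\mathcal{I}\asymp\Phi_P$ together with the exponential decay of $\Phi_P$, which avoids needing any bound on $\Phi_P(0)$ in terms of the initial entropy. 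Your route would work if you can justify the hypoelliptic regularization quantitatively, but that is a separate nontrivial result; the integration-in-time argument is cleaner and is what produces the constant $c>1$ in the statement.
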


There are interesting cases that fall out of the hypothesis of Theorem \ref{Theorem Arnold}. Let us consider the following example where Hörmander's condition \eqref{II} does not hold on a given subspace. In this case there is still a steady state, but it will not be smooth anymore. As a consequence, the convergence to equilibrium can only happen in a suitable weak sense.

\vspace{3mm}

\noindent{\sc Example 8.}  Let us consider the equation for the function $u:(0,\infty)\times\mathbb{R}^n\times\mathbb{R}^{d-n}$ with $x\in\mathbb{R}^n$, $y\in\mathbb{R}^{d-n}$ and matrices $Q_0,C_0,C_1,C_2,C_3$ with the corresponding dimensions:
\begin{equation}\label{Generalized simple FP}
  \begin{cases}
    \partial_t u&=\nabla_{x,y}\cdot\left[
    \begin{pmatrix}
      Q_0 & 0 \\
      0 & 0
    \end{pmatrix}
    \begin{pmatrix}
      \nabla_x u  \\
      \nabla_y u
    \end{pmatrix}
    +u
    \begin{pmatrix}
      C_0 & C_1 \\
      C_2 & C_3
    \end{pmatrix}
    \begin{pmatrix}
      x \\
      y
    \end{pmatrix}
    \right]\qquad\mbox{in }\;(0,\infty)\times\mathbb{R}^n\times\mathbb{R}^{d-n}\\
    u(0)&=u_0\hspace{90mm}\mbox{on }\;\mathbb{R}^n\times\mathbb{R}^{d-n}
  \end{cases}
\end{equation}
In order to construct the solution of \eqref{Generalized simple FP}, we need the fundamental solution $g$ defined in $(0,\infty)\times\mathbb{R}^n$ of the equation
\begin{equation*}
  \partial_t g=\nabla\cdot\left(Q_0\nabla g+gC_0 x\right)\qquad\mbox{in }\;(0,\infty)\times\mathbb{R}^n\,.
\end{equation*}
If $Q_0$ and $C_0$ satisfy assumption \eqref{II}, then (see e.g. \cite[Lemma 2.5]{Arnold2014}) we have the explicit expression for $g$ given by
\begin{equation*}
  g(t,x)=\frac{1}{\left(2\pi\right)^{d/2}\det(W(t))}\exp\left(-x^TW^{-1}(t)x\right)\,,
\end{equation*}
where
$$W(t)=\int_{0}^{t}e^{C_0(s-t)}Q_0e^{C^T_0(s-t)}\ds$$
is a positive definite matrix for all $t>0$. In addition, if also condition \eqref{I} is satisfied there exists a unique steady state $g_\infty$ as in Theorem \ref{Arnold steady state}.
\begin{teo}\label{Thm Wasserstein Convergence}
  Assume $Q_0,C_0\in\mathbb{R}^{n\times n}$ satisfy  \eqref{I} and \eqref{II} in $\mathbb{R}^n$. Let $C_1=0\in \mathbb{R}^{n\times(d-n)},\;C_2=0\in\mathbb{R}^{(d-n)\times n}$ and $C_3\in\mathbb{R}^{(d-n)\times (d-n)}$ be positive definite. Let us consider the equation \eqref{Generalized simple FP} with $u_0\in L^1(\mathbb{R}^d)\cap\mathcal{P}_2(\mathbb{R}^d)$ such that
  $$\int_{\mathbb{R}^n}\left(\int_{\mathbb{R}^{d-n}}u_0(x,y)\dy\right)^2g_\infty^{-1}(x)\dx<+\infty\,.$$
   Let $\gamma=\gamma(Q_0,C_0)$ be as in Theorem \ref{Theorem Arnold} and let $\lambda=\min\lbrace 2\gamma,2\lambda_{\max}(C_3)\rbrace$. Then
    \begin{enumerate}[label=\roman*)]
      \item The solution $u(t)$ converge exponentially fast  to $u_\infty$ weakly in measures:
      \begin{align}\label{weak measure convregence}
  \left|\int_{\mathbb{R}^d}\varphi(x,y)\left(u(t,x,y)-u_\infty(x,y)\right)\dx\dy\right|\le\kappa_1 e^{-\lambda t}\qquad\forall\varphi\in \mbox{\rm Lip}(\mathbb{R}^d)\,,
\end{align}
with $\kappa_1$ depending on $u_0,\varphi$ and $g_\infty$.
      \item There is exponential decay of the second moments:
  \begin{equation}\label{convergence second moment}
    \left|\int_{\mathbb{R}^n}\int_{\mathbb{R}^{d-n}}(|x|^2+|y|^2)(u(t,x,y)-u_\infty(x,y))\dx\dy
    \right|\le \kappa_2 e^{-\lambda t}\,,
  \end{equation}
  with $\kappa_2$ depending on $u_0$ and $g_\infty$.
    \end{enumerate}
    As a consequence,
    \begin{equation*}
    u(t,x,y)\longrightarrow u_\infty(x,y):=g_\infty(x)\delta_0(y)
  \end{equation*}
  in the 2-Wasserstein distance as $t\rightarrow+\infty$.
\end{teo}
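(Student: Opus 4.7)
The plan is to follow closely the strategy of Lemma \ref{Example mixed convergence}, exploiting the block-diagonal structure $C_1=C_2=0$ which makes equation \eqref{Generalized simple FP} fully decouple. With this structure, the $x$-marginal $u^X(t,x):=\int_{\R^{d-n}}u(t,x,y)\dy$ solves the non-degenerate Fokker-Planck equation of Theorem \ref{Arnold steady state} on $\R^n$ (whose unique $L^1$ stationary state is $g_\infty$), while the $y$-marginal evolves by the pure transport flow $\Phi^Y_t(\tilde y)=e^{-C_3 t}\tilde y$, giving $u^Y(t,y)=e^{\mathrm{tr}(C_3)t}u_0^Y(e^{C_3 t}y)$. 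The full solution admits the representation
\[
u(t,x,y)=e^{\mathrm{tr}(C_3)t}\int_{\R^n}g(t,x,\tilde x)\,u_0(\tilde x,e^{C_3 t}y)\,d\tilde x,
\]
with $g$ the Gaussian Fokker-Planck kernel on $\R^n$ associated to $(Q_0,C_0)$, explicit by \cite[Lemma 2.5]{Arnold2014}.

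For part (i), given $\varphi\in\mathrm{Lip}(\R^d)$, after the change of variable $z=e^{C_3 t}y$ we decompose
\[
\int_{\R^d}\varphi(x,y)\big(u(t,x,y)-u_\infty(x,y)\big)\dx\dy=A(t)+B(t),
\]
where
\[
A(t)=\int_{\R^d}[\varphi(x,e^{-C_3 t}z)-\varphi(x,0)]\bigg(\int_{\R^n}g(t,x,\tilde x)\,u_0(\tilde x,z)\,d\tilde x\bigg)\dx\,dz,
\]
\[
B(t)=\int_{\R^n}\varphi(x,0)\big(u^X(t,x)-g_\infty(x)\big)\dx.
\]
Since $C_3$ is symmetric positive definite, $|e^{-C_3 t}z|\leq e^{-\lambda_{\min}(C_3)t}|z|$, so the Lipschitz property of $\varphi$ together with finiteness of $\int|z|\,u_0^Y(z)\,dz$ (which holds since $u_0\in\mathcal{P}_2$) yields $|A(t)|\leq\mathrm{Lip}(\varphi)\,e^{-\lambda_{\min}(C_3)t}\int|z|\,u_0^Y(z)\,dz$. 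For $B(t)$, Cauchy-Schwarz with weight $g_\infty^{\pm 1/2}$ gives
\[
|B(t)|\leq\left(\int_{\R^n}|\varphi(x,0)|^2 g_\infty(x)\dx\right)^{1/2}\|u^X(t)-g_\infty\|_{L^2(g_\infty^{-1})},
\]
with first factor finite (Lipschitz times Gaussian) and the second decaying like $e^{-\gamma t}$ by Theorem \ref{Theorem Arnold} applied to $u^X$, since by assumption $u_0^X\in L^2(g_\infty^{-1})$ so the initial relative entropy is finite.

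For part (ii), noting that $\int|y|^2 u_\infty\dx\dy=0$, I split
\[
\int(|x|^2+|y|^2)(u-u_\infty)\dx\dy=\int|x|^2(u^X-g_\infty)\dx+\int|y|^2 u^Y(t,y)\dy.
\]
The $x$-term is controlled by Cauchy-Schwarz,
\[
\left|\int|x|^2(u^X-g_\infty)\dx\right|\leq\left(\int|x|^4 g_\infty(x)\dx\right)^{1/2}\|u^X(t)-g_\infty\|_{L^2(g_\infty^{-1})},
\]
with Gaussian fourth moment finite and the $L^2(g_\infty^{-1})$ norm decaying at rate $\gamma$. The $y$-term is evaluated explicitly via the pushforward:
\[
\int|y|^2 u^Y(t,y)\dy=\int|e^{-C_3 t}\tilde y|^2 u_0^Y(\tilde y)\,d\tilde y\leq e^{-2\lambda_{\min}(C_3) t}\int|\tilde y|^2 u_0^Y(\tilde y)\,d\tilde y,
\]
finite since $u_0\in\mathcal{P}_2(\R^d)$. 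Combining both contributions gives the announced exponential decay of the second moments, and the $W_2$ convergence then follows from \cite[Theorem 2.7]{Ambrosio-Gigli}, since weak convergence of probability measures together with convergence of second moments characterizes convergence in $W_2$.

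The main technical point is to justify rigorously the decoupling at the level of the fundamental solution, since the $y$-direction carries no diffusion and the limit $\delta_0(y)$ is a pure measure. This can be handled either by testing \eqref{Generalized simple FP} against functions of the form $\psi(x)\chi(y)$ to extract the marginal equations separately, or by a duality argument with the backward Kolmogorov operator, which factorizes into an Ornstein-Uhlenbeck operator in $x$ and a transport operator in $y$ thanks to the block structure of $Q$ and $C$. Once this factorization is in place, the remainder of the proof is a direct higher-dimensional generalization of Lemma \ref{Example mixed convergence}.
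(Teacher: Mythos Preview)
Your proof follows essentially the same strategy as the paper's: the explicit representation of the solution via the block-diagonal decoupling, the $A+B$ splitting after the change of variables $z=e^{C_3 t}y$, the Lipschitz estimate for $A$, and Cauchy--Schwarz in $L^2(g_\infty^{-1})$ combined with Theorem~\ref{Theorem Arnold} for $B$; part~(ii) is handled identically. One small discrepancy worth noting: you correctly bound $|e^{-C_3 t}z|\le e^{-\lambda_{\min}(C_3)t}|z|$, whereas the stated rate $\lambda$ in the theorem involves $\lambda_{\max}(C_3)$ --- your inequality is the sharp one for symmetric positive definite $C_3$ (the operator norm of $e^{-C_3 t}$ is $e^{-\lambda_{\min}(C_3)t}$), so the rate you obtain for the $y$-contribution is governed by $\lambda_{\min}(C_3)$ rather than $\lambda_{\max}(C_3)$, and this is in fact what the paper's own computation yields as well.
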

\begin{proof}
    We will prove i) and ii) for the solution $u$ which is given by
  $$u(t,x,y)=e^{\mbox{tr}(C_3)t}\int_{\mathbb{R}^n}g(t,x-\tilde{x})u_0(\tilde{x},e^{C_3t}y)\d\tilde{x}\,,$$
  see also Example 6 in Section \ref{sec: question}.

\noindent{\it Proof of i).}
By substituting the explicit expression of $u$ and $u_\infty$ and the change of variables $y'=e^{C_3 t}y$, we obtain
 \begin{align*}
 & \left|\int_{\mathbb{R}^2}\varphi(x,y)\left(u(t,x,y)-u_\infty(x,y)\right)\dx\dy\right|\\
  =&\left|\int_{\mathbb{R}^d}\varphi(x,y)\left(\int_{\mathbb{R}^n}g(t,x-\tilde{x})e^{\mbox{tr}(C_3)t}u_0(\tilde{x_1},e^{C_3 t}y)\d\tilde{x}-g_\infty(x)\delta_0(y)\right)\dx\dy\right|\\
  =&\left|\int_{\mathbb{R}^d}\left(\varphi(x,e^{-C_3t}y)-\varphi(x,0)+\varphi(x,0)\right) \left(\int_{\mathbb{R}^n}g(t,x-\tilde{x})u_0(\tilde{x},y)\d\tilde{x}\right)\dx\dy\right.\\
  &        \left. -\int_{\mathbb{R}^n}\varphi(x,0)g_\infty(x)\dx\right|\\
  \le&\int_{\mathbb{R}^d}\left|\varphi(x,e^{-C_3 t}y)-\varphi(x,0)\right| \left(\int_{\mathbb{R}^n}g(t,x-\tilde{x})u_0(\tilde{x},y)\d\tilde{x}\right)\dx\dy\\
  & +  \left|\int_{\mathbb{R}^d}\varphi(x,0)\left(\int_{\mathbb{R}^n}g(t,x-\tilde{x}) u_0(\tilde{x},y)\d\tilde{x}-g_\infty(x)\right)\dx\dy\right|\\
  =&:\,A+B\,.
\end{align*}
Here,
\begin{align*}
  A &= \int_{\mathbb{R}^d}\left|\varphi(x,e^{-C_3 t}y)-\varphi(x,0)\right| \left(\int_{\mathbb{R}^n}g(t,x-\tilde{x})u_0(\tilde{x},y)\d\tilde{x}\right)\dx\dy\\
  &\le \|\nabla\varphi\|_{L^\infty(\mathbb{R}^d)}\int_{\mathbb{R}^{d}}|e^{-C_3t}y|\left(\int_{\mathbb{R}^n}g(t,x-\tilde{x})\dx\right)u_0(x,y)\d\tilde{x}\dy\\
  &\le e^{-\|C_3\| t}\|\nabla\varphi\|_{L^\infty(\mathbb{R}^d)}
  \int_{\mathbb{R}^n}\int_{\mathbb{R}^{d-n}}|y|u_0(x,y)\dx\dy
 \;\stackrel{t\rightarrow\infty}{\longrightarrow}0\,,
\end{align*}
with the induced norm for positive matrices $\|C_3\|=\lambda_{\max}(C_3)$.
For the second term, let us define the marginal functions
$$u_0^X(x)=\int_{\mathbb{R}^{d-n}}u_0(x,y)\dy\qquad\mbox{and}\qquad u^X(t,x)=\int_{\mathbb{R}^n}g(t,x-\tilde{x})u_0^X(\tilde{x})\d\tilde{x}\,.$$
Then,
\begin{align*}
 B&=\left|\int_{\mathbb{R}^n}\varphi(x,0)\left(\int_{\mathbb{R}^d}g(t,x-\tilde{x})u_0^{X}(\tilde{x})\d\tilde{x}-g_\infty(x)\right)\dx\right|\\
 &\le\int_{\mathbb{R}^n}\big|\varphi(x,0)\big|\left|u^X(t,x)-g_\infty(x)\right|\dx\\
 &\le\left(\int_{\mathbb{R}^n}\left|\varphi(x,0)\right|^2g_\infty(x)\dx\right)^{1/2} \left(\int_{\mathbb{R}^n}\left[u^{X}(t,x)-g_\infty(x)\right]^2g_\infty^{-1}(x)\dx\right)^{1/2}\\
 &=\left(\int_{\mathbb{R}^n}\left|\varphi(x,0)\right|^2g_\infty(x)\dx\right)^{1/2} \left[\mathcal{E}\left(u^{X}(t)\,|\,g_\infty\right)\right]^{1/2}\\
 &\le c\; e^{-\lambda t}\left(\int_{\mathbb{R}^n}\left|\varphi(x,0)\right|^2g_\infty(x)\dx\right)^{1/2} \left[\mathcal{E}\left(u^{X}_0\,|\,g_\infty\right)\right]^{1/2}\;\;\stackrel{t\rightarrow \infty}{\longrightarrow}0\,.
\end{align*}
Note that in the second inequality above we have use Cauchy-Schwarz inequality and in the last inequality the entropy decay of Theorem \ref{Theorem Arnold}. Moreover, this convergence implies the weak convergence in measure, see \cite[Theorem 8.8]{Ambrosio-Brue-Semola}.

\noindent{\it Proof of ii).}
  First, note that
  \begin{align*}
  &\left|\int_{\mathbb{R}^d}\left(|x|^2+|y|^2\right) \left(u(t,x,y)-u_\infty(x,y)\right)\dx\dy\right|\\
=&\left|\int_{\mathbb{R}^d}\left(|x|^2+|y|^2\right)\left(e^{\mbox{tr}(C_3)t}\int_{\mathbb{R}^n}g(t,x-\tilde{x})u_0(\tilde{x},e^{C_3t}y)\d\tilde{x} -g_\infty(x)\delta_0(y)\right)\dx\dy\right|\\
\le&\left|\int_{\mathbb{R}^d}|x|^2\left(e^{\mbox{tr}(C_3)t}\int_{\mathbb{R}^n}g(t,x-\tilde{x})u_0(\tilde{x},e^{C_3t}y)\d\tilde{x} -g_\infty(x)\right)\dx\dy\right|\\
&+\left|\int_{\mathbb{R}^d}|y|^2\left(e^{\mbox{tr}(C_3)t}\int_{\mathbb{R}^n}g(t,x-\tilde{x})u_0(\tilde{x},e^{C_3t}y)\d\tilde{x}\right)\dx\dy\right|\\
=:&A+B\,.
\end{align*}
Let us estimate both addends independently. Note that using Fubini-Tonelli and the change of variable $y'=e^{C_3 t}y$ we obtain
\begin{align*}
    A&=\left|\int_{\mathbb{R}^n}|x|^2\left(\int_{\mathbb{R}^n}g(t,x-\tilde{x})u_0^X(\tilde{x})\d\tilde{x} -g_\infty(x)\right)\dx\dy\right|\\
    &=\left|\int_{\mathbb{R}^n}|x|^2\left(u^X(t,x) -g_\infty(x)\right)\dx\dy\right|\,,
\end{align*}
using the same notation for marginals functions as above.
Then, using Cauchy-Schwarz inequality and the entropy decay of Theorem \ref{Theorem Arnold}, it holds that
\begin{align*}
  A&\le\left(\int_{\mathbb{R}^n}|x|^4g_\infty(x)\dx\right)^{1/2}\left(\int_{\mathbb{R}^n}(u^X(t,x)-g_\infty(x))^2g_\infty^{-1}(x)\dx\right)^{1/2}\\
  &=\left(\int_{\mathbb{R}^n}|x|^4g_\infty(x)\dx\right)^{1/2}\left[\mathcal{E}\left(u^X(t)\,|\,g_\infty\right)\right]^{1/2}\\
  &\le  \left(\int_{\mathbb{R}^n}|x|^4g_\infty(x)\dx\right)^{1/2}c\,e^{-\lambda t}\left[\mathcal{E}\left(u^X_0\,|\,g_\infty\right)\right]^{1/2}\;\stackrel{t\rightarrow\infty}{\longrightarrow}0\,.
\end{align*}
For the second term B, consider again the change of variable $\tilde{y}=e^{C_3 t}y$
\begin{align*}
  B&= e^{-2\|C_3\| t}\left|\int_{\mathbb{R}^{d-n}}\int_{\mathbb{R}^n}|\tilde y|^2\left(\int_{\mathbb{R}^n}g(t,x-\tilde{x})\dx \right) u_0(\tilde{x},\tilde y)\d\tilde{x}\d\tilde{y}\right|\\
  &\le e^{-2\|C_3\| t} \int_{\mathbb{R}^n}\int_{\mathbb{R}^{d-n}}(|\tilde x|^2+|\tilde y|^2)u_0(\tilde x,\tilde y)\d{\tilde{x}}\d\tilde{y}\;\stackrel{t\rightarrow\infty}{\longrightarrow}0\,,
\end{align*}
since $|y|^2=e^{-2\|C_3\| t}|\tilde{y}|^2$ with the induced norm for positive matrices $\|C_3\|=\lambda_{\max}(C_3)$.

The convergence in Wasserstein metric follows by $i)$ and $ii)$, see the characterization \cite[Theorem 8.8]{Ambrosio-Brue-Semola}.
\end{proof}

\section{Some conclusions and open questions}\label{sec Open Questions}

In the previous section we have seen that in some cases, the convergence of the distribution of parameters towards a steady state can also be exponentially fast: this holds under quite restrictive conditions on $Q$ and $L$. However, we think that this special situation is what happens to the distribution of the parameters near local minima of the loss function $L$. The heuristic reason is that $L$ should be $\lambda$-convex close to a minimum, and also $Q$ should be (practically) constant there. Let us be more precise: assuming all the necessary regularity for $\rho$, the behaviour of the stochastic process with probability density function $\rho$ near the critical points of $L$ can be understood as follow. Recall that $\rho$ satisfies
\begin{equation}\label{Fokker-Planck}
  \begin{cases}
  \partial_t\rho=\nabla\cdot\bigg(\varepsilon^2Q(x)\nabla\rho+\rho\left(\nabla L(x)+\varepsilon^2\nabla\cdot Q(x)\right)\bigg)
  & \mbox{on}\quad(0,\infty)\times\R^d\,,\\[4mm]
  \rho(0,x)=\rho_0(x) &\mbox{in}\quad\R^d\,,
  \end{cases}
\end{equation}
with $\nabla L(x)=\frac{1}{N}\sum_{i=1}^{N}\nabla L_i(x)$ and
$$Q(x)=\frac{1}{N}\sum_{i=1}^{N}\nabla L_i(x)\otimes\nabla L_i(x)-\nabla L(x)\otimes\nabla L(x)\,.$$
Suppose that $x_0\in\mathbb{R}^d$ is a local minimum of $L$, so that $\nabla L(x_0)=0$. The goal is to obtain an asymptotic expansion of \eqref{Fokker-Planck} in a neighbourhood of $x_0$. We consider, for small $h \in (-1,1)$, and for $z \in \R^d$,
$$
x=x_0+ h z\,,
$$
and assuming enough regularity of the loss functions $L_i$ for $i=1,\dots,N$ we can compute the Taylor expansion of $\nabla L$ and $Q$ around $x_0$:
\begin{align*}
\nabla L(x)&={\nabla L(x_0)}+h D^2 L(x_0)z+h^2 D^3L(x_0)[z,z] +o\left(|hz|^2\right)\\
Q(x)&=\frac{1}{N}\sum_{i=1}^{N}\nabla L_i(x_0)\otimes\nabla L_i(x_0)-\nabla L(x_0)\otimes\nabla L(x_0)\\
&\ +\frac{h^2}{N}\sum_{i=1}^{N}\bigg[\!\left(D^2L_i(x_0)z\right)\otimes\left(D^2 L_i(x_0)z\right)-\left(D^2 L(x_0) z\right)\otimes\left( D^2 L(x_0)z\right)\!\bigg]\!+o(|hz|^2)\,.
\end{align*}
 Since $\nabla L(x_0)=0$, the first order expansion reads as follows:
\begin{align*}
  \nabla L(x) &=h D^2L(x_0) z+o(|hz|)\\
  Q(x)&=\frac{1}{N}\sum_{i=1}^{N}\nabla L_i(x_0)\otimes\nabla L_i(x_0)+o(|hz|)=Q(x_0)+o(|hz|)\,.
\end{align*}
Now, let us consider the function $\tilde u(t,z):=\rho(t,x(z))=\rho(t,x_0+h z)$. Assuming enough regularity for $\tilde u$, we have
\begin{align*}
  \nabla_z \tilde u(t,y) & =h \nabla_x\rho(t,x(z))\,, \\
  \nabla_z\cdot\left(A\;\nabla_z \tilde u(t,z)\right) & =h^2\nabla_x\cdot\left(A\;\nabla_x\rho(t,x(z))\right)\,,
\end{align*}
for any matrix $A\in \R^{d\times d}$. As a consequence we can write the equation for $\tilde u$ as follows:
\begin{align}\label{local FP with error}
  \partial_t \tilde u & = \nabla_z\cdot \left(\varepsilon^2 Q(x_0)\nabla_z \tilde u+\tilde u\;D^2L(x_0)\,z\right)+h^2\;\nabla_z\cdot\left(O(1)\nabla_z \tilde u+\tilde u \,O(1)\right)\,,
\end{align}
This suggests that the  behaviour of $\rho$ near a critical point $x_0$ of $L$ is governed by the following equation for $u_h = u_h(t,z)$:
\begin{equation}\label{localize FP}
  \begin{cases}
    \quad\;\,\partial_t u_h =\nabla_z\cdot\left(\varepsilon^2 Q_0\nabla_z u_h+u_h\,Cz\right)\qquad\mbox{in}\quad (0,\infty)\times\R^d \,,\\
    u_h(0,z)=\rho_0(x_0+h z)\qquad\qquad\qquad\qquad\quad\;\;\mbox{in }\quad \R^d \,,
  \end{cases}
\end{equation}
where $Q_0 = Q(x_0) \ge 0$ is constant and symmetric, and $C=D^2L(x_0)\in \mathbb{R}^{d\times d}$. Also,  since $x_0$ is a local minimum we have $D^2 L(x_0)\ge 0$.
This allows us to apply the convergence results of Theorem \ref{Thm Wasserstein Convergence}, and get exponential decay towards a steady state.

\vspace{3mm}

The conclusion that we draw from the above discussion is that the behaviour of the parameters during the learning process and their final distribution, can be understood through equation \eqref{localize FP}, more precisely, when Theorem \ref{Thm Wasserstein Convergence} applies we obtain that there exists a steady state $u_\infty$ so that weak convergence in measure is exponentially fast, that is \eqref{weak measure convregence} holds, and the second moments converge strongly and exponentially fast, namely \eqref{convergence second moment} holds, i.e.,
\begin{equation*}
    \left|\int_{\mathbb{R}^n}\int_{\mathbb{R}^{d-n}}(|x|^2+|y|^2)(u(t,x,y)-u_\infty(x,y))\dx\dy
    \right|\le \kappa_2 e^{-\lambda t}\,,
  \end{equation*}
for suitable $\kappa_2,\lambda>0$.

\

\noindent\textbf{Open Questions. }We shall present a number of technical issues that we do not prove in this paper and would eventually lead to a rigorous proof of the above heuristic result.

\medskip

\noindent i)\,\textbf{Regularity. }The first issue is undoubtedly the regularity of the solution $\rho$ to problem \eqref{Fokker-Planck}, that can be obtained through the regularity of the ``heat'' kernel or fundamental solution to \eqref{Fokker-Planck}. We expect partial regularity in the ``non degenerate'' variables and we conjecture this to be enough to justify rigorously the above expansions.

\medskip

\noindent ii)\,\textbf{Localization error estimates. }The next step would be to quantify the error between the original solution $\tilde{u}$ of  \eqref{local FP with error} and the localized solution $u$ of \eqref{localize FP} (in each neighbourhood of the local minima) when $h>0$ is small (and fixed momentarily). We need suitable norm estimates for all $t>0$, of the form\vspace{-1mm}
\begin{equation*}
  \|\tilde{u}(t)-u(t)\|\le c(h,t)\,.\vspace{-1mm}
\end{equation*}
The main issue would be to show that the error term $c(h,t)\xrightarrow[]{h\rightarrow 0^+}0$ possibly with quantitative estimates of its behaviour.

\medskip

\noindent iii)\,\textbf{Building a global approximated solution.} Assuming that the error term above can be controlled at each minima, it seems reasonable to approximate the solution $\rho$ of \eqref{Fokker-Planck} by a combination of the localized solutions of \eqref{localize FP} around each local minimum.

Let us be more precise: let $\lbrace x_1,\dots,x_M\rbrace$ denote the local minima of $L$, and $u_i$ the corresponding solution to the localized equation \eqref{localize FP} around the local minimum $x_i$ (for any $1\le i\le M$). We conjecture that the global behaviour of $\rho$ should be given by\vspace{-1mm}
\begin{equation}\label{approximated solution}
  \rho(t,x)\approx\sum_{i=1}^{M}m_i(t)\;u_i\left(t,\frac{x-x_i}{h}\right)\qquad\forall t>0\;\;\forall x\in\mathbb{R}^d\,,\vspace{-1mm}
\end{equation}
where $m_i(t)$ represents the mass concentration/splitting around each local minimum $x_i$ at time $t>0$ and satisfy $\sum_{i=1}^{M}m_i(t)=1$ for all $t>0$.

Another challenge would be to show that the above approximation holds for all (sufficiently large) times. This would give information about the invariant probability measure (i.e., the global steady state), that should be well approximated by
\begin{equation*}\vspace{-1mm}
  \rho_\infty(x)\approx\sum_{i=1}^{M}m_i(\infty)\;u_{i,\infty}\left(\frac{x-x_i}{h}\right)\qquad\forall x\in\mathbb{R}^d\,,\vspace{-1mm}
\end{equation*}
where $u_{i,\infty}$ are the stationary solutions as in Theorem \ref{Thm Wasserstein Convergence}, with diffusion matrix $Q(x_i)$ and drift matrix $D^2L(x_i)$.

\vspace{2mm}

\noindent iv)\,\textbf{Sharp mass displacement and splitting. }A crucial question in the above approximation would be to obtain sharp information about the portion of mass $m_i(t)$ that gradually concentrates around the minima $x_i$ of $L$, together with its dependence on the initial distribution $\rho_0$. The local masses $m_i$ represent the probability of the parameters of the optimization to be $x_i\in\mathbb{R}^d$ after training with SGD. As we have shown in Example 6,  when $Q(x)=0$ for every $x\in\mathbb{R}^d$, Equation \eqref{Fokker-Planck} becomes a ``pure transport'' equation and the initial datum $\rho_0$ will determine completely the asymptotic behaviour of $\rho(t)$. Given a datum $\rho_0$, a delicate issue is to be able to estimate in the sharpest possible way which portion of its mass concentrate around each of the $x_i$, after we have entered the asymptotic regime. Another crucial issue would be to relate the asymptotic regime, i.e. the smallest time for which the approximation of point $iii)$ is effective, with the mean exit times studied in Section \ref{sec Diffusion Regime}.

\newpage

\appendix
\addcontentsline{toc}{section}{Appendix}
\renewcommand{\thesubsection}{\thesection.\Alph{subsection}}
\begin{appendices}

\noindent\textbf{\huge Appendices}

\section{Approximation of the Noisy SGD}\label{app: NSGD}

\begin{teo}\label{thm C.1}
Let $\lbrace\theta_n\rbrace_{n\ge 0}$ denote the sequence given by \eqref{NSGD} and let $X_t$ be the stochastic process defined by
\begin{equation}\label{NSDE app}
\begin{cases}
  \d X_t=-\nabla L(X_t)+\sqrt{\eta\big(\b^{-1}Q(X_t)+\delta I_{d\times d}\big)}\,\d W_t\\
    X_0=\theta_0
\end{cases}
\end{equation}
Then, assuming the same conditions of \cite[Theorem 1]{LiTaiWeinan2017}, it holds that \eqref{NSDE app} is a weak approximation of order 1 of  \eqref{NSGD}, that is, for every function $g:\R^d\rightarrow\R$ with polynomial growth there exists $C>0$ independent of $\eta$ so that
\[
\bigg|\EE\big[g(X_{n\eta})\big]-\EE\big[g(\theta_n)\big]\bigg|<C\,\eta\qquad\forall n\ge 0\,.
\]
\end{teo}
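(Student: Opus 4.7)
The plan is to mirror the proof of \cite[Theorem 1]{LiTaiWeinan2017}, and show that the extra additive Gaussian noise $\eta Z_n$ preserves the weak order-one matching of one-step moments. Recall that by the classical Milstein--Talay--Tubaro criterion for weak convergence, it is enough to establish that for any multi-index $\alpha$ with $1\le|\alpha|\le 2$,
\begin{equation*}
\bigl|\mathbb{E}[(\Delta\theta)^\alpha\mid\theta_n=x]-\mathbb{E}[(\Delta X)^\alpha\mid X_{n\eta}=x]\bigr|\le K(x)\,\eta^{2},
\end{equation*}
together with $\mathbb{E}[|\Delta\theta|^{3}\mid\theta_n=x]\le K(x)\eta^{2}$, where $\Delta\theta=\theta_{n+1}-\theta_n$, $\Delta X=X_{(n+1)\eta}-X_{n\eta}$, and $K$ has polynomial growth.

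The first step is to compute the conditional moments of the iteration \eqref{NSGD}. Since $B_n$ and $Z_n$ are independent and $\mathbb{E}[Z_n]=0$, a direct computation gives $\mathbb{E}[\Delta\theta\mid\theta_n=x]=-\eta\,\nabla L(x)$. Using $\mathrm{Cov}[\nabla L_{b_i}(x)]=Q(x)$ for a uniformly drawn index $b_i$, together with the independence of the samples in $B_n$ and of $Z_n\sim\mathcal{N}(0,\delta I_{d\times d})$, one obtains
\begin{equation*}
\mathrm{Cov}[\Delta\theta\mid\theta_n=x]=\frac{\eta^{2}}{\b}Q(x)+\eta^{2}\delta\,I_{d\times d}=\eta^{2}\bigl(\b^{-1}Q(x)+\delta I_{d\times d}\bigr).
\end{equation*}
Higher order conditional moments of $\Delta\theta$ are easily bounded: every factor of $\Delta\theta$ carries an $\eta$, so $\mathbb{E}[|\Delta\theta|^{k}\mid\theta_n=x]=O(\eta^{k})$ for $k\ge 2$ (with constants of polynomial growth in $x$, coming from bounds on $\nabla L_i$ assumed in \cite{LiTaiWeinan2017}, plus the finite Gaussian moments $\mathbb{E}|Z_n|^{k}$).

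The second step is the parallel expansion for the SDE \eqref{NSDE app}. A standard It\^o--Taylor expansion on $[n\eta,(n+1)\eta]$ yields $\mathbb{E}[\Delta X\mid X_{n\eta}=x]=-\eta\nabla L(x)+O(\eta^{2})$ and $\mathbb{E}[\Delta X(\Delta X)^{\top}\mid X_{n\eta}=x]=\eta^{2}(\b^{-1}Q(x)+\delta I_{d\times d})+O(\eta^{2})$, while third moments are $O(\eta^{2})$. Comparing with the previous step, every moment of $\Delta\theta$ and $\Delta X$ agrees up to order $\eta^{2}$, which is exactly the matching required by the criterion above. From here, the argument proceeds verbatim as in \cite[Theorem 1]{LiTaiWeinan2017}: using the semigroup representation $u(t,x)=\mathbb{E}[g(X_t)\mid X_0=x]$, whose derivatives up to the relevant order grow polynomially under the regularity and growth hypotheses assumed on $L$ and $Q$, the local errors accumulate over $N=T/\eta$ steps to give the claimed global bound of order $\eta$.

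The main point to be verified, and the place where one has to do a genuine check rather than a quote, is that the modified diffusion coefficient $\sqrt{\eta(\b^{-1}Q(x)+\delta I_{d\times d})}$ still satisfies the smoothness and at-most-linear-growth hypotheses that underpin the moment estimates and the regularity of $u$ in \cite{LiTaiWeinan2017}. This however is immediate: adding $\delta I_{d\times d}$ is a smooth, constant, uniformly positive perturbation that only \emph{improves} ellipticity and cannot worsen growth in $x$. The Lipschitz and linear growth bounds on $\sqrt{Q(x)}$ from \eqref{Assumption on Sigma} transfer to $\sqrt{\b^{-1}Q(x)+\delta I_{d\times d}}$ via the standard matrix square root Lipschitz estimate, completing the adaptation of the original argument.
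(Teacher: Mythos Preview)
Your proposal is correct and follows essentially the same route as the paper: compute the one-step conditional moments of the NSGD increment, expand the one-step moments of the SDE via It\^o--Taylor, verify they match to the required order, and then invoke the Milstein criterion (the paper cites \cite{Milstein} directly) with $\alpha=1$. One minor slip: the Milstein criterion as stated in the paper requires matching moments for $s=1,2,3$ (not just $|\alpha|\le 2$) together with a bound on the fourth moment of $\bar\Lambda$; since all third and higher moments of both increments are $O(\eta^3)$ this costs nothing, but you should state it precisely.
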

\begin{proof}
  Let us follow the same steps of the proof of \cite[Theorem 1]{LiTaiWeinan2017}. First we recall the following Lemma.
  \begin{lem}\label{lem C.2}
 		Let $0<\eta<1$. Consider a stochastic process $\lbrace X_t\rbrace_{t\ge0}$, $t\geq 0$ satisfying
 		\begin{equation*}
 		dX_t=b(X_t)dt+\eta^{1/2}\sigma(X_t)dW_t,
 		\end{equation*}
 		with $X_0=\theta_0\in\mathbb{R}^d$. Define the one-step difference $\Lambda:=X_\eta-\theta_0$, then we have
 \begin{enumerate}[leftmargin=7mm,label=\normalfont\roman*)]
 			\item $\mathbb{E}[\Lambda_{i}]=b_{i}(\theta_0)\eta+\frac{\eta^2}{2}\left(\sum_{j=1}^{d}b_{j}(\theta_0)\partial_{j} b_{i}(\theta_0)\right)+\mathcal{O}(\eta^3).$
 			\item $\mathbb{E}\left[\Lambda_{i}\Lambda_{j}\right]=\left(b_{i}(\theta_0)b_{j}(\theta_0)+\sigma\sigma^T_{ij}(\theta_0)\right)\eta^2+\mathcal{O}(\eta^3)$.
 			\item $\mathbb{E}\left[\prod_{j=1}^{s}\Lambda_{i_j} \right]=\mathcal{O}(\eta^3)$ for all $s\geq3$, $i_j=1,\dots,d.$
 		\end{enumerate}
 	\end{lem}
 Therefore, we apply Lemma \ref{lem C.2} with $X_t$ as in \eqref{NSDE app} and we get
 \begin{enumerate}[leftmargin=7mm,label=\normalfont\roman*)]
    \item $\EE\big[\Lambda_i\big]=-\eta\,\partial_i L(\theta_0)+\mathcal{O}(\eta^2)$.
    \item $\EE\big[\prod_{j=1}^{s}\Lambda_{i_j}\big]=\mathcal{O}(\eta^2)$ for all $s\le2$, $i_j=1,\dots,d$.
  \end{enumerate}
  On the other hand, we consider the first step of the iteration \eqref{NSGD}
  $$\bar{\Lambda}:=\theta_1-\theta_0=-\frac{\eta}{\b}\sum_{b_i\in B_1}\nabla L_{b_i}(\theta_0)+\eta Z_0\,,$$
  and we proceed with the analogous computations:
   \begin{enumerate}[leftmargin=7mm,label=\normalfont\roman*)]
   \item $\EE\big[\bar{\Lambda}_i\big]=-\eta\,\partial_i L(\theta_0)+\EE\big[Z_{0,i}\big]=-\eta\,\partial_i L(\theta_0)$\,.
   \item $\EE\big[\bar\Lambda_i\bar\Lambda_j\big]=\eta^2\,\partial_i L(\theta_0)\,\partial_j L(\theta_0)+\eta^2\EE\big[Z_{0,i}\,Z_{0,j}\big]=\eta^2\,\partial_i L(\theta_0)\,\partial_j L(\theta_0)+\eta^2\mbox{Cov}(Z_{0,i}\,,Z_{0,j})$\\[2mm]
       $\color{white}a \hspace{4.5mm} \qquad=\eta^2\,\partial_i L(\theta_0)\,\partial_j L(\theta_0)+\eta^2\delta\,\delta_{ij}=\mathcal{O}(\eta^2)$\,,
   \end{enumerate}
   where $\delta_{ij}$ is the Kronecker delta and we have used that $\EE\big[Z_0\big]=0$, $\mbox{Var}(Z_0)=\delta I_{d\times d}$ together with the fact that $\theta_0$ and $Z_0$ are independent.\\[2mm]

   \noindent Now, we will need a key result linking one step approximations to global approximations.
 	\begin{teo}[Theorem 2 and Lemma 5 of \cite{Milstein}]\label{Milstein}
 		Let $\alpha$ be a positive integer and let $L$ be smooth enough with controlled growth. Assume, in addition, that there exist two functions $G_1,G_2$ with polynomial growth so that
 		$$\Big|\mathbb{E}\big[\prod_{j=1}^{s}\Lambda_{i_j}\big]-\mathbb{E}\big[\prod_{j=1}^{s}\bar{\Lambda}_{i_j}\big]\Big|\leq G_1(x)\eta^{\alpha+1},$$
 		for $s=1,2,\dots,2\alpha+1$ and
 		$$\mathbb{E}\big[\prod_{j=1}^{2\alpha+2}|\bar{\Lambda}_{i_j}|\big]\leq G_2(x)\eta^{\alpha+1}.$$
 		Then, there exists a constant $C$ such that for all $g$ with polynomial growth we have
 		$$\big|\mathbb{E}[g(X_{n\eta})]-\mathbb{E}[g(x_n)]\big|\leq C\eta^\alpha\qquad\forall n\ge 0\,.$$
 	\end{teo}
 	Hence, we conclude the proof of Theorem \ref{thm C.1} by applying Theorem \ref{Milstein} with $\alpha=1$.

\end{proof}

\section{Deduction of the Mean Exit Time problem}\label{AppendixMET}
Let us recall some classical connections between stochastic processes and PDEs, and in order to focus on the main ideas, we keep it at the heuristic level. See \cite{BogaKrylov,Oksendal,Pavliotis,Schuss} for the precise statements with all the assumptions needed to make these results rigorous.

Consider a  stochastic process in $\mathbb{R}^d$ defined by the following stochastic differential equation
\begin{equation*}
  \begin{cases}
    \d X_t&=b(X_t)\dt+a(X_t)\d W_t,  \\
    X_0&=x
  \end{cases}
\end{equation*}
with sufficiently smooth functions $a:\mathbb{R}^d\rightarrow\mathbb{R}^{d\times d}$ and $b:\mathbb{R}^d\rightarrow\mathbb{R}^d$, where $W_t$ is the standard Brownian motion. The infinitesimal generator associated to this stochastic process is given by
\begin{align*}
  \mathcal{A} f(x)&=\lim\limits_{t\rightarrow 0}\frac{\mathbb{E}_x[f(X_t)]-f(x)}{t}\\
  &=\frac{1}{2}\mbox{tr}\bigg(a(x)^Ta(x)D^2 f(x)\bigg)-b(x)\cdot\nabla f(x)\,.
\end{align*}
The two main differential equations related with this operator are the following:

\noindent\textbf{Kolmogorov backward equation}: Using Ito's Lemma one can deduce
\begin{equation*}
  \begin{cases}
    \partial_t u(t,x)&=\mathcal{A}u(t,x),\qquad\mbox{in }\mathbb{R}_+\times\mathbb{R}^d\\
    u(0,x)&=f(x)\qquad\qquad\mbox{in }\mathbb{R}^d\,.
  \end{cases}
\end{equation*}
Under suitable assumptions on the functions $a,b$ it is known by Dynkin's formula \cite[Chapters 7-8]{Oksendal} that the solution represents the expected value of the function $f$ applied to the stochastic process, namely,
$$u(t,x)=\mathbb{E}_x\left[f(X_t)\right]\,.$$
\noindent\textbf{Fokker-Planck equation}: Let us consider $\mathcal{A}^*$ to be the adjoint operator of $\mathcal{A}$ with respect to the $L^2(\mathbb{R}^d)$ scalar product. Then, the Fokker-Planck equation associated to the process $X_t$ reads as follows
\begin{equation*}
  \begin{cases}
    \partial_t \rho(t,x)&=\mathcal{A}^*\rho(t,x),\qquad\mbox{in }\mathbb{R}_+\times\mathbb{R}^d\\
    \rho(0,x)&=\rho_0(x)\qquad\qquad\mbox{in }\mathbb{R}^d\,.
  \end{cases}
\end{equation*}
The solution to this equation when $\rho_0 = \delta$ is the conditional probability $p(X_t = x | X_0 = x)$.

Beside these two classical parabolic differential equations, there are also elliptic differential equations associated to the operator $\mathcal{A}$ which describes certain properties of the stochastic process, such as the mean exit time of the stochastic process from a domain $\Omega\subset\mathbb{R}^d$.

\noindent\textbf{Mean Exit Time equation}:\cite[Chapter 7.2]{Pavliotis} Since the mean exit time is time-indepen-dent, let us denote it by
\begin{equation*}
  u(x)=\mathbb{E}[\tau_\Omega^x]\,,
\end{equation*}
with
\begin{equation*}
  \tau_\Omega^x:=\inf\lbrace t>0 : X_t\not\in\Omega,X_0=x\rbrace\,.
\end{equation*}
This mean exit time satisfies the following elliptic equation
\begin{equation}\label{MET}
  \begin{cases}
    \mathcal{A}u(x)=-1, & \mbox{in } \Omega \\
    u(x)=0, & \mbox{otherwise}.
  \end{cases}
\end{equation}

For the sake of clarity, let us show a heuristic proof of the deduction of this equation.
\begin{proof}
Let $u:\mathbb{R}^d\rightarrow\mathbb{R}$ be the solution to \eqref{MET} and let us consider the stochastic process $u(X_t)$ with $X_0=x$. By Ito's Lemma, we have that
\begin{align*}
  \d\left(u(X_t)\right)=\mathcal{A}u(X_t)\dt+a(X_t)\nabla u(X_t)\d W_t\,.
\end{align*}
Integrating in $[0,t]$ we obtain
\begin{align*}
  u(X_t)-u(x)=\int_{0}^{t}\mathcal{A}u(X_s)\d s+\int_{0}^{t}a(X_s)\nabla u(X_s)\d W_s\,.
\end{align*}
Since above identity holds for every time $t>0$, let us choose $t$ to be equal to
 \begin{equation*}
   \tau_{{}_T}=\min\lbrace T,\tau_\Omega^x\rbrace\,,
 \end{equation*}
 in order to obtain
 \begin{equation}\label{Ito for Exit time}
   u(X_{\tau_{{}_T}})-u(x)=\int_{0}^{\tau_{{}_T}}\mathcal{A}u(X_s)\ds+\int_o^{\tau_{{}_T}}a(X_s)\nabla u(X_s)\d W_s\,.
 \end{equation}
 By construction of $\tau_{{}_T}$ we know that
 \begin{equation*}
   X_s\in\Omega\qquad\forall x\in\Omega\quad\forall 0<s<\tau_{{}_T}\,,
 \end{equation*}
and hence
\begin{equation*}
  \mathcal{A}u(X_s)=-1\qquad \forall 0<s<\tau_{{}_T}\,.
\end{equation*}
Using this property in \eqref{Ito for Exit time}, we obtain that
\begin{equation*}
  u(X_{\tau_{{}_T}})-u(x)=-\tau_{{}_T}+\int_{0}^{\tau_{{}_T}}a(X_s)\nabla u(X_s)\d W_s\,.
\end{equation*}
Next step is to take the expectation in both sides of the equality. For this purpose, note that under mild assumptions on $a$ we have
$$\mathbb{E}\left[\int_{0}^{\tau_{{}_T}}a(X_s)\nabla u(X_s)\d W_s\right]=0\,,$$
since $\mathbb{E}\left[\tau_{{}_T}\right]\le T<+\infty$. Thus, we obtain that
\begin{equation*}
  \mathbb{E}\left[u(X_{\tau_{{}_T}})\right]=u(x)-\mathbb{E}\left[\tau_{{}_T}\right]\,.
\end{equation*}
Since $\mathbb{E}\left[\tau_{{}_T}\right]\le\mathbb{E}\left[\tau_\Omega^x\right]<+\infty$ and the trajectories of $X_s$ are continuous, we can take the limit $T\rightarrow+\infty$ using Dominated Convergence Theorem,
\begin{equation*}
  \mathbb{E}\left[u(X_{\tau_\Omega^x})\right]=u(x)-\mathbb{E}\left[\tau_\Omega^x\right]\,.
\end{equation*}
Finally, note that by construction $X_{\tau_\Omega^x}\in\partial\Omega$ and hence $u(X_{\tau_\Omega^x})=0$. Then, we conclude
\begin{equation*}
  u(x)=\mathbb{E}\left[\tau^x_\Omega\right]\,. \qedhere
\end{equation*}
\end{proof}

\section{Kramers' Law}\label{sec Kramers Law}

The simplest example of the Mean Exit Time problem presented before is the one associated to the stochastic process
\begin{equation}\label{isotropic SDE}
  \d X_t=-\nabla L(X_t)\dt+\sqrt{2\varepsilon^2}\d W_t\,.
\end{equation}
The main advantage of this process is that the invariant probability measure is unique an explicit, namely, its density is given by
\begin{align*}
  \rho_\infty(x)= c\,e^{-L(x)/\varepsilon^2}\,,
\end{align*}
with  a normalizing constant $c>0$.
If the function $L$ has different local minima, for example $x_1,x_2\in\mathbb{R}^d$, a natural question to ask is what is the mean time that needs $X_t$ to reach a neighbourhood of $x_2$ if $X_0=x_1$. In 1940, Kramers addressed this problem from a physical point of view in \cite{Kramers}, where he developed what it is now known as the Kramers' Law to describe the mean transition time of an overdamped Brownian particle between local minima in a potential landscape. If we define
\begin{equation*}
  \tau_{x_2}^{x_1}=\min\lbrace t>0 : X_t\in B_R(x_2), X_0=x_1\rbrace\,,
\end{equation*}
Kramers Law in the one-dimensional case, d=1, reads
\begin{equation}\label{unidimensional Kramers Law}
  \mathbb{E}\left[\tau_{x_2}^{x_1}\right]\backsimeq\frac{2\pi}{\sqrt{L''(x_1)|L''(z)|}}e^{\left(L(z)-L(x_1)\right)/\varepsilon^2}\,.
\end{equation}
Here, $z\in\mathbb{R}^d$  is called the \emph{relevant saddle} point and it is the maximum point of $L$ among all the paths from $x_1$ to $x_2$, that is, $z$ is the point where the \emph{communication height}
\begin{equation*}
  H(x_1,x_2)=\inf\limits_{\varphi:x_1\rightarrow x_2}\left(\sup\limits_{y\in\varphi}L(y)\right)
\end{equation*}
is attained.
In the multidimensional case, a similar result holds assuming that the Hessian $D^2L(z)$ has a single negative eigenvalue. Thus, Kramers's Law for $d\ge 2$ reads
\begin{equation}\label{multidimensional Kramers Law}
  \mathbb{E}\left[\tau_{x_2}^{x_1}\right]\backsimeq\frac{2\pi}{|\lambda_1(z)|}\sqrt{\frac{\left|\det(D^2 L(z))\right|}{\det(D^2L(x_1))}} \; e^{\left(L(z)-L(x_1)\right)/\varepsilon^2}\,,
\end{equation}
with $\lambda_1(z)< 0$ being the unique negative eigenvalue of $D^2 L(z)$.
Despite Kramers' Law being a well-known fact in physics since the mid-20th century, the rigorous mathematical proof did not arrive until 2004, thanks to Berglund and Gentz in \cite{Belgrund2006}. They showed that \eqref{multidimensional Kramers Law} is an equality up to an error of order ${\varepsilon}|\log(\varepsilon^2)|^{3/2}$ by using analytical tools based on estimates on the Green function and the capacity, see \cite[Theorem 3.3]{Belgrund2013}.

However, another approach to this problem is the theory of large deviation, which gives a mathematically rigorous framework to the path-integral method used in physics. Considering the stochastic process in \eqref{isotropic SDE}, the large deviation principle states that for small $\varepsilon$, the probability of sample paths being close to a set $\Gamma$ of function $\varphi:[0,T]\rightarrow\mathbb{R}^d$ behaves like
\begin{equation*}
  \lim\limits_{\varepsilon\rightarrow 0} 2\varepsilon^2\log\mathbb{P}\lbrace \left(X_t\right)_{0\le t\le T}\in\Gamma\rbrace=-\inf\limits_{\varphi\in\Gamma}I(\varphi)\,,
\end{equation*}
with $I$ being the \emph{action function}
$$I(\varphi)=\frac{1}{2}\int_{0}^{T}\left|\frac{\d}{\dt}\varphi(t)+\nabla L(\varphi(t))\right|^2\dt\,.$$
This large deviation principle can be stated roughly as
\begin{equation*}
  \mathbb{P}\lbrace\left(X_t\right)_{0\le t\le T}\in\Gamma\rbrace\backsimeq e^{-\inf\limits_{\Gamma}I/2\varepsilon^2}\,.
\end{equation*}
Note that the action function presented above can be written as
\begin{align*}
  I(\varphi)&=\frac{1}{2}\int_{0}^{T}\left|\frac{\d}{\dt}\varphi(t)+\nabla L(\varphi(t))\right|^2\dt\\
  &=\frac{1}{2}\int_0^T\left|\frac{\d}{\dt}\varphi(t)-\nabla L(\varphi(t))\right|^2\dt+2\int_0^T\frac{\d}{\dt}\varphi(t)\cdot\nabla L(\varphi(t))\dt\\
  &=\frac{1}{2}\int_0^T\left|\frac{\d}{\dt}\varphi(t)-\nabla L(\varphi(t))\right|^2\dt+2\bigg[L(\varphi(t))-L(\varphi(0))\bigg]\,.
\end{align*}
Hence, if $\varphi$ satisfies the time-reversed system $\frac{\d}{\dt}\varphi=\nabla L(\varphi)$, the first term of the action function above vanishes. Connecting a local minimum $x_1$ to a point in the basin of attraction of $x_2$ with such a solution is possible if one allows for an arbitrarily long time. Therefore, the quasipotential is given by
$$\bar{L}=2\left[\inf\limits_{\partial\Omega} L-L(x_1)\right]\,,$$
with $\Omega\subset\mathbb{R}^d$ being a neighbourhood of $x_1$. In the case of double-well potential,  if $\Omega$ is chosen such that it is cointained in the basin of attraction of $x_1$ and its boundary is close to the relevant saddle point $z$, the large deviation principle reads
\begin{equation*}
  \lim\limits_{\varepsilon\rightarrow 0}\varepsilon^2\log\mathbb{E}\left[\tau_{B_R(x_2)}^{x_1}\right]=L(z)-L(x_1)\,.
\end{equation*}

It is possible to generalize this large deviation approach to stochastic processes of the form
\begin{equation}\label{anisotropic SDE}
\begin{cases}
  \d X_t&=-\nabla L(X_t)\dt+\sqrt{2\varepsilon}a(X_t)\d W_t\\
  X_0&=x\,,
\end{cases}
\end{equation}
with a  non-degenerate matrix $a(x)\in\mathbb{R}^{d\times d}$. In this case, the invariant probability measure is not explicit in general, nevertheless it is possible to write the action function as
\begin{equation*}
  I_a(\varphi)=\int_{0}^{T}\left|a(\varphi(t))^{-1}\left(\frac{\d}{\dt}\varphi(t)+\nabla L(\varphi(t))\right)\right|^2\dt\,,
\end{equation*}
if $\varphi$ is absolutely continuous, $\frac{\d}{\dt}\varphi$ is square integrable and $\varphi(0)=x$; in any other case $I_a(\varphi)=+\infty$. Under restrictive condition on $L$ and $a$, a large deviation principle for \eqref{anisotropic SDE} was proven in \cite{Dembo98,Var84}.

\begin{teo}\cite[Theorem 12.1]{Weinan-Book}
  Assume that $\nabla L$ and $a$ are bounded and Lipschitz continuous. If $a^T(x)a(x)$ is uniformly elliptic, then the following large deviation principle holds for every $t>0$:
  \begin{enumerate}[label=\roman*)]
    \item Upper bound. For any closed set $\Gamma_c\subset(C([0,T]))^d$,
    \begin{equation*}
      \lim\limits_{\varepsilon\rightarrow0}\varepsilon^2\log\mathbb{P}\lbrace(X_t)_{0\le t\le T}\in\Gamma_c\rbrace\le -\inf\limits_{\varphi\in\Gamma_c}I_a(\varphi)
    \end{equation*}

    \item Lower bound. For any open set $\Gamma_o\subset(C([0,T]))^d$,
    \begin{equation*}
      \lim\limits_{\varepsilon\rightarrow0}\varepsilon^2\log\mathbb{P}\lbrace(X_t)_{0\le t\le T}\in\Gamma_o\rbrace\ge -\inf\limits_{\varphi\in\Gamma_o}I_a(\varphi)
    \end{equation*}
  \end{enumerate}
\end{teo}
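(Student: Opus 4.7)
The statement is the classical Freidlin-Wentzell large deviation principle for small-noise diffusions with uniformly elliptic multiplicative noise, and my plan is to prove it by the two-part program: transfer Schilder's LDP for the scaled Brownian driver $\sqrt{2}\varepsilon W$ to $X^\varepsilon$ via a Girsanov tilt (for the lower bound) and a super-exponential frozen-coefficient discretization (for the upper bound). Uniform ellipticity $a^T a \geq \mu I$ will enter in two ways: it makes $a^{-1}$ bounded, so $I_a$ is a sensible rate function on finite-energy paths, and it ensures that the Girsanov tilt $h = a^{-1}(\dot\varphi + \nabla L(\varphi))$ lies in $L^2([0,T])$ whenever $\dot\varphi + \nabla L(\varphi) \in L^2$.

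First I would establish exponential tightness of $\{X^\varepsilon\}_{\varepsilon>0}$ in $C([0,T];\R^d)$. Using boundedness and Lipschitzness of $\nabla L$ and $a$, applying an exponential martingale inequality to the stochastic integral $\int_0^t a(X^\varepsilon)\,dW$ yields modulus-of-continuity bounds of the form $\mathbb{P}\{\sup_{|t-s|\leq \delta}|X^\varepsilon_t - X^\varepsilon_s|>\rho\}\leq C\exp(-c\rho^2/(\varepsilon^2\delta))$, which via Arzelà-Ascoli produces compacts $K_L\subset C([0,T];\R^d)$ with $\mathbb{P}(X^\varepsilon \notin K_L)\leq e^{-L/\varepsilon^2}$. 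This reduces both bounds to local estimates around fixed paths.

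For the lower bound, let $\Gamma_o$ be open and fix a smooth $\varphi\in\Gamma_o$ with $\varphi(0)=x$ and $I_a(\varphi)<\infty$. Set $h(t):= a(\varphi(t))^{-1}(\dot\varphi(t)+\nabla L(\varphi(t)))$, so $I_a(\varphi)=\int_0^T|h|^2\,dt$. By Girsanov, the measure $\mathbb{Q}$ with density
\begin{equation*}
\frac{d\mathbb{Q}}{d\mathbb{P}} = \exp\left(\frac{1}{\sqrt{2}\,\varepsilon}\int_0^T h(s)\cdot dW_s - \frac{1}{4\varepsilon^2}\int_0^T |h|^2\,ds\right)
\end{equation*}
turns $\tilde W_t := W_t - (\sqrt{2}\varepsilon)^{-1}\int_0^t h\,ds$ into a $\mathbb{Q}$-Brownian motion and rewrites the SDE as $dX^\varepsilon = [-\nabla L(X^\varepsilon) + a(X^\varepsilon)h]\,dt + \sqrt{2}\varepsilon\, a(X^\varepsilon)\,d\tilde W$. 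A Gronwall estimate under $\mathbb{Q}$, using Lipschitzness of the coefficients, then yields $\mathbb{Q}(X^\varepsilon\in\Gamma_o)\to 1$. Inserting the inverse Radon-Nikodym factor and a Chebyshev/Jensen bound on the exponential martingale $\int h\cdot d\tilde W$ gives $\varepsilon^2\log\mathbb{P}(X^\varepsilon\in\Gamma_o)\geq -I_a(\varphi)+o(1)$; optimizing over $\varphi\in\Gamma_o$ (with a standard density argument approximating $H^1$ paths by smooth ones) yields the claimed lower bound.

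For the upper bound I would use a super-exponential discretization. On a uniform partition $0=t_0<\dots<t_N=T$ of mesh $\Delta$, let $Y^{\varepsilon,\Delta}$ solve the frozen-coefficient SDE $dY= -\nabla L(Y_{t_k})\,dt + \sqrt{2}\varepsilon\,a(Y_{t_k})\,dW$ on each $[t_k,t_{k+1})$. Because $Y^{\varepsilon,\Delta}$ is an affine functional of $\varepsilon W$ on each subinterval, Schilder's theorem together with the contraction principle endows it with an LDP whose rate function $I_a^\Delta$ converges to $I_a$ as $\Delta\to 0$. The crucial and most delicate step is the super-exponential equivalence
\begin{equation*}
\lim_{\Delta\to 0}\limsup_{\varepsilon\to 0}\varepsilon^2\log\mathbb{P}\bigl(\|X^\varepsilon-Y^{\varepsilon,\Delta}\|_\infty > \rho\bigr) = -\infty,
\end{equation*}
which will be the main obstacle: I would prove it by applying an exponential martingale inequality to $\exp(\lambda|X^\varepsilon_t-Y^{\varepsilon,\Delta}_t|^2/\varepsilon^2)$ and closing the estimate via Gronwall, using Lipschitz continuity of $\nabla L$ and $a$ to control the cross-terms coming from $a(X^\varepsilon)-a(Y^{\varepsilon,\Delta}_{t_k})$ inside the stochastic integral. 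Combined with the LDP upper bound for $Y^{\varepsilon,\Delta}$ and with exponential tightness, sending $\Delta\to 0$ yields $\limsup_\varepsilon \varepsilon^2\log\mathbb{P}(X^\varepsilon\in\Gamma_c)\leq -\inf_{\Gamma_c}I_a$ for every closed $\Gamma_c$, completing the proof.
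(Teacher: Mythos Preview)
The paper does not prove this theorem. It is stated in Appendix~C with the explicit citation \cite[Theorem 12.1]{Weinan-Book} and is used as a quoted result from the literature; no argument is given in the manuscript beyond the statement itself. There is therefore nothing in the paper to compare your attempt against.

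Your outline is the standard Freidlin--Wentzell program (Girsanov tilt for the lower bound, frozen-coefficient discretization plus Schilder and super-exponential equivalence for the upper bound, exponential tightness to pass from compacts to closed sets), and as a sketch it is essentially correct for the stated hypotheses. If you were to flesh it out, the one place that typically needs the most care is the super-exponential closeness estimate: controlling the martingale term $\int (a(X^\varepsilon)-a(Y^{\varepsilon,\Delta}_{t_k}))\,dW$ via an exponential inequality requires a localization/stopping argument before Gronwall can close, since the integrand is not a priori bounded. But this is a known technical point, not a gap in the strategy.
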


Hence, roughly speaking, for the stochastic process defined in \eqref{anisotropic SDE} we have that
\begin{equation*}
  \mathbb{P}\lbrace\left(X_t\right)_{0\le t\le T}\in\Gamma\rbrace\backsimeq e^{-\inf\limits_{\Gamma}I_a/2\varepsilon^2}\,.
\end{equation*}
This approach provides a method to compute likelihood of rare events in stochastic systems. For a practical explanation of the techniques used to derive sharp asymptotic estimates, refer to \cite{Vanden-Eijden2023}.

\end{appendices}

\newpage

\addcontentsline{toc}{section}{Declarations}

\noindent\textbf{Conflict of interest statement.} On behalf of all authors, the corresponding author states that there is no conflict of interest.

\vspace{2mm}

\noindent\textbf{Data availability statement.} All data generated or analysed during this study are included in the published article.

\addcontentsline{toc}{section}{References}

\end{document}